\newtheorem{problem}{Problem}
\newtheorem{definition}{Definition}
\newtheorem{assumption}{Assumption}
\newtheorem{lemma}{Lemma}
\newtheorem{theorem}{Theorem}
\newcommand{\tabincell}[2]{\begin{tabular}{@{}#1@{}}#2\end{tabular}}
\newenvironment{sproof}{%
  \proof}{\endproof}
\begin{document}
\title{Towards Quantized Model Parallelism for Graph-Augmented MLPs Based on Gradient-Free ADMM Framework}

\author{Junxiang Wang\thanks{* Junxiang Wang and Hongyi Li contribute equally to this work, and Yongchao Wang and Liang Zhao are corresponding authors.}\IEEEauthorrefmark{2}, Hongyi Li\IEEEauthorrefmark{3}, Zheng Chai\IEEEauthorrefmark{4}, Yongchao Wang\IEEEauthorrefmark{3}, Yue Cheng\IEEEauthorrefmark{4} and Liang Zhao\IEEEauthorrefmark{2}\\
\IEEEauthorblockA{\IEEEauthorrefmark{2}Department of Computer Science and Informatics,
Emory University, Atlanta, Georgia, USA, 30030}\\
\IEEEauthorblockA{\IEEEauthorrefmark{3}The State Key Laboratory of Integrated Service Networks, Xidian University, Xi'an, Shaanxi, China, 710071}\\
\IEEEauthorblockA{\IEEEauthorrefmark{4} Department of Computer Science, University of Virginia, Charlottesville, Virginia, USA, 22904}
}
\maketitle
\begin{abstract}
\indent While Graph Neural Networks (GNNs) are popular in the deep learning community, they suffer from several challenges including over-smoothing, over-squashing, and gradient vanishing. Recently, a series of models have attempted to relieve these issues by first augmenting the node features and then imposing node-wise functions based on Multi-Layer Perceptron (MLP), which are widely referred to as GA-MLP models. However, while GA-MLP models enjoy deeper architectures for better accuracy, their efficiency largely deteriorates. Moreover, popular acceleration techniques such as stochastic-version or data-parallelism cannot be effectively applied due to the dependency among samples (i.e., nodes) in graphs. To address these issues, in this paper, instead of data parallelism, we propose a  parallel graph deep learning Alternating Direction Method of Multipliers (pdADMM-G) framework to achieve model parallelism: parameters in each layer of GA-MLP models can be updated in parallel. The extended pdADMM-G-Q algorithm reduces communication costs by introducing the quantization technique. Theoretical convergence to a (quantized) stationary point of the pdADMM-G algorithm and the pdADMM-G-Q algorithm is provided with a sublinear convergence rate $o(1/k)$, where $k$ is the number of iterations. Extensive experiments demonstrate the convergence of two proposed algorithms. Moreover, they lead to a more massive speedup and better performance than all state-of-the-art comparison methods on nine benchmark datasets. Last but not least, the proposed pdADMM-G-Q algorithm reduces communication overheads by up to $45\%$ without loss of performance. Our code is available at \url{https://github.com/xianggebenben/pdADMM-G}.
\end{abstract}
\begin{IEEEkeywords}
Model Parallelism,
Graph Neural Networks,
Alternating Direction Method of Multipliers, Convergence, Quantization
\end{IEEEkeywords}
\section{Introduction}
\indent Graph Neural Networks (GNNs) have accomplished state-of-the-art performance in various graph applications such as node classification and link prediction.  This is because they handle graph-structured data via aggregating neighbor information and extending operations and definitions of the deep learning approach \cite{kipf2017semi}. However, their performance has significantly been restricted via their depths due to the over-smoothing problem (i.e. the representations of different nodes in a graph tend to be similar when stacking multiple layers) \cite{chen2021graph}, the over-squashing problem (i.e. the information flow among distant nodes distorts along the long-distance interactions) \cite{topping2022understanding}, and the gradient vanishing problem (i.e. the signals of gradients decay with the depths of GNN models) \cite{chen2021graph}. These challenges still exist even though some models such as GraphSAGE \cite{hamilton2017inductive} have been proposed to alleviate them.\\
\indent On the other hand, the Graph Augmented Multi-Layer Perceptron (GA-MLP) models have recently received fast-increasing attention as an alternative to deal with the aforementioned drawbacks of conventional GNNs via the augmentation of graph features. GA-MLP models augment node representations of graphs and feed them into  Multi-Layer Perceptron (MLP) models. Compared with GNNs,  GA-MLP models are more resistant to the over-smoothing problem \cite{chen2021graph} and therefore demonstrate outstanding performance. For example, Wu et al. showed that a two-layer GA-MLP approximates the performance of the GNN models on multiple datasets \cite{wu2019simplifying}.\\
\indent  GA-MLP models are supposed to perform better with the increase of their depths. However, similar to GNNs, GA-MLP models still suffer from the gradient vanishing problem, which is caused by the mechanism of the classic backpropagation algorithm. This is because gradient signals diminish during the transmission among deep layers.  Moreover, while the models go deeper, efficiency will become an issue, especially for medium- and large-size graphs. Compared to the data such as images and texts, where identically and independently distributed (i.i.d.) samples are assumed, efficiency issues in graph data are much more difficult to handle due to the dependency among data samples (i.e., nodes in graphs). Such dependency seriously troubles the effectiveness of using typical acceleration techniques such as sampling-based methods, and data-parallelism distributed learning in solving the efficiency issue. Therefore, parallelizing the computation along layers is a natural workaround, but the backpropagation prevents the gradients of different layers from being calculated in parallel. This is because the calculation of the gradient in one layer is dependent on its previous layers. \\
\indent To handle these challenges, recently gradient-free optimization methods such as the Alternating Direction Method of Multipliers (ADMM) have been investigated to overcome the difficulties of the backpropagation algorithm. The spirit of ADMM is to decouple a neural network into layerwise subproblems such that each of them can be solved efficiently. ADMM does not require gradient calculation and therefore can avoid the gradient vanishing problem. Existing literature has shown its great potential.  For example, Talyor et al. and Wang et al. proposed ADMM to train MLP models \cite{taylor2016training,wang2019admm}. Extensive experiments have demonstrated that the ADMM has outperformed most comparison methods such as Gradient Descent (GD).\\
\indent   In this paper, we propose a novel parallel graph deep learning Alternating Direction Method of Multipliers (pdADMM-G) optimization framework to train large-scale GA-MLP models, and the extended pdADMM-G-Q algorithm reduces the communication cost of the pdADMM-G algorithm by the quantization techniques. Our contributions to this paper include:
\begin{itemize}
\item We propose a novel reformulation of  GA-MLP models, which splits a neural network into independent layer partitions and allow for ADMM to achieve model parallelism.
\item We propose a novel pdADMM-G framework to train a GA-MLP model. All subproblems generated by the ADMM algorithm are discussed. The extended pdADMM-G-Q algorithm reduces communication costs by introducing the quantization technique.
\item  We provide the theoretical convergence guarantee of the proposed pdADMM-G algorithm and the pdADMM-G-Q algorithm. Specifically, they converge to a (quantized) stationary point of GA-MLP models when the hyperparameters are sufficiently large, and their sublinear convergence rates are $o(1/k)$.
\item We conduct extensive experiments on nine benchmark datasets to show the convergence, the massive speedup of the proposed pdADMM-G algorithm and the pdADMM-G-Q algorithm, as well as their outstanding performance when compared with all state-of-the-art optimizers. Moreover, the proposed pdADMM-G-Q algorithm reduces communication overheads by up to $45\%$. 
\end{itemize}
\indent The organization of this paper is shown as follows: In Section \ref{sec:related work}, we summarize recent related research work to this paper. In Section \ref{sec:methods}, we propose the pdADMM-G algorithm and the pdADMM-G-Q algorithm to train deep GA-MLP models. Section \ref{sec:convergence} details the convergence properties of the proposed pdADMM-G algorithm and the pdADMM-G-Q algorithm.
Extensive experiments on nine benchmark datasets to demonstrate the convergence, speedup, communication savings, and outstanding performance of the pdADMM-G algorithm and the pdADMM-G-Q algorithm are shown in Section \ref{sec:experiments}, and Section \ref{sec:conclusion} concludes this work.
\section{Related Work}
\label{sec:related work}
\indent This section summarizes existing literature related to this research.\\
\indent \textbf{Distributed Deep Learning.} With the increase of public datasets and layers of neural networks, it is imperative to establish distributed deep learning systems for large-scale applications. Many systems have been established to satisfy such needs. Famous systems include  Terngrad  \cite{wen2017terngrad}, Horovod \cite{sergeev2018horovod}, SINGA \cite{ooi2015singa} Mxnet \cite{chen2015mxnet}, TicTac \cite{hashemitictac} and Poseidon \cite{zhang2017poseidon}. They applied some parallelism techniques to reduce computational time, and therefore improved the speedup. Existing parallelism techniques can be classified into two categories: data parallelism and model parallelism.
Data parallelism focuses on distributing data across different processors and then aggregating results from them into a server. Scaling GD is one of the most common ways to reach data parallelism \cite{zinkevich2010parallelized}. For example, the distributed architecture, Poseidon, is achieved by scaling GD through overlapping communication and computation over networks.  The recently proposed ADMM \cite{taylor2016training,wang2019admm} is another way to achieve data parallelism. However, data parallelism suffers from the bottleneck of a neural network: for GD, the gradient should be transmitted through all processors;  for ADMM, the parameters in one layer are subject to those in its previous layer. As a result, this leads to heavy communication costs and time delays. Model parallelism, however, can solve this challenge because model parallelism splits a neural network into many independent partitions. In this way, each partition can be optimized independently and reduce layer dependency. For instance, Parpas and Muir proposed a parallel-in-time method from the perspective of dynamic systems \cite{parpas2019predict}; Huo et al. introduced a feature replay algorithm to achieve model parallelism \cite{huo2018training}. Zhuang et al. broke layer dependency by introducing the delayed gradient \cite{zhuang2019fully}.\\
\indent \textbf{Deep Learning on Graphs.} Graphs are ubiquitous structures and are popular in real-world applications. There is a surge of interest to apply deep learning techniques to graphs. For a comprehensive summary please refer to \cite{wu2020comprehensive}. It classified existing GNN models into four categories:  Recurrent Graph Neural Networks (RecGNNs), Convolutional Graph Neural Networks (ConvGNNs), Graph Autoencoders (GAEs), and Spatial-Temporal Graph Neural Networks (STGNNs). RecGNNs learn node representation with recurrent neural networks via the message passing mechanisms \cite{gallicchio2010graph,li2015gated,dai2018learning}; ConvGNNs generalize the operations of convolution to graph data and stack multiple convolution layers to extract high-level node features \cite{bruna2013spectral,henaff2015deep,defferrard2016convolutional}; GAEs encode node information into a latent space and reconstruct graphs from the encoded node representation \cite{cao2016deep,wang2016structural,pan2018adversarially}; the idea of STGNNs is to capture spatial dependency and temporal dependency simultaneously \cite{seo2018structured,li2018diffusion,jain2016structural}.
\section{The pdADMM-G Algorithm}
\label{sec:methods}
\indent We propose the pdADMM-G algorithm to solve GA-MLP models in this section. Specifically, Section \ref{sec:problem formulation} formulates the GA-MLP model training problem , and Section \ref{sec:subproblem} proposes the pdADMM-G algorithm. Section \ref{sec:pdADMM-G-q} extends the proposed pdADMM-G algorithm to the pdADMM-G-Q algorithm for quantization.
\subsection{Problem Formulation}
\label{sec:problem formulation}
\begin{table}
\centering
 \begin{tabular}{cc}
 \hline
 Notations&Descriptions\\ \hline
 $L$& Number of layers.\\
 $W_l$& The weight matrix for the $l$-th layer.\\
 $b_l$& The intercept vector for the $l$-th layer.\\
 $z_l$& The auxiliary variable of the linear mapping for the $l$-th layer.\\
 $f_l(z_l)$& The nonlinear activation function for the $l$-th layer.\\
 $p_l$& The input for the $l$-th layer.\\
 $q_l$& The output for the $l$-th layer.\\
 $X$& The node representation of the graph.\\
 $A$& The adjacency matrix of the graph.\\
 $y$& The predefined label vector.\\
 $R(z_L,y)$& The risk function for the $L$-th layer.\\
 $n_l$& The number of neurons for the $l$-th layer.\\
 $u_l$& The dual variable for the $l$-th layer.\\
\hline
  \end{tabular}
  \captionof{table}{Important Notations}
   \label{tab:notation}

\end{table}
Consider a graph $G=(V, E)$, where $V$ and $E$ are sets of nodes and edges, respectively, $\vert V\vert$ is the number of nodes,  let $\Psi=\{\psi_1(A),\cdots,\psi_K(A)\}$ be a set of (usually multi-hop) operators $\psi_i(A):\mathbb{R}^{\vert V\vert}\rightarrow \mathbb{R}^{\vert V\vert}(i=1,\cdots,K)$ that are functions of the adjacency matrix $A\in \{0,1\}^{\vert V\vert\times\vert V\vert}$, and  $\mathbb{R}^{\vert V\vert}$ is the domain of $\psi_i(A) \ (i=1,\cdots,K)$. ${X}_k=H\psi_k(A)$ is the augmentation of node features by the $k$-hop operator, where $H\in \mathbb{R}^{d\times\vert V\vert}$ is a matrix of node features, 
and $d$ is the dimension of features.  $X_k(k=1,\cdots,K)$ are stacked into $X=[{X}_1;\cdots;{X}_K]$ by column. Then the GA-MLP training problem is formulated as follows \cite{wang2019admm}:
\begin{problem}
\label{prob:problem 1}
\begin{align*}
     & \min\nolimits_{W_l,b_l,z_l,p_l} R(z_L;y), \\
     &s.t.\ z_l=W_lp_{l}+b_l, \   p_{l+1}=f_l(z_l)(l=1,\cdots,L-1),
\end{align*}
\end{problem}
where $p_1={X}\in \mathbb{R}^{{n_0}\times \vert V\vert}$ is the input of deep GA-MLP models, where $n_0=Kd$ is the dimension of input and $y$ is a predefined label vector. $p_l\in\mathbb{R}^{n_l\times \vert V\vert}$ is the input for the $l$-th layer, also the output for the $(l-1)$-th layer, and $n_l$ is the number of neurons for the $l$-th layer. $R(z_L;y)$ is a risk function for the $L$-th layer, which is convex and continuous; $z_l=W_lp_l+b_l$  and $p_{l+1}=f_l(z_l)$ are linear and nonlinear mappings for the $l$-th layer, respectively, and $W_l\in\mathbb{R}^{{n_l}\times{n_{l-1}}}$ and $b_l\in\mathbb{R}^{n_l}$ are the weight matrix and the intercept vector for the $l$-th layer, respectively.\\
\indent In Problem \ref{prob:problem 1}, $\Psi$ can be considered as a prepossessing step to augment node features via $A$, and hence it is predefined. One common choice can be $\Psi=\{I,A,A^2,\cdots,A^{K-1}\}$.\\
\indent Problem \ref{prob:problem 1} can be addressed by deep learning Alternating Direction Method of Multipliers (dlADMM) \cite{wang2019admm}. However, parameters in one layer are dependent on its neighboring layers and hence can not achieve parallelism. For example, the update of $p_{l+1}$ on the $(l+1)$-th layer needs to wait before $z_l$ on the $l$-th layer is updated.
\label{sec:pdADMM-G}
In order to address layer dependency, we relax Problem \ref{prob:problem 1} to Problem \ref{prob:problem 2} as follows:
\begin{problem}
\label{prob:problem 2}
\begin{align*}
    &\min\nolimits_{\textbf{p},\textbf{W},\textbf{b},\textbf{z},\textbf{q}} F(\textbf{p},\textbf{W},\textbf{b},\textbf{z},\textbf{q})=R(z_L;y)\\&+(\nu/2)(\sum\nolimits_{l=1}^{L}\Vert z_l-W_lp_l-b_l\Vert^2_2+\sum\nolimits_{l=1}^{L-1}\Vert q_l-f_l(z_l)\Vert^2_2),\\
    &s.t. \ p_{l+1}=q_l,
\end{align*}
\end{problem}
where   $\textbf{p}=\{p_l\}_{l=1}^{L}$, $\textbf{W}=\{W_l\}_{l=1}^{L}$, $\textbf{b}=\{b_l\}_{l=1}^{L}$, $\textbf{z}=\{z_l\}_{l=1}^{L}$, $\textbf{q}=\{q_l\}_{l=1}^{L-1}$, and $\nu>0$ is a tuning parameter. As $\nu\rightarrow \infty$, Problem \ref{prob:problem 2} approaches Problem  \ref{prob:problem 1}. We reduce layer dependency by splitting the output of the $l$-th layer and the input of the ($l+1$)-th layer into two variables $p_{l+1}$ and $q_{l}$, respectively.
\subsection{The pdADMM-G Algorithm}
\label{sec:subproblem}
\indent  The high-level overview of the pdADMM-G algorithm is shown in Figure \ref{fig:pdADMM-G framework}. Specifically, the inputs of GA-MLP models are augmented by $H\psi_k(A) \ (k=1,\cdots,K)$, and then GA-MLP models are split into multiple layers, each of which can be optimized by an independent client. Therefore, layerwise training can be implemented in parallel. Moreover, the gradient vanishing problem can be avoided in this way. This is because the accumulated gradient calculated by the backpropagation algorithm is split into layerwise components. \\
\indent  Now we follow the ADMM routine to solve Problem \ref{prob:problem 2}. The augmented Lagrangian function is formulated mathematically as follows:
\begin{align*}
    &L_\rho(\textbf{p},\textbf{W},\textbf{b},\textbf{z},\textbf{q},\textbf{u})\\&\!=\!F(\textbf{p},\textbf{W},\textbf{b},\textbf{z},\textbf{q})\!+\!\sum\nolimits_{l\!=\!1}^{L\!-\!1}(u_l^T(p_{l\!+\!1}\!-\!q_l)\!+\!(\rho/2)\Vert p_{l+1}-q_l\Vert^2_2)\\&=\!R(z_L;y)\!+\!\phi(p_1,W_1,b_1,z_1)\!+\!\sum\nolimits_{l\!=\!2}^{L}\!\phi(p_l,W_l,b_l,z_l,q_{l-1},u_{l\!-\!1})\!\\&+\!(\nu/2)\sum\nolimits_{l\!=\!1}^{L-1}\Vert q_l\!-\!f_l(z_l)\Vert^2_2,
\end{align*}
where $\phi(p_1,W_1,b_1,z_1)=(\nu/2)\Vert z_1-W_1p_1-b_1\Vert^2_2$, $\phi(p_l,W_l,b_l,z_l,q_{l-1},u_{l-1})=(\nu/2)\Vert z_l-W_lp_l-b_l\Vert^2_2+u^T_{l-1}(p_l-q_{l-1})+(\rho/2)\Vert p_l-q_{l-1}\Vert^2_2$,  $u_l(l=1,\cdots,L-1)$ are dual variables, $\rho>0$ is a parameter, and $\textbf{u}=\{u_l\}_{l=1}^{L-1}$.
The detail of the pdADMM-G algorithm is shown in Algorithm \ref{algo:distributed ADMM}. Specifically, Lines 5-9 update primal variables $\textbf{p}$, $\textbf{W}$, $\textbf{b}$, $\textbf{z}$ and $\textbf{q}$, respectively, while Line 11 updates the dual variable $\textbf{u}$.
\begin{figure}
   \centering
    \includegraphics[width=\linewidth]{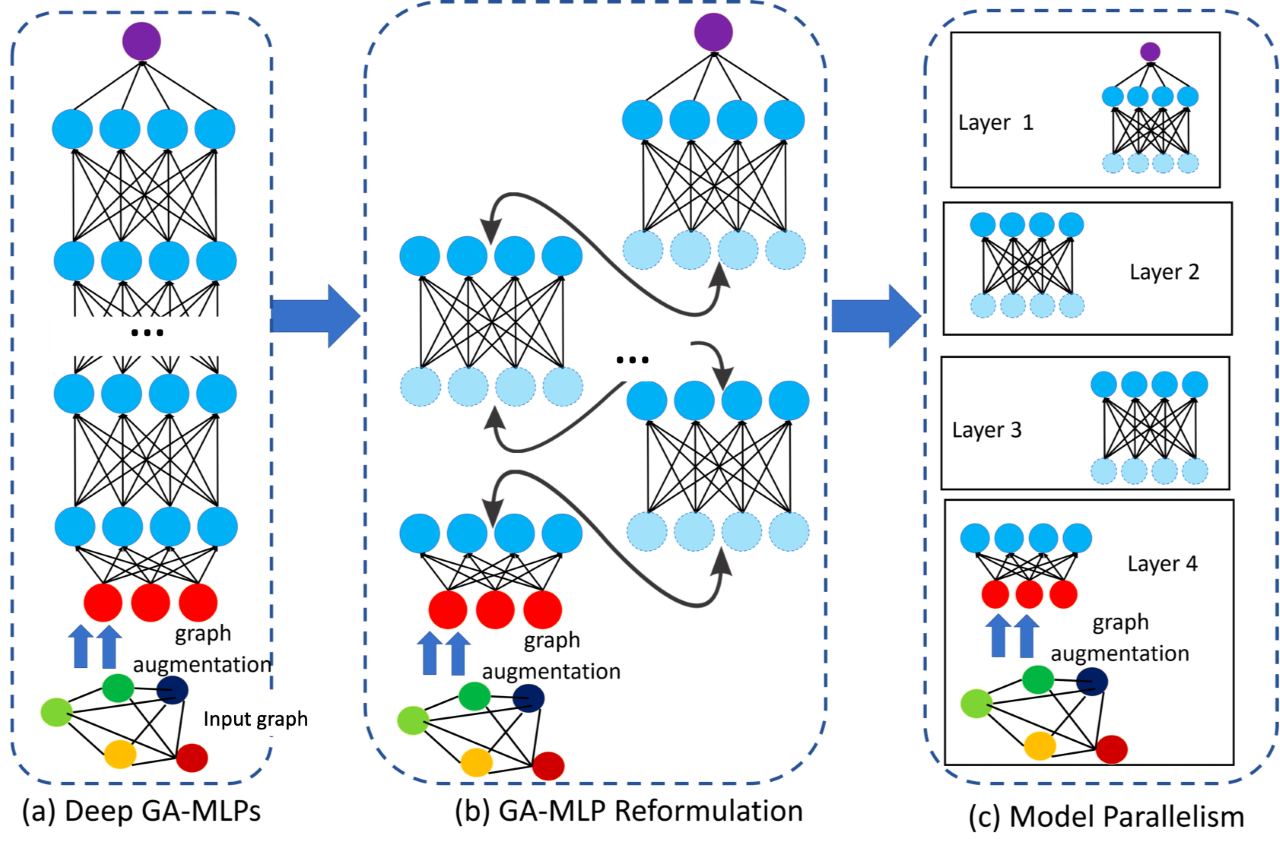}
    \caption{The overall pdADMM-G optimization algorithm: it splits GA-MLP models into layerwise components.}
    \label{fig:pdADMM-G framework}
\end{figure}
\begin{algorithm} 
\caption{The pdADMM-G Algorithm to Solve Problem \ref{prob:problem 2}} 
\begin{algorithmic}
\label{algo:distributed ADMM}
\REQUIRE $y$, $p_1=X$, $\rho$, $\nu$. 
\ENSURE $\textbf{p},\textbf{W},\textbf{b},\textbf{z},\textbf{q}$. 
\STATE Initialize $k=0$.
\WHILE{$\textbf{p}^k,\textbf{W}^{k},\textbf{b}^{k},\textbf{z}^{k},\textbf{q}^{k}$ not converged}
\STATE $p^{k+1}_l\leftarrow \arg\min\nolimits_{p_l} L_\rho(\textbf{p},\textbf{W}^k,\textbf{b}^k,\textbf{z}^k,\textbf{q}^k,\textbf{u}^k)$ for different $l$ in parallel.
\STATE $W^{k+1}_l\leftarrow \arg\min\nolimits_{W_l} L_\rho(\textbf{p}^{k+1},\textbf{W},\textbf{b}^k,\textbf{z}^{k},\textbf{q}^{k},\textbf{u}^{k})$ for different $l$ in parallel.
\STATE $b^{k+1}_l\leftarrow \arg\min\nolimits_{b_l} L_\rho(\textbf{p}^{k+1},\textbf{W}^{k+1},\textbf{b},\textbf{z}^{k},\textbf{q}^{k},\textbf{u}^{k})$ for different $l$ in parallel.
\STATE $z^{k+1}_l\leftarrow \arg\min\nolimits_{z_l} L_\rho(\textbf{p}^{k+1},\textbf{W}^{k+1},\textbf{b}^{k+1},\textbf{z},\textbf{q}^{k},\textbf{u}^{k})$ for different $l$ in parallel.
\STATE $q^{k+1}_l\leftarrow \arg\min\nolimits_{q_l} L_\rho(\textbf{p}^{k+1},\textbf{W}^{k+1},\textbf{b}^{k+1},\textbf{z}^{k+1},\textbf{q},\textbf{u}^k)$ for different $l$ in parallel.
\STATE $r^k_l\leftarrow p^{k+1}_{l+1}-q^{k+1}_l(l=1,\cdots,L)$ in parallel \# Compute residuals.
\STATE $u^{k+1}_l\leftarrow u^k_l+\rho(p^{k+1}_{l+1}-q^{k+1}_{l})$ for different $l$ in parallel.
\STATE $k\leftarrow k+1$.
\ENDWHILE
\STATE Output $\textbf{p},\textbf{W},\textbf{b},\textbf{z},\textbf{q}$.
\end{algorithmic}
\end{algorithm}
Due to space limit, the details of all subproblems are shown in Section \ref{sec:pdADMM-G subproblem} in the Appendix.\\
\indent Our proposed pdADMM-G algorithm can be efficient for training deep GA-MLP models via the greedy layerwise training strategy \cite{bengio2006greedy}. Specifically, we begin by training a shallow GA-MLP model. Next, more layers are increased to the GA-MLP model and their parameters are trained, then we introduce even more layers and iterate this process until the whole deep GA-MLP model is included. The pdADMM-G algorithm can achieve excellent performance as well as reduce training costs by this strategy.\\
\indent Last but not least, we compare the computational costs of the proposed pdADMM-G algorithm with the state-of-the-art backpropagation algorithm, on which the gradient descent is based. We show that they share the same level of computational costs. For the backpropagation algorithm, the most costly operation is the matrix multiplication $z_l=W_lp_l+b_l$ in the forward pass, where $W_l\in\mathbb{R}^{n_l\times n_{l-1}}$ and $p_l\in\mathbb{R}^{n_{l-1}\times \vert V\vert}$, which requires a time complexity of $O(n_l n_{l-1}\vert V \vert)$ \cite{cormen2022introduction}; for the proposed pdADMM-G algorithm, the most costly operation is to compute the derivative $\nabla_{W_l}\phi$, and it also involves the matrix multiplication, and hence its time complexity is again $O(n_l n_{l-1}\vert V \vert)$. However, the proposed pdADMM-G algorithm trains the whole GA-MLP model in a model parallelism fashion \cite{wang2020toward}, and therefore all computational costs can be split into different independent clients for parallel training; whereas the backpropagation algorithm is implemented sequentially, and thus it is less efficient than the proposed pdADMM-G algorithm.
\subsection{Quantization Extension of pdADMM-G (pdADMM-G-Q)}
\label{sec:pdADMM-G-q}
\indent In the proposed pdADMM-G algorithm, $p_l$ and $q_l$ are transmitted back and forth among layers (i.e. clients). However, the communication overheads of $p_l$ and $q_l$ surge for a large-scale graph $G$ with millions of nodes. To alleviate this challenge, the quantization technique is commonly utilized to reduce communication costs by mapping continuous values into a discrete set \cite{huang2021alternating}. In other words, $p_l$  is required to fit into a countable set $\Delta$, which is shown as follows:
\begin{problem}
\label{prob:problem 3}
\begin{align*}
    & \min\nolimits_{\textbf{p},\textbf{W},\textbf{b},\textbf{z},\textbf{q}} F(\textbf{p},\textbf{W},\textbf{b},\textbf{z},\textbf{q})=R(z_L;y)\\&+(\nu/2)(\sum\nolimits_{l=1}^{L}\Vert z_l-W_lp_l-b_l\Vert^2_2+\sum\nolimits_{l=1}^{L-1}\Vert q_l-f_l(z_l)\Vert^2_2),\\
    &s.t. \ p_{l+1}=q_l, \ p_{l}\in \Delta=\{\delta_1,\cdots,\delta_m\},
\end{align*}
\end{problem}
where $\delta_i(i=1,\cdots,m)\in \Delta$ are quantized values, which can be integers or  low-precision values. $m=\vert\Delta\vert$ is the cardinality of $\Delta$. To address Problem \ref{prob:problem 3}, we rewrite it into the following form:
\begin{align*}
    &\min\nolimits_{\textbf{p},\textbf{W},\textbf{b},\textbf{z},\textbf{q}} R(z_L;y)+\sum\nolimits_{l=2}^L\mathbb{I}(p_l)\\&+(\nu/2)(\sum\nolimits_{l=1}^{L}\Vert z_l-W_lp_l-b_l\Vert^2_2+\sum\nolimits_{l=1}^{L-1}\Vert q_l-f_l(z_l)\Vert^2_2),\\
    &s.t. \ p_{l+1}=q_l,
\end{align*}
where the indicator function $\mathbb{I}(p_l)$ is defined as follows: $\mathbb{I}(p_l)=0$ if $p_l\in\Delta$, and $\mathbb{I}(p_l)=+\infty$ if $p_l\not\in\Delta$. The augmented Lagrangian of Problem \ref{prob:problem 3} is shown as follows:
\begin{align*}
    \beta_\rho(\textbf{p},\textbf{W},\textbf{b},\textbf{z},\textbf{q},\textbf{u})=L_\rho(\textbf{p},\textbf{W},\textbf{b},\textbf{z},\textbf{q},\textbf{u})+\sum\nolimits_{l=2}^L \mathbb{I}(p_l),
\end{align*}
where $L_\rho$ is the augmented Lagrangian of Problem \ref{prob:problem 2}. The extended pdADMM-G-Q algorithm follows the same routine as the pdADMM-G algorithm, where $L_\rho$ is replaced with $\beta_\rho$. Due to space limit, the solutions to all subproblems generated by two proposed algorithms are shown in Section \ref{sec:pdADMM-G-Q subproblem} in the Appendix.\\
\section{Convergence Analysis}
\label{sec:convergence}
In this section, the theoretical convergence of the proposed pdADMM-G algorithm and the pdADMM-G-Q algorithm is provided. Due to space limit, we only provide sketches of proofs in this section, and their details are available in Section \ref{sec:convergence proof} in the Appendix. Our problem formulations are more difficult than existing ADMM literature: the term $\Vert q_l-f_l(z_l)\Vert^2_2$ is coupled in the objective, while it is separable in the existing ADMM formulations. To address this, we impose a mild condition that $\partial f_l(z_l)$ is bounded in Assumption \ref{ass:lipschitz continuous}, and prove that $u_l$ is controlled via $q_l$ and $z_l$ in Lemma \ref{lemma:u bound} in Section \ref{sec:convergence proof} in the Appendix.\\
 \indent Firstly, the proper function,  Lipschitz continuity, and coercivity are defined as follows:
 \begin{definition}[Proper Functions]\cite{rockafellar2009variational}.
\label{def:proper}
For a convex function $g(x):\mathbb{R}\rightarrow\mathbb{R}\bigcup\{\pm\infty\}$, $g$ is called proper if $\forall x\in \mathbb{R}$, $g(x)>-\infty$, and $\exists x_0\in\mathbb{R}$ such that $g(x_0)<+\infty$.
\end{definition}
\begin{definition}(Lipschitz Continuity)
A function $g(x)$ is Lipschitz continuous if there exists a constant $D>0$ such that $\forall x_1,x_2$, the following holds
\begin{align*}
    \Vert g(x_1)-g(x_2)\Vert\leq D\Vert x_1-x_2\Vert. 
\end{align*}
\end{definition}
\begin{definition}(Coercivity)
A function $h(x)$ is coerce over the feasible set  $\mathscr{F}$ means that $h(x)\rightarrow \infty$ if $x\in \mathscr{F}$ and $\Vert x\Vert\rightarrow \infty$.
\end{definition}
Next, the definition of a quantized stationary point \cite{huang2021alternating} is shown as follows:
\begin{definition}(Quantized Stationary Point)
The $p_l$ is a quantized stationary point of of Problem \ref{prob:problem 3} if there exists $\tau>0$ such that
\begin{align*}
    p_l\in\arg\min\nolimits_{\delta\in \Delta}\Vert \delta-(p_l-\nabla_{p_l} F(\textbf{p},\textbf{W},\textbf{b},\textbf{z},\textbf{q})/\tau)\Vert.
\end{align*}
\end{definition}
The quantized stationary point is an extension of the stationary point in the discrete setting, and any global solution $p_l$ to Problem \ref{prob:problem 3} is a quantized stationary point to Problem \ref{prob:problem 3} (Lemma 3.7 in \cite{huang2021alternating}). Then the following assumption is required for convergence analysis.
\begin{assumption}
$f_l(z_l)$ is Lipschitz continuous with coefficient $S>0$, $R(Z_L;y)$ is proper, and $F(\textbf{p},\textbf{W},\textbf{b},\textbf{z},\textbf{q})$ is coercive. Moreover, $\partial f_l(z_l)$ is bounded, i.e. there exists $M>0$ such that $\Vert\partial f_l(z_l)\Vert \leq M$. 
\label{ass:lipschitz continuous}
\end{assumption}
Assumption \ref{ass:lipschitz continuous} is mild to satisfy: most common activation functions such as Rectified Linear Unit (ReLU) \cite{wang2020toward} and leaky ReLU\cite{xu2015empirical} satisfy Assumption \ref{ass:lipschitz continuous}.
 The risk function $R(z_l;y)$ is only required to be proper, which shows that the convergence condition of our proposed pdADMM-G is milder than that of the dlADMM, which requires $R(z_l;y)$ to be Lipschitz differentiable \cite{wang2019admm}. Due to the space limit, detailed proofs are provided in Section \ref{sec:convergence proof} in the Appendix. The technical proofs follow a similar routine as dlADMM \cite{wang2019admm}. The difference consists in the fact that the dual variable $u_l$ is controlled by $q_l$ and $z_l$ (Lemma \ref{lemma:u square bound} in Section \ref{sec:convergence proof} in the Appendix), which holds under Assumption \ref{ass:lipschitz continuous}, while $u_l$ can  be controlled only by $z_l$ in the convergence proof of dlADMM. The first lemma shows that the objective keeps decreasing when $\rho$ is sufficiently large.

\begin{lemma}[Objective Decrease]
\label{lemma:objective decrease} For both the pdADMM-G algorithm and the pdADMM-G-Q algorithm, 
if $\rho>\max(4\nu S^2,(\sqrt{17}+1)\nu/2)$, there exist $C_1=\nu/2-2\nu^2S^2/\rho>0$ and $C_2=\rho/2-2\nu^2/\rho-\nu/2>0$ such that it holds for any $k\in \mathbb{N}$ that
\begin{align}
    \nonumber &L_\rho(\textbf{p}^{k},\!\textbf{W}^{k},\!\textbf{b}^k,\!\textbf{z}^{k},\!\textbf{q}^k,\!\textbf{u}^{k})\!-\!L_\rho(\textbf{p}^{k\!+\!1},\textbf{W}^{k\!+\!1},\textbf{b}^{k\!+\!1},\textbf{z}^{k\!+\!1},\textbf{q}^{k\!+\!1},\textbf{u}^{k\!+\!1})\\\nonumber &\geq \sum\nolimits_{l\!=\!2}^L (\tau^{k\!+\!1}_l/2)\Vert p^{k+1}_l\!-\!p^k_l\Vert^2_2\!+\!\sum\nolimits_{l\!=\!1}^{L}(\theta^{k\!+\!1}_l/2)\Vert  W^{k\!+\!1}_l\!-\!W^k_l\Vert^2_2\!\\&\nonumber\!+\!\sum\nolimits_{l\!=\!1}^{L}(\nu/2)\Vert  b^{k\!+\!1}_l\!-\!b^k_l\Vert^2_2\!+\!\sum\nolimits_{l\!=\!1}^{L\!-\!1} C_1\Vert z^{k+1}_l\!-\!z^k_l\Vert^2_2\\&\!+\!(\nu/2)\Vert z^{k+1}_L-z^k_L\Vert^2_2+\sum\nolimits_{l=1}^{L-1}C_2\Vert q^{k+1}_l-q^k_l\Vert^2_2,
    \label{eq:pdADMM-G objective decrease}
\\\nonumber &\beta_\rho(\textbf{p}^{k},\!\textbf{W}^{k},\!\textbf{b}^k,\!\textbf{z}^{k},\!\textbf{q}^k,\!\textbf{u}^{k})\!-\!\beta_\rho(\textbf{p}^{k\!+\!1},\textbf{W}^{k\!+\!1},\textbf{b}^{k\!+\!1},\textbf{z}^{k\!+\!1},\textbf{q}^{k\!+\!1},\textbf{u}^{k\!+\!1})\\\nonumber &\geq \!\sum\nolimits_{l\!=\!1}^{L}(\theta^{k\!+\!1}_l/2)\Vert  W^{k\!+\!1}_l\!-\!W^k_l\Vert^2_2\!+\!\sum\nolimits_{l\!=\!1}^{L}(\nu/2)\Vert  b^{k\!+\!1}_l\!-\!b^k_l\Vert^2_2\!\\\nonumber&+\!\sum\nolimits_{l\!=\!1}^{L\!-\!1} C_1\Vert z^{k+1}_l\!-\!z^k_l\Vert^2_2+\!(\nu/2)\Vert z^{k+1}_L-z^k_L\Vert^2_2\\&+\sum\nolimits_{l=1}^{L-1}C_2\Vert q^{k+1}_l-q^k_l\Vert^2_2.
    \label{eq:pdADMM-G-q objective decrease}
\end{align}
\end{lemma}
\begin{sproof}
\indent 
They can be proven via the optimality conditions of all subproblems, and Assumption \ref{ass:lipschitz continuous}.
\end{sproof}
\indent Lemma \ref{lemma:lower bounded} shows that the objective is bounded from below when $\rho$ is large enough, and all variables are bounded.
\begin{lemma} [Bounded Objective]
\label{lemma:lower bounded}

(1). For the pdADMM-G algorithm, if $\rho> \nu$, then
 $L_\rho(\textbf{p}^k,\textbf{W}^{k},\textbf{b}^{k},\textbf{z}^{k},\textbf{q}^{k},\textbf{u}^{k})$
 is lower bounded. Moreover, $\textbf{p}^k,\textbf{W}^k,\textbf{b}^k,\textbf{z}^k,\textbf{q}^{k}$,and $\textbf{u}^k$ are bounded, i.e. there exist $\mathbb{N}_\textbf{p}$, $\mathbb{N}_\textbf{W}$, $\mathbb{N}_\textbf{b}$,  $\mathbb{N}_\textbf{z}$,  $\mathbb{N}_\textbf{q}$, and $\mathbb{N}_\textbf{u}>0$, such that $\Vert \textbf{p}^k\Vert\leq \mathbb{N}_\textbf{p}$, $\Vert \textbf{W}^k\Vert\leq \mathbb{N}_\textbf{W}$, $\Vert \textbf{b}^k\Vert\leq \mathbb{N}_\textbf{b}$, $\Vert \textbf{z}^k\Vert\leq \mathbb{N}_\textbf{z}$, $\Vert \textbf{q}^k\Vert\leq \mathbb{N}_\textbf{q}$, and $\Vert \textbf{u}^k\Vert\leq \mathbb{N}_\textbf{u}$. \\
 (2). For the pdADMM-G-Q algorithm, if $\rho> \nu$, then
 $\beta_\rho(\textbf{p}^k,\textbf{W}^{k},\textbf{b}^{k},\textbf{z}^{k},\textbf{q}^{k},\textbf{u}^{k})$
 is lower bounded. Moreover, $\textbf{W}^k,\textbf{b}^k,\textbf{z}^k,\textbf{q}^{k}$,and $\textbf{u}^k$ are bounded, i.e. there exist  $\overline{\mathbb{N}}_\textbf{W}$, $\overline{\mathbb{N}}_\textbf{b}$,  $\overline{\mathbb{N}}_\textbf{z}$,  $\overline{\mathbb{N}}_\textbf{q}$, and $\overline{\mathbb{N}}_\textbf{u}>0$, such that $\Vert \textbf{W}^k\Vert\leq \overline{\mathbb{N}}_\textbf{W}$, $\Vert \textbf{b}^k\Vert\leq \overline{\mathbb{N}}_\textbf{b}$, $\Vert \textbf{z}^k\Vert\leq \overline{\mathbb{N}}_\textbf{z}$, $\Vert \textbf{q}^k\Vert\leq \overline{\mathbb{N}}_\textbf{q}$, and $\Vert \textbf{u}^k\Vert\leq \overline{\mathbb{N}}_\textbf{u}$.\end{lemma}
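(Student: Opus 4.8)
The plan is to rewrite the augmented Lagrangian evaluated at an iterate so that all dual terms disappear and only a coercive part remains, and then combine this with the monotone decrease from Lemma~\ref{lemma:objective decrease} and the coercivity in Assumption~\ref{ass:lipschitz continuous}. First I would extract the identity relating $u_l$ to $q_l$ and $z_l$: writing out the first-order optimality condition of the $q_l$-update \eqref{eq:update q} and substituting the dual update \eqref{eq:update u} gives $u_l^{k+1}=\nu(q_l^{k+1}-f_l(z_l^{k+1}))$ for every $l$ and every $k$, which is exactly Lemma~\ref{lemma:u bound}. Plugging $u_l^k=\nu(q_l^k-f_l(z_l^k))$ back into $L_\rho(\textbf{p}^k,\textbf{W}^k,\textbf{b}^k,\textbf{z}^k,\textbf{q}^k,\textbf{u}^k)$ and completing the square in the three terms that couple $q_l^k$, namely
\begin{align*}
&\tfrac{\nu}{2}\Vert q_l^k-f_l(z_l^k)\Vert_2^2+(u_l^k)^{T}(p_{l+1}^k-q_l^k)+\tfrac{\rho}{2}\Vert p_{l+1}^k-q_l^k\Vert_2^2\\
&\quad=\tfrac{\nu}{2}\Vert p_{l+1}^k-f_l(z_l^k)\Vert_2^2+\tfrac{\rho-\nu}{2}\Vert p_{l+1}^k-q_l^k\Vert_2^2,
\end{align*}
leaves, under the hypothesis $\rho>\nu$, two nonnegative summands. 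Hence along the iterates $L_\rho$ equals $\widetilde F(\textbf{p}^k,\textbf{W}^k,\textbf{b}^k,\textbf{z}^k)+\tfrac{\rho-\nu}{2}\sum_{l=1}^{L-1}\Vert p_{l+1}^k-q_l^k\Vert_2^2$, where $\widetilde F$ denotes $F$ restricted to the feasible set $\{p_{l+1}=q_l\}$, i.e. $\widetilde F=R(z_L;y)+\sum_l\Omega(W_l)+\tfrac{\nu}{2}\sum_{l=1}^L\Vert z_l-W_lp_l-b_l\Vert_2^2+\tfrac{\nu}{2}\sum_{l=1}^{L-1}\Vert p_{l+1}-f_l(z_l)\Vert_2^2$.

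Since $F$ is coercive, $\widetilde F$ is coercive in $(\textbf{p},\textbf{W},\textbf{b},\textbf{z})$ (divergence of this block forces divergence of $(\textbf{p},\textbf{W},\textbf{b},\textbf{z},\textbf{q})$ with $q_l=p_{l+1}$), hence bounded below, which immediately gives $L_\rho(\textbf{p}^k,\textbf{W}^k,\textbf{b}^k,\textbf{z}^k,\textbf{q}^k,\textbf{u}^k)\ge\inf\widetilde F>-\infty$, proving lower-boundedness in part~(1). For the boundedness of the iterates I would invoke Lemma~\ref{lemma:objective decrease}: the sequence $L_\rho(\textbf{p}^k,\dots,\textbf{u}^k)$ is nonincreasing, hence bounded above by its value at $k=0$; combined with the decomposition above this pins both $\widetilde F(\textbf{p}^k,\textbf{W}^k,\textbf{b}^k,\textbf{z}^k)$ and $\tfrac{\rho-\nu}{2}\sum_l\Vert p_{l+1}^k-q_l^k\Vert_2^2$ in a bounded range for all $k$. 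Coercivity of $\widetilde F$ then confines $\textbf{p}^k,\textbf{W}^k,\textbf{b}^k,\textbf{z}^k$ to a bounded set; the residual bound gives $\textbf{q}^k$ bounded via $q_l^k=p_{l+1}^k-(p_{l+1}^k-q_l^k)$; and finally $\textbf{u}^k$ is bounded through $u_l^k=\nu(q_l^k-f_l(z_l^k))$ together with $\Vert f_l(z_l^k)\Vert\le\Vert f_l(0)\Vert+S\Vert z_l^k\Vert$ from Assumption~\ref{ass:lipschitz continuous}.

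Part~(2) is obtained by the same argument applied to $\beta_\rho=L_\rho+\sum_{l=2}^L\mathbb{I}(p_l)$: since $\mathbb{I}\ge0$ one has $\beta_\rho\ge L_\rho$, so the lower bound transfers; and since the $p_l$-update \eqref{eq:update p quantized} is a projection onto $\Delta$, every iterate satisfies $p_l^k\in\Delta$, so $\mathbb{I}(p_l^k)=0$ and $\beta_\rho(\textbf{p}^k,\dots)=L_\rho(\textbf{p}^k,\dots)$; invoking \eqref{eq:pdadmm-q objective decrease} in place of \eqref{eq:pdadmm objective decrease} and repeating the chain above bounds $\textbf{W}^k,\textbf{b}^k,\textbf{z}^k,\textbf{q}^k,\textbf{u}^k$ (no bound on $\textbf{p}^k$ is claimed, though coercivity would in fact yield one as well). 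The hard part will be the elimination of the dual variables carried out above: one must check that the $q_l$-subproblem optimality condition produces exactly the identity $u_l^{k+1}=\nu(q_l^{k+1}-f_l(z_l^{k+1}))$ and that the completion of squares has a nonnegative remainder precisely when $\rho>\nu$ — this is where the relaxed splitting $p_{l+1}=q_l$ and the extra penalty term $(\nu/2)\Vert q_l-f_l(z_l)\Vert_2^2$ in $F$ are essential. Everything else is bookkeeping on top of Lemmas~\ref{lemma:objective decrease} and \ref{lemma:u bound}.
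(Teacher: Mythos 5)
Your proposal is correct and follows essentially the same route as the paper: substitute the dual identity $u_l^k=\nu(q_l^k-f_l(z_l^k))$ (this is Lemma~\ref{lemma: q_opt}, not Lemma~\ref{lemma:u bound} as you cite), complete the square to write $L_\rho$ along the iterates as $F$ restricted to the feasible set plus $\tfrac{\rho-\nu}{2}\sum_l\Vert p_{l+1}^k-q_l^k\Vert_2^2$, and then combine coercivity with the monotone decrease of Lemma~\ref{lemma:objective decrease} to bound each block, with $\textbf{q}^k$ from the residual and $\textbf{u}^k$ from the dual identity. The only cosmetic differences are the mislabeled reference and that you obtain the key rearrangement as an exact identity, whereas the paper phrases it as an inequality via the descent lemma of \cite{beck2009fast}; the substance is identical, including part (2).
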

 \begin{sproof}
 \indent We only show the sketch proof of (1) because (2) follows the same routine as (1).
 In order to prove the boundness of  $L_\rho$, we should prove the following:
\begin{align*}
     &L_\rho(\textbf{p}^k,\textbf{W}^k,\textbf{b}^k,\textbf{z}^k,\textbf{q}^{k},\textbf{u}^k)\\&\geq F(\textbf{p}^k,\textbf{W}^k,\textbf{b}^k,\textbf{z}^k,\textbf{q}^{'})+((\rho-\nu)/2)\Vert p^k_{l+1}-q^k_l\Vert^2_2\\&> -\infty,
\end{align*}
where $p^k_{l+1}=q^{'}_l$. Therefore, $F(\textbf{p}^k,\textbf{W}^k,\textbf{b}^k,\textbf{z}^k,\textbf{q}^{'})$ and $((\rho-\nu)/2)\Vert p^k_{l+1}-q^k_l\Vert^2_2$ are upper bounded by $L_\rho(\textbf{p}^k,\textbf{W}^k,\textbf{b}^k,\textbf{z}^k,\textbf{q}^{k},\textbf{u}^k)$ and hence $L_\rho(\textbf{p}^0,\textbf{W}^0,\textbf{b}^0,\textbf{z}^0,\textbf{q}^{0},\textbf{u}^0)$ (Lemma \ref{lemma:objective decrease}). The boundness of variables can be obtained via
Assumption \ref{ass:lipschitz continuous}.
\end{sproof}
\vspace{-0.5cm}
 \indent Based on Lemmas \ref{lemma:objective decrease} and \ref{lemma:lower bounded}, the following theorem ensures that the objective  is convergent.
 \begin{theorem}[Convergent Objective] \label{theo:convergent variable}
(1). For the pdADMM-G algorithm, if $\rho>\max(4\nu S^2,(\sqrt{17}+1)\nu/2)$, then
$L_\rho(\textbf{p}^k,\textbf{W}^k,\textbf{b}^k,\textbf{z}^k,\textbf{q}^{k},\textbf{u}^k)$ is convergent. Moreover, $\lim_{k\rightarrow\infty}\Vert\textbf{p}^{k+1}-\textbf{p}^{k}\Vert^2_2=0$, $\lim_{k\rightarrow\infty}\Vert\textbf{W}^{k+1}-\textbf{W}^{k}\Vert^2_2=0$, $\lim_{k\rightarrow\infty}\Vert\textbf{b}^{k+1}-\textbf{b}^{k}\Vert^2_2=0$, $\lim_{k\rightarrow\infty}\Vert\textbf{z}^{k+1}-\textbf{z}^{k}\Vert^2_2=0$, $\lim_{k\rightarrow\infty}\Vert\textbf{q}^{k+1}-\textbf{q}^{k}\Vert^2_2=0$, $\lim_{k\rightarrow\infty}\Vert\textbf{u}^{k+1}-\textbf{u}^{k}\Vert^2_2=0$.\\
(2). For the pdADMM-G-Q algorithm, if $\rho>\max(4\nu S^2,(\sqrt{17}+1)\nu/2)$, then
$\beta_\rho(\textbf{p}^k,\textbf{W}^k,\textbf{b}^k,\textbf{z}^k,\textbf{q}^{k},\textbf{u}^k)$ is convergent. Moreover, $\lim_{k\rightarrow\infty}\Vert\textbf{W}^{k+1}-\textbf{W}^{k}\Vert^2_2=0$, $\lim_{k\rightarrow\infty}\Vert\textbf{b}^{k+1}-\textbf{b}^{k}\Vert^2_2=0$, $\lim_{k\rightarrow\infty}\Vert\textbf{z}^{k+1}-\textbf{z}^{k}\Vert^2_2=0$, $\lim_{k\rightarrow\infty}\Vert\textbf{q}^{k+1}-\textbf{q}^{k}\Vert^2_2=0$, $\lim_{k\rightarrow\infty}\Vert\textbf{u}^{k+1}-\textbf{u}^{k}\Vert^2_2=0$.
\end{theorem}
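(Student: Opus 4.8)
The plan is to derive Theorem~\ref{theo:convergent variable} from the two preceding lemmas by the standard monotone–convergence argument for descent methods: a sequence that is non-increasing and bounded below converges, its consecutive differences tend to zero, and substituting this back into the descent inequality forces each non-negative term on its right-hand side to vanish. First I would note that the hypothesis $\rho>\max(4\nu S^2,(\sqrt{17}+1)\nu/2)$ already implies $\rho>\nu$ (since $(\sqrt{17}+1)/2>1$), so Lemma~\ref{lemma:objective decrease} and Lemma~\ref{lemma:lower bounded} both apply, for pdADMM with $L_\rho$ and for pdADMM-Q with $\beta_\rho$. By Lemma~\ref{lemma:objective decrease}, the right-hand side of \eqref{eq:pdadmm objective decrease} is a finite sum of terms of the form (strictly positive coefficient)$\,\times\,$(squared norm), hence $\{L_\rho(\textbf{p}^k,\textbf{W}^k,\textbf{b}^k,\textbf{z}^k,\textbf{q}^k,\textbf{u}^k)\}_k$ is non-increasing; by Lemma~\ref{lemma:lower bounded}(1) it is bounded below; therefore it converges to some finite $L^\ast$, which is the first claim of part~(1). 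The identical argument with \eqref{eq:pdadmm-q objective decrease} and Lemma~\ref{lemma:lower bounded}(2) yields convergence of $\beta_\rho$ in part~(2).

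Next, since $L_\rho(\textbf{p}^k,\textbf{W}^k,\textbf{b}^k,\textbf{z}^k,\textbf{q}^k,\textbf{u}^k)$ converges, its consecutive difference tends to $0$; plugging this into \eqref{eq:pdadmm objective decrease} and using non-negativity of every summand, each summand tends to $0$. Because $\nu$, $C_1$, $C_2$ are fixed positive constants, this gives $\Vert\textbf{b}^{k+1}-\textbf{b}^k\Vert_2^2\to0$, $\Vert\textbf{z}^{k+1}-\textbf{z}^k\Vert_2^2\to0$, and $\Vert\textbf{q}^{k+1}-\textbf{q}^k\Vert_2^2\to0$ immediately. For the $\textbf{p}$- and $\textbf{W}$-increments I additionally need the surrogate parameters $\tau^{k+1}_l$ and $\theta^{k+1}_l$ to be bounded below by a positive constant uniformly in $k$: by Lemma~\ref{lemma:lower bounded} all iterates lie in a bounded set, on which $\phi$ has Lipschitz-continuous gradients in $p_l$ and in $W_l$ (with moduli controlled by the bounds $\mathbb{N}_\textbf{W}$, $\mathbb{N}_\textbf{p}$, etc.), so the sufficient-decrease conditions that define $\tau^{k+1}_l$ and $\theta^{k+1}_l$ are met within a bounded backtracking range, whence $\tau^{k+1}_l\ge\underline{\tau}>0$ and $\theta^{k+1}_l\ge\underline{\theta}>0$; consequently $\Vert\textbf{p}^{k+1}-\textbf{p}^k\Vert_2^2\to0$ and $\Vert\textbf{W}^{k+1}-\textbf{W}^k\Vert_2^2\to0$. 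Finally, the dual update \eqref{eq:update u} together with the bound of $\Vert u^{k+1}_l-u^k_l\Vert$ by increments of $z_l$ and $q_l$ (Lemma~\ref{lemma:u bound}, which holds under Assumption~\ref{ass:lipschitz continuous}) gives $\Vert\textbf{u}^{k+1}-\textbf{u}^k\Vert_2^2\to0$. Part~(2) is then obtained by repeating this reasoning with $\beta_\rho$ and \eqref{eq:pdadmm-q objective decrease}; note that \eqref{eq:pdadmm-q objective decrease} carries no $\textbf{p}$-increment term, which is precisely why part~(2) does not assert $\Vert\textbf{p}^{k+1}-\textbf{p}^k\Vert_2^2\to0$ (the iterate $p_l$ only ranges over the discrete set $\Delta$).

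I expect the only delicate step to be the uniform positive lower bound on $\tau^{k+1}_l$ and $\theta^{k+1}_l$: without it one merely concludes $\tau^{k+1}_l\Vert p^{k+1}_l-p^k_l\Vert_2^2\to0$ rather than $\Vert p^{k+1}_l-p^k_l\Vert_2^2\to0$. This is exactly where the boundedness conclusions of Lemma~\ref{lemma:lower bounded} are essential, since they confine the iterates to a compact region on which the relevant partial gradients of $\phi$ are globally Lipschitz and hence a finite backtracking schedule suffices; the remaining manipulations are routine bookkeeping of non-negative quantities.
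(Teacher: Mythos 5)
Your proposal follows essentially the same route as the paper's proof: monotonicity from Lemma \ref{lemma:objective decrease} plus the lower bound from Lemma \ref{lemma:lower bounded} give convergence of $L_\rho$ (resp.\ $\beta_\rho$), taking limits in Inequality \eqref{eq:pdadmm objective decrease} (resp.\ \eqref{eq:pdadmm-q objective decrease}) forces each non-negative summand to vanish, and the $\textbf{u}$-increment is handled via Lemma \ref{lemma:u bound}. The one place you go beyond the paper is the explicit justification that $\tau^{k+1}_l$ and $\theta^{k+1}_l$ stay uniformly bounded away from zero (the paper silently treats these coefficients as harmless positive weights), and your observation that this follows from the boundedness of the iterates together with a standard backtracking choice is a correct and worthwhile tightening rather than a different proof strategy.
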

\begin{sproof}
 This theorem can be derived by taking the limit on both sides of Inequality \eqref{eq:pdADMM-G objective decrease}. 
\end{sproof}
 \indent The third lemma guarantees that the subgradient of the objective is upper bounded, which is stated as follows:
 \begin{lemma}[Bounded Subgradient]
\label{lemma:subgradient bound}
(1). For the pdADMM-G algorithm, there exists a constant $C>0$ and $g^{k+1}\in \partial L_\rho(\textbf{p}^{k+1},\textbf{W}^{k+1},\textbf{b}^{k+1},\textbf{z}^{k+1},\textbf{q}^{k+1},\textbf{u}^{k+1})$ such that
\begin{align*}
    &\Vert g^{k+1}\Vert \leq C(\Vert\textbf{p}^{k+1}-\textbf{p}^{k}\Vert+\Vert\textbf{W}^{k+1}-\textbf{W}^{k}\Vert+\Vert\textbf{b}^{k+1}-\textbf{b}^{k}\Vert\\&+\Vert\textbf{z}^{k+1}-\textbf{z}^{k}\Vert+\Vert\textbf{q}^{k+1}-\textbf{q}^{k}\Vert+\Vert\textbf{u}^{k+1}-\textbf{u}^{k}\Vert).
\end{align*}
(2). For the pdADMM-G-Q algorithm, there exists a constant $\overline{C}>0$, $\overline{g}_\textbf{W}^{k+1}\in \nabla_{\textbf{W}^{k+1}} \beta_\rho(\textbf{p}^{k+1},\textbf{W}^{k+1},\textbf{b}^{k+1},\textbf{z}^{k+1},\textbf{q}^{k+1},\textbf{u}^{k+1})$, $\overline{g}_\textbf{b}^{k+1}\in \nabla_{\textbf{b}^{k+1}} \beta_\rho(\textbf{p}^{k+1},\textbf{W}^{k+1},\textbf{b}^{k+1},\textbf{z}^{k+1},\textbf{q}^{k+1},\textbf{u}^{k+1})$, $\overline{g}_\textbf{z}^{k+1}\in \partial_{\textbf{z}^{k+1}} \beta_\rho(\textbf{p}^{k+1},\textbf{W}^{k+1},\textbf{b}^{k+1},\textbf{z}^{k+1},\textbf{q}^{k+1},\textbf{u}^{k+1})$, $\overline{g}_\textbf{q}^{k+1}\in \nabla_{\textbf{q}^{k+1}} \beta_\rho(\textbf{p}^{k+1},\textbf{W}^{k+1},\textbf{b}^{k+1},\textbf{z}^{k+1},\textbf{q}^{k+1},\textbf{u}^{k+1})$, $\overline{g}_\textbf{u}^{k+1}\in \nabla_{\textbf{u}^{k+1}} \beta_\rho(\textbf{p}^{k+1},\textbf{W}^{k+1},\textbf{b}^{k+1},\textbf{z}^{k+1},\textbf{q}^{k+1},\textbf{u}^{k+1})$ such that
\begin{align*}
    &\Vert \overline{g}_\textbf{W}^{k+1}\Vert \leq \overline{C}(\Vert\textbf{W}^{k+1}-\textbf{W}^{k}\Vert+\Vert\textbf{b}^{k+1}-\textbf{b}^{k}\Vert\\&+\Vert\textbf{z}^{k+1}-\textbf{z}^{k}\Vert+\Vert\textbf{q}^{k+1}-\textbf{q}^{k}\Vert+\Vert\textbf{u}^{k+1}-\textbf{u}^{k}\Vert),\\&
    \Vert \overline{g}_\textbf{b}^{k+1}\Vert \leq \overline{C}(\Vert\textbf{W}^{k+1}-\textbf{W}^{k}\Vert+\Vert\textbf{b}^{k+1}-\textbf{b}^{k}\Vert\\&+\Vert\textbf{z}^{k+1}-\textbf{z}^{k}\Vert+\Vert\textbf{q}^{k+1}-\textbf{q}^{k}\Vert+\Vert\textbf{u}^{k+1}-\textbf{u}^{k}\Vert),\\&\Vert \overline{g}_\textbf{z}^{k+1}\Vert \leq \overline{C}(\Vert\textbf{W}^{k+1}-\textbf{W}^{k}\Vert+\Vert\textbf{b}^{k+1}-\textbf{b}^{k}\Vert\\&+\Vert\textbf{z}^{k+1}-\textbf{z}^{k}\Vert+\Vert\textbf{q}^{k+1}-\textbf{q}^{k}\Vert+\Vert\textbf{u}^{k+1}-\textbf{u}^{k}\Vert),\\&
    \Vert \overline{g}_\textbf{q}^{k+1}\Vert \leq \overline{C}(\Vert\textbf{W}^{k+1}-\textbf{W}^{k}\Vert+\Vert\textbf{b}^{k+1}-\textbf{b}^{k}\Vert\\&+\Vert\textbf{z}^{k+1}-\textbf{z}^{k}\Vert+\Vert\textbf{q}^{k+1}-\textbf{q}^{k}\Vert+\Vert\textbf{u}^{k+1}-\textbf{u}^{k}\Vert),\\&\Vert \overline{g}_\textbf{u}^{k+1}\Vert \leq \overline{C}(\Vert\textbf{W}^{k+1}-\textbf{W}^{k}\Vert+\Vert\textbf{b}^{k+1}-\textbf{b}^{k}\Vert\\&+\Vert\textbf{z}^{k+1}-\textbf{z}^{k}\Vert+\Vert\textbf{q}^{k+1}-\textbf{q}^{k}\Vert+\Vert\textbf{u}^{k+1}-\textbf{u}^{k}\Vert).
\end{align*}
\end{lemma}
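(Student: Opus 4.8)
The plan is to exhibit, for each primal and dual block, an explicit element of the corresponding partial (sub)differential of $L_\rho$ (respectively $\beta_\rho$) at the new iterate $(\textbf{p}^{k+1},\textbf{W}^{k+1},\textbf{b}^{k+1},\textbf{z}^{k+1},\textbf{q}^{k+1},\textbf{u}^{k+1})$, and to bound its norm using the first-order optimality condition of the subproblem that produced that block. The key observation is that each block update minimizes its subproblem either exactly or up to a proximal/quadratic-majorization term, so the optimality condition reads $0\in$ (subdifferential of the subproblem objective at the new iterate). Subtracting this relation from the genuine partial subdifferential of $L_\rho$ at the new iterate leaves only two kinds of terms: (i) the proximal/majorization corrections, which are multiples of a consecutive difference such as $\tau_l^{k+1}(p_l^{k+1}-p_l^k)$, $\theta_l^{k+1}(W_l^{k+1}-W_l^k)$, $\nu(b_l^{k+1}-b_l^k)$ or $\nu(z_l^{k+1}-z_l^k)$; and (ii) the discrepancy caused by variables that were frozen at iteration $k$ when that subproblem was solved but have since advanced to iteration $k+1$ (e.g. $\textbf{W},\textbf{b},\textbf{z},\textbf{q},\textbf{u}$ in the $p_l$-subproblem, $q_l$ in the $z_l$-subproblem, $u_l$ in the $q_l$-subproblem).

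For the blocks updated via quadratic majorization ($\textbf{p}$ and $\textbf{W}$), I would bound the stale-variable discrepancy (ii) by the local Lipschitz continuity of $\nabla_{p_l}\phi$ and $\nabla_{W_l}\phi$: since $\phi$ is a polynomial (quadratic in each block, with bilinear and trilinear cross terms) and all iterates lie in the compact region furnished by Lemma \ref{lemma:lower bounded}, these gradients are Lipschitz on that region with constants depending only on $\mathbb{N}_\textbf{p},\mathbb{N}_\textbf{W},\dots$; the same compactness bounds the majorization parameters $\tau_l^{k+1},\theta_l^{k+1}$, which (being chosen, e.g. by backtracking, as the smallest values validating the majorization inequalities of Section \ref{sec:subproblem}) cannot exceed those local Lipschitz constants. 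For the $\textbf{b}$-, $\textbf{z}$-($l<L$), $\textbf{q}$- and $\textbf{u}$-blocks the subproblems are solved exactly, so (ii) collapses to a single difference: for $z_l$ it is $-\nu\,\partial f_l(z_l^{k+1})(q_l^{k+1}-q_l^k)-\nu(z_l^{k+1}-z_l^k)$, of norm at most $\nu M\|q_l^{k+1}-q_l^k\|+\nu\|z_l^{k+1}-z_l^k\|$ by the bound $\|\partial f_l\|\le M$ of Assumption \ref{ass:lipschitz continuous}; for $q_l$ it is a multiple of $u_l^{k+1}-u_l^k$; for $u_l$ it is $(u_l^{k+1}-u_l^k)/\rho$, using the dual update $u_l^{k+1}=u_l^k+\rho(p_{l+1}^{k+1}-q_l^{k+1})$. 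The $z_L$-subproblem is exactly the $z_L$-part of $L_\rho$, so $0$ itself lies in $\partial_{z_L}L_\rho$ at the new iterate; the convex regularizer $\Omega$ either cancels against the $W$-subproblem's own $\partial\Omega$ term or, when smooth (e.g. the $\ell_2$ penalty), contributes a term controlled by $\|\textbf{W}^{k+1}\|\le\mathbb{N}_\textbf{W}$. Assembling the per-block estimates and taking $C$ to be a fixed multiple of the largest coefficient (times $L$) gives Lemma \ref{lemma:subgradient bound}(1).

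For part (2), observe that $\beta_\rho=L_\rho+\sum_{l=2}^L\mathbb{I}(p_l)$ and the indicator terms depend only on $\textbf{p}$; hence the partial subdifferentials of $\beta_\rho$ with respect to $\textbf{W},\textbf{b},\textbf{z},\textbf{q},\textbf{u}$ coincide with those of $L_\rho$, and since pdADMM-Q performs exactly the same $\textbf{W},\textbf{b},\textbf{z},\textbf{q},\textbf{u}$ updates as pdADMM, the five inequalities of part (2) follow verbatim from the argument above. The $\textbf{p}$-block is deliberately absent from part (2) because the quantized $p_l$-update only delivers the quantized-critical-point relation, not a classical subgradient identity, and so is treated separately in the critical-point argument.

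The step I expect to be the main obstacle is the nonsmooth composition in the $\textbf{z}$-block: producing a valid element of $\partial_{z_l}\!\big[(\nu/2)\|q_l-f_l(z_l)\|^2\big]$ and matching it against the element generated by the $z_l$-subproblem requires a calculus-of-subdifferentials (chain-rule) step that is only clean when $f_l$ is, say, piecewise linear---exactly the regime of Assumption \ref{ass:lipschitz continuous}---and one must ensure that the same selection from $\partial f_l(z_l^{k+1})$ is used in the subproblem's optimality condition and in $\partial_{z_l}L_\rho$ so that the cross terms cancel as claimed. A secondary technical point is making the uniform-boundedness claim for $\tau_l^{k+1},\theta_l^{k+1}$ rigorous by quantifying the local Lipschitz constants of $\nabla\phi$ in terms of the bounds supplied by Lemma \ref{lemma:lower bounded}.
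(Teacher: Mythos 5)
Your proposal is correct and follows essentially the same route as the paper's proof: for each block you subtract the subproblem's first-order optimality condition from the corresponding partial (sub)gradient of $L_\rho$ at the new iterate, leaving proximal terms and stale-variable discrepancies that are bounded via the iterate bounds of Lemma \ref{lemma:lower bounded}, the bound $\Vert\partial f_l\Vert\le M$, and the dual update, with part (2) inherited because the indicator term affects only the $\textbf{p}$-block. The only differences are cosmetic: you invoke local Lipschitz continuity of $\nabla\phi$ on the bounded region where the paper telescopes the differences explicitly, and you make explicit the boundedness of $\tau_l^{k+1},\theta_l^{k+1}$ that the paper uses implicitly.
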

\begin{sproof}
To prove this lemma, the subgradient is proven to be upper bounded by the linear combination of $\Vert\textbf{p}^{k+1}-\textbf{p}^{k}\Vert$, $ \Vert\textbf{W}^{k+1}-\textbf{W}^{k}\Vert$,  $\Vert\textbf{b}^{k+1}-\textbf{b}^{k}\Vert$, $\Vert\textbf{z}^{k+1}-\textbf{z}^{k}\Vert$, $\Vert\textbf{q}^{k+1}-\textbf{q}^{k}\Vert$, and $\Vert\textbf{u}^{k+1}-\textbf{u}^{k}\Vert$.
\end{sproof}
 \indent Now based on Theorem \ref{theo:convergent variable}, and Lemma \ref{lemma:subgradient bound}, the convergence of the pdADMM-G algorithm to a stationary point is presented in the following theorem.
\begin{theorem} [Convergence of the pdADMM-G algorithm]
\label{theo: pdADMM-G global convergence}
If $\rho>\max(4\nu S^2,(\sqrt{17}+1)\nu/2)$, then for the variables $(\textbf{p},\textbf{W},\textbf{b},\textbf{z},\textbf{q},\textbf{u})$ in Problem \ref{prob:problem 2}, starting from any $(\textbf{p}^{0},\textbf{W}^{0},\textbf{b}^{0},\textbf{z}^{0},\textbf{q}^{0},\textbf{u}^{0})$, $(\textbf{p}^{k},\textbf{W}^{k},\textbf{b}^{k},\textbf{z}^{k},\textbf{q}^{k},\textbf{u}^{k})$ has at least a limit point $(\textbf{p}^*,\textbf{W}^*,\textbf{b}^*,\textbf{z}^*,\textbf{q}^*,\textbf{u}^*)$, and any limit point is a stationary point of Problem \ref{prob:problem 2}. That is, $0\in \partial L_\rho(\textbf{p}^*,\textbf{W}^*,\textbf{b}^*,\textbf{z}^*,\textbf{q}^*,\textbf{u}^*)$. In other words, 
\begin{align*}
    & p^*_{l+1}=q^*_l, \ \nabla_{\textbf{p}^*} L_\rho(\textbf{p}^*,\textbf{W}^*,\textbf{b}^*,\textbf{z}^*,\textbf{q}^*,\textbf{u}^*)\!=\!0,\\& \
    \nabla_{\textbf{W}^*}\! L_\rho\!(\textbf{p}^*\!,\!\textbf{W}^*\!,\!\textbf{b}^*,\textbf{z}^*,\textbf{q}^*,\textbf{u}^*)\!=\!0, \nabla_{\textbf{b}^*}\! L_\rho\!(\textbf{p}^*\!,\!\textbf{W}^*\!,\!\textbf{b}^*,\textbf{z}^*,\textbf{q}^*,\textbf{u}^*)\!=\!0, \\& 0\!\in\!\partial_{\textbf{z}^*}\! L_\rho\!(\textbf{p}^*,\!\textbf{W}^*,\textbf{b}^*,\textbf{z}^*,\textbf{q}^*,\textbf{u}^*),\!
     \nabla_{\textbf{q}^*} L_\rho\!(\textbf{p}^*,\!\textbf{W}^*,\textbf{b}^*,\textbf{z}^*,\textbf{q}^*,\textbf{u}^*)\!=\!0.
\end{align*}
\end{theorem}
\begin{sproof}
 This theorem can be derived directly from Lemma \ref{lemma:lower bounded} and Lemma \ref{lemma:subgradient bound}.
\end{sproof}
\indent Theorem \ref{theo: pdADMM-G global convergence} shows that our proposed pdADMM-G algorithm converges for sufficiently large $\rho$, which is consistent with previous literature \cite{wang2019admm}. Similarly, the convergence of the proposed pdADMM-G-Q algorithm is shown as follows:
\begin{theorem} [Convergence of the pdADMM-G-Q algorithm]
\label{theo: pdADMM-G-q global convergence}
If $\rho>\max(4\nu S^2,(\sqrt{17}+1)\nu/2)$, then for the variables $(\textbf{p},\textbf{W},\textbf{b},\textbf{z},\textbf{q},\textbf{u})$ in Problem \ref{prob:problem 3}, starting from any $(\textbf{p}^{0},\textbf{W}^{0},\textbf{b}^{0},\textbf{z}^{0},\textbf{q}^{0},\textbf{u}^{0})$, $(\textbf{p}^{k},\textbf{W}^{k},\textbf{b}^{k},\textbf{z}^{k},\textbf{q}^{k},\textbf{u}^{k})$ has at least a limit point $(\textbf{p}^*,\textbf{W}^*,\textbf{b}^*,\textbf{z}^*,\textbf{q}^*,\textbf{u}^*)$, and any limit point $(\textbf{W}^*,\textbf{b}^*,\textbf{z}^*,\textbf{q}^*,\textbf{u}^*)$ is a stationary point of Problem \ref{prob:problem 3}. Moreover, if $\tau^{k+1}_l$ is bounded, then $\textbf{p}^*$ is a quantized stationary point of Problem \ref{prob:problem 3}. That is 
\begin{align*}
    & p^*_{l+1}=q^*_l, \
    \nabla_{\textbf{W}^*} \beta_\rho(\textbf{p}^*,\textbf{W}^*,\textbf{b}^*,\textbf{z}^*,\textbf{q}^*,\textbf{u}^*)=0,\\& \nabla_{\textbf{b}^*} \beta_\rho(\textbf{p}^*,\textbf{W}^*,\textbf{b}^*,\textbf{z}^*,\textbf{q}^*,\textbf{u}^*)=0, \\& 0\in\partial_{\textbf{z}^*} \beta_\rho(\textbf{p}^*,\textbf{W}^*,\textbf{b}^*,\textbf{z}^*,\textbf{q}^*,\textbf{u}^*),\\& 
    \nabla_{\textbf{q}^*} \beta_\rho(\textbf{p}^*,\textbf{W}^*,\textbf{b}^*,\textbf{z}^*,\textbf{q}^*,\textbf{u}^*)=0,\\&
    p^*_l\in\arg\min\nolimits_{\delta\in \Delta}\Vert \delta-(p^*_l-\nabla_{p^*_l} F(\textbf{p}^*,\textbf{W}^*,\textbf{b}^*,\textbf{z}^*,\textbf{q}^*)/\tau^*_l)\Vert.
\end{align*}
where $\tau^*_l$ is a limit point of $\tau^k_l$.
\end{theorem}
\begin{sproof}
This theorem is proven using a similar procedure as Theorem \ref{theo: pdADMM-G global convergence}, and the definition of the quantized stationary point.
\end{sproof}
The only difference between Theorems \ref{theo: pdADMM-G global convergence} and \ref{theo: pdADMM-G-q global convergence} is that $\textbf{p}^*$ is a stationary point in Problem \ref{prob:problem 2} and a quantized stationary point in Problem \ref{prob:problem 3}.  Next, the following theorem ensures the sublinear convergence rate $o(1/k)$ of the proposed pdADMM-G algorithm and the pdADMM-G-Q algorithm.
\begin{theorem}[Convergence Rate]
(1). For the pdADMM-G algorithm and a sequence $(\textbf{p}^k,\textbf{W}^k,\textbf{b}^k,\textbf{z}^k,\textbf{q}^k,\textbf{u}^k)$, define $c_k=\min\nolimits_{0\leq i\leq k}(\sum\nolimits_{l=2}^L (\tau^{i+1}_l/2)\Vert p^{i+1}_l\!-\!p^i_l\Vert^2_2+\sum\nolimits_{l\!=\!1}^{L}(\theta^{i\!+\!1}_l/2)\Vert  W^{i\!+\!1}_l\!-\!W^i_l\Vert^2_2+\sum\nolimits_{l\!=\!1}^{L}(\nu/2)\Vert  b^{i\!+\!1}_l\!-\!b^i_l\Vert^2_2+\sum\nolimits_{l=1}^{L-1} C_1\Vert z^{i+1}_l-z^i_l\Vert^2_2+(\nu/2)\Vert z^{i+1}_L-z^i_L\Vert^2_2+\sum\nolimits_{l=1}^{L-1}C_2\Vert q^{i+1}_l-q^i_l\Vert^2_2)$ where $C_1=\nu/2-2\nu^2S^2/\rho>0$ and $C_2=\rho/2-2\nu^2/\rho-\nu/2>0$, then the convergence rate of $c_k$ is $o(1/k)$.\\
(2). For the pdADMM-G-Q algorithm and a sequence $(\textbf{W}^k,\textbf{b}^k,\textbf{z}^k,\textbf{q}^k,\textbf{u}^k)$, define $d_k=\min\nolimits_{0\leq i\leq k}(\sum\nolimits_{l\!=\!1}^{L}(\theta^{i\!+\!1}_l/2)\Vert  W^{i\!+\!1}_l\!-\!W^i_l\Vert^2_2+\sum\nolimits_{l\!=\!1}^{L}(\nu/2)\Vert  b^{i\!+\!1}_l\!-\!b^i_l\Vert^2_2+\sum\nolimits_{l=1}^{L-1} C_1\Vert z^{i+1}_l-z^i_l\Vert^2_2+(\nu/2)\Vert z^{i+1}_L-z^i_L\Vert^2_2+\sum\nolimits_{l=1}^{L-1}C_2\Vert q^{i+1}_l-q^i_l\Vert^2_2)$ where $C_1=\nu/2-2\nu^2S^2/\rho>0$ and $C_2=\rho/2-2\nu^2/\rho-\nu/2>0$, then the convergence rate of $d_k$ is $o(1/k)$.
\label{theo: convergence rate}
\end{theorem}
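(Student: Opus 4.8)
The plan is to obtain the rate directly from the three results already established: the per-step decrease of Lemma~\ref{lemma:objective decrease}, the lower bound of Lemma~\ref{lemma:lower bounded}, and the convergence of the objective of Theorem~\ref{theo:convergent variable}. For part~(1), abbreviate $L_\rho^k:=L_\rho(\textbf{p}^k,\textbf{W}^k,\textbf{b}^k,\textbf{z}^k,\textbf{q}^k,\textbf{u}^k)$ and let $a_k$ denote the right-hand side of Inequality~\eqref{eq:pdadmm objective decrease},
\begin{align*}
a_k &= \sum\nolimits_{l=2}^L (\tau^{k+1}_l/2)\Vert p^{k+1}_l-p^k_l\Vert^2_2+\sum\nolimits_{l=1}^{L}(\theta^{k+1}_l/2)\Vert W^{k+1}_l-W^k_l\Vert^2_2\\&\quad+\sum\nolimits_{l=1}^{L}(\nu/2)\Vert b^{k+1}_l-b^k_l\Vert^2_2+\sum\nolimits_{l=1}^{L-1} C_1\Vert z^{k+1}_l-z^k_l\Vert^2_2\\&\quad+(\nu/2)\Vert z^{k+1}_L-z^k_L\Vert^2_2+\sum\nolimits_{l=1}^{L-1}C_2\Vert q^{k+1}_l-q^k_l\Vert^2_2,
\end{align*}
so that $c_k=\min_{0\le i\le k}a_i$ by definition. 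Under $\rho>\max(4\nu S^2,(\sqrt{17}+1)\nu/2)$ every coefficient $\tau^{k+1}_l,\theta^{k+1}_l,\nu,C_1,C_2$ is strictly positive, hence $a_k\ge 0$, and Lemma~\ref{lemma:objective decrease} reads $L_\rho^k-L_\rho^{k+1}\ge a_k$.

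Next I would telescope: summing $L_\rho^i-L_\rho^{i+1}\ge a_i$ over $i=0,\dots,k$ gives $\sum_{i=0}^{k}a_i\le L_\rho^{0}-L_\rho^{k+1}$. By Lemma~\ref{lemma:lower bounded}(1), $L_\rho^{k+1}$ is bounded below, and by Theorem~\ref{theo:convergent variable}(1) it converges; letting $k\to\infty$ therefore yields $\sum_{i=0}^{\infty}a_i<\infty$. In particular the tail vanishes: $\sum_{i=\lfloor k/2\rfloor}^{k}a_i\le\sum_{i=\lfloor k/2\rfloor}^{\infty}a_i\to 0$ as $k\to\infty$.

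The final step is the elementary observation that a nonnegative summable sequence has running minimum of order $o(1/k)$. Since $c_k=\min_{0\le i\le k}a_i\le a_i$ for every $i$ with $\lfloor k/2\rfloor\le i\le k$, summing over those at least $k/2$ indices gives
\begin{align*}
\tfrac{k}{2}\,c_k\le\big(k-\lfloor k/2\rfloor+1\big)\,c_k\le\sum\nolimits_{i=\lfloor k/2\rfloor}^{k}a_i\longrightarrow 0,
\end{align*}
hence $k\,c_k\to 0$, i.e. $c_k=o(1/k)$; because $c_k$ is non-increasing in $k$, this estimate holds along the whole sequence rather than a subsequence, completing part~(1). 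Part~(2) is the same argument verbatim with $L_\rho$ replaced by $\beta_\rho$, Inequality~\eqref{eq:pdadmm objective decrease} by \eqref{eq:pdadmm-q objective decrease}, Lemma~\ref{lemma:lower bounded}(1) by Lemma~\ref{lemma:lower bounded}(2), Theorem~\ref{theo:convergent variable}(1) by Theorem~\ref{theo:convergent variable}(2), and $c_k,a_k$ by $d_k$ and the right-hand side of \eqref{eq:pdadmm-q objective decrease}; the missing $p$-term in the quantized case does not affect anything.

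I expect the only delicate point to be the last step: one must split the partial sum at $\lfloor k/2\rfloor$, use that the tail of a convergent series tends to zero, and exploit the monotonicity of $c_k$ (resp. $d_k$) to upgrade the bound from even-type indices to all $k$. Everything else — positivity of $C_1$ and $C_2$ from the hypothesis on $\rho$, the telescoping sum, and the lower bound on the augmented Lagrangian — is immediate from Lemmas~\ref{lemma:objective decrease} and \ref{lemma:lower bounded} and Theorem~\ref{theo:convergent variable}.
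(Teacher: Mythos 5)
Your proposal is correct and follows essentially the same route as the paper: telescope the per-iteration decrease of Lemma~\ref{lemma:objective decrease} against the lower bound of Lemma~\ref{lemma:lower bounded} to get summability, then use nonnegativity and the running-minimum structure of $c_k$ (resp. $d_k$) to conclude $o(1/k)$. The only difference is cosmetic: the paper verifies monotonicity and summability of $c_k$ and then invokes Lemma~1.2 of \cite{deng2017parallel} for the final step, whereas you prove that elementary fact inline via the tail-splitting bound $(k/2)\,c_k\le\sum_{i=\lfloor k/2\rfloor}^{k}a_i\to 0$, which is a valid (and self-contained) substitute.
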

\begin{sproof}
 (1). In order to prove the convergence rate $o(1/k)$, $c_k$ satisfies three conditions: (a) $c_k\geq c_{k+1}$,  (b) $\sum\nolimits_{k=0}^\infty c_k$ is bounded, and (c) $c_k\geq0$. \\
 (2). $d_k$ can be proven using a similar procedure as (1).
\end{sproof}


\section{Experiments}
\label{sec:experiments}
\indent In this section, we evaluate the performance of the proposed pdADMM-G algorithm and the proposed pdADMM-G-Q algorithm on GA-MLP models using nine benchmark datasets. Convergence and computational overheads are demonstrated on different datasets. Speedup and test performance are compared with several state-of-the-art optimizers. All experiments were conducted on the Amazon Web Services (AWS) p2.16xlarge instance, with 16 NVIDIA K80 GPUs, 64vCPUs, a processor Intel(R) Xeon(R) CPU E5-2686 v4 @ 2.30GHz, and 732GiB of RAM.

\begin{table}
\tiny
    \centering
 
    \begin{tabular}{m{0.8cm}|m{0.7cm}|m{0.8cm}|m{0.3cm}|m{0.4cm}|m{0.6cm}|m{0.6cm}|m{0.6cm}}
    \hline\hline
         Dataset&\tabincell{c}{Node\#}&\tabincell{c}{Edge\#}&\tabincell{c}{Class\#}&
{Feature\#}&\tabincell{c}{Training\\Set\#}&\tabincell{c}{Validation\\ Set\#}&\tabincell{c}{Test\\Set\#}\\\hline 
\tabincell{c}{Cora} &\num{ 2485}& \num{10556} & \num{7} & \num{1433} &\num{140}&\num{500}&\num{1000}\\\hline 
\tabincell{c}{PubMed}&\num{19717}  & \num{88648} &  \num{3}  & \num{500} &\num{60}&\num{500}&\num{1000}\\\hline 
\tabincell{c}{Citeseer}&\num{2110}& \num{9104}  & \num{6} & \num{3703} &\num{120}&\num{500}&\num{1000}\\\hline 
\tabincell{c}{Amazon  \\ Computers}&\num{13381}&\num{491722} &\num{10}&\num{767}&\num{200}&\num{1000}&\num{1000}\\\hline 
\tabincell{c}{Amazon \\ Photo}&\num{7487}& \num{238162} &\num{8}&\num{745}&\num{160}&\num{1000}&\num{1000}\\\hline 
\tabincell{c}{Coauthor  \\ CS}&\num{18333} & \num{163788} & \num{15} & \num{6805} &\num{300}&\num{1000}&\num{1000}\\\hline 
\tabincell{c}{Coauthor  \\ Physics} &\num{34493}& \num{495924} & \num{5} & \num{8415} &\num{100}&\num{1000}&\num{1000}\\\hline 
\tabincell{c}{Flickr}&\num{89250}&\num{899756}&\num{7}&\num{500}&\num{44625}&\num{22312}&\num{22313}\\\hline
\tabincell{c}{Ogbn-Arxiv}&\num{ 169343}& \num{1166243} &\num{40}&\num{128}&\num{90941}&\num{29799}&\num{48603}\\\hline\hline
    \end{tabular}
 \caption{Dataset statistics.}
 \vspace{-0.5cm}
    \label{tab:dataset}
\end{table}

\subsection{Datasets and Settings}
 \indent  Nine benchmark datasets were used for experimental evaluation, whose statistics are shown in Table \ref{tab:dataset}. Each dataset is split into a training set, a validation set, and a test set. Due to space limit, their details can be found in Section \ref{sec:dataset} in the Appendix.\\
\indent When it comes to experimental settings,  we set $K=4$ for the multi-hop operator $\Psi$, and defined a diagonal degree matrix $D$ where $D_{ii}=\sum\nolimits_{j=1}^{\vert V\vert} A_{ij}$, and a renormalized adjacency matrix $\tilde{A}=(D+I)^{-1/2}(A+I)(D+I)^{-1/2}\in\mathbb{R}^{\vert V\vert\times\vert V\vert}$\cite{kipf2017semi}. Moreover, we set $\Psi=\{I,\tilde{A},\tilde{A}^2,\tilde{A}^3\}$ \cite{chen2021graph}. For all GA-MLP models, the activation function was set to ReLU. The loss function was set to the cross-entropy loss. For the pdADMM-G-Q algorithm, $\Delta=\{-1,0,1,\cdots,20\} $ in Problem \ref{prob:problem 3}, and $\textbf{p}$ was quantized by default.

\subsection{Comparison Methods}
\indent GD and its variants are state-of-the-art optimizers and hence served as comparison methods. For GD-based methods, all datasets were used for training models in a full-batch fashion. All hyperparameters were chosen by maximizing the performance of validation sets. Due to space limit, hyperparameter settings of all methods are shown in Section \ref{sec:hyperparameter} in the Appendix. The following are their brief introductions: \\
\indent 1. Gradient Descent (GD) \cite{bottou2010large}. The GD and its variants are the most popular deep learning optimizers. The GD updates parameters simply based on their gradients. \\
\indent 2. Adaptive learning rate method (Adadelta) \cite{zeiler2012adadelta}. The Adadelta is proposed to overcome the sensitivity to hyperparameter selection. \\
\indent 3. Adaptive gradient algorithm (Adagrad) \cite{duchi2011adaptive}. Adagrad is an improved version of GD: rather than fixing the learning rate during training, it adapts the learning rate to the hyperparameter. \\
\indent 4. Adaptive momentum estimation (Adam) \cite{kingma2014adam}. Adam is the most popular optimization method for deep learning models. It estimates the first and second momentum in order to correct the biased gradient, and thus accelerates empirical convergence.

 \begin{figure*}
  \begin{minipage}{0.24\linewidth}
  \centerline{\includegraphics[width=0.9\linewidth]{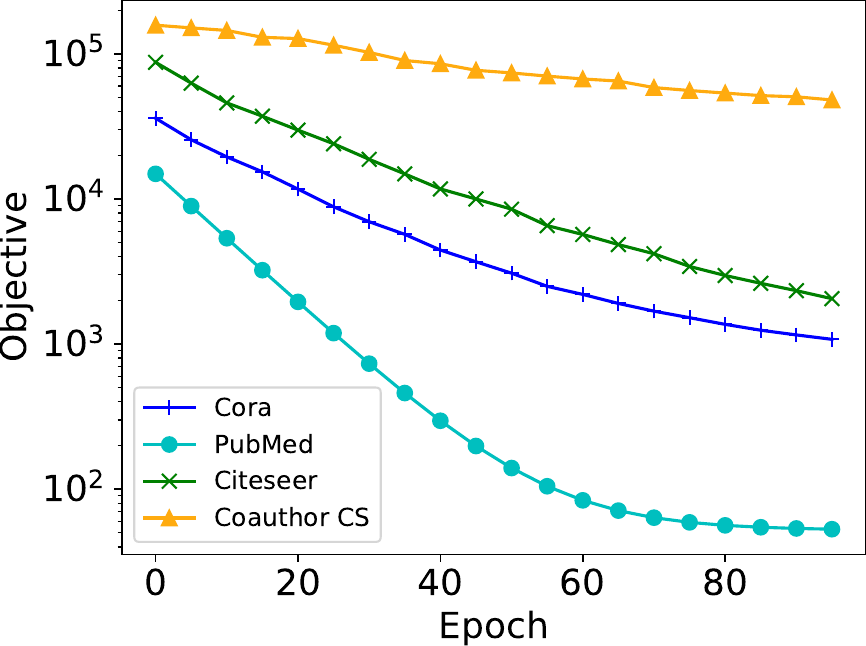}}
  \centerline{(a). pdADMM-G Objective.}
  \end{minipage}
  \hfill
  \begin{minipage}{0.24\linewidth}
  \centerline{\includegraphics[width=0.9\linewidth]{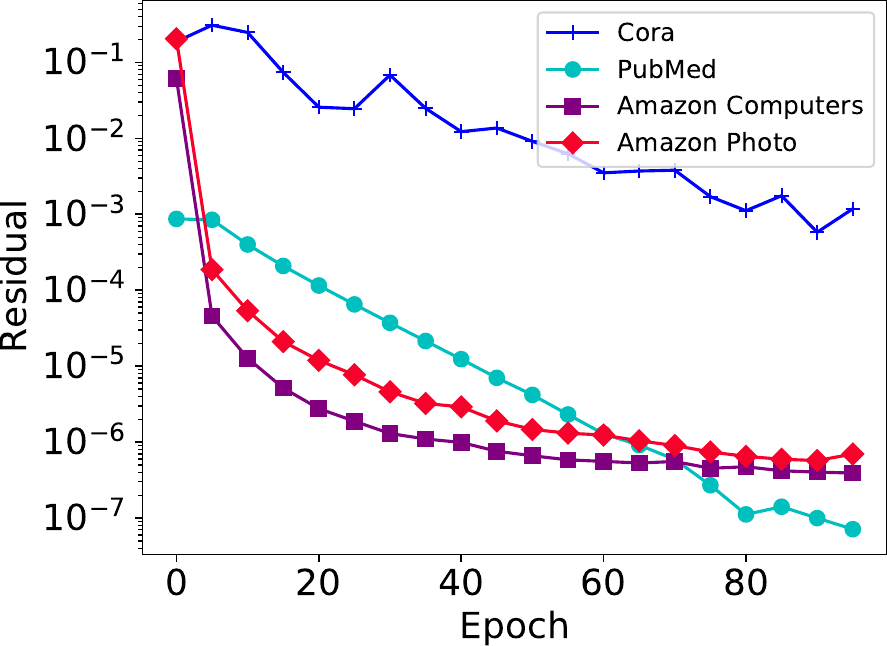}}
  \centerline{(b). pdADMM-G Residual.}
   \end{minipage}
   \hfill
  \begin{minipage}{0.24\linewidth}
  \centerline{\includegraphics[width=0.9\linewidth]{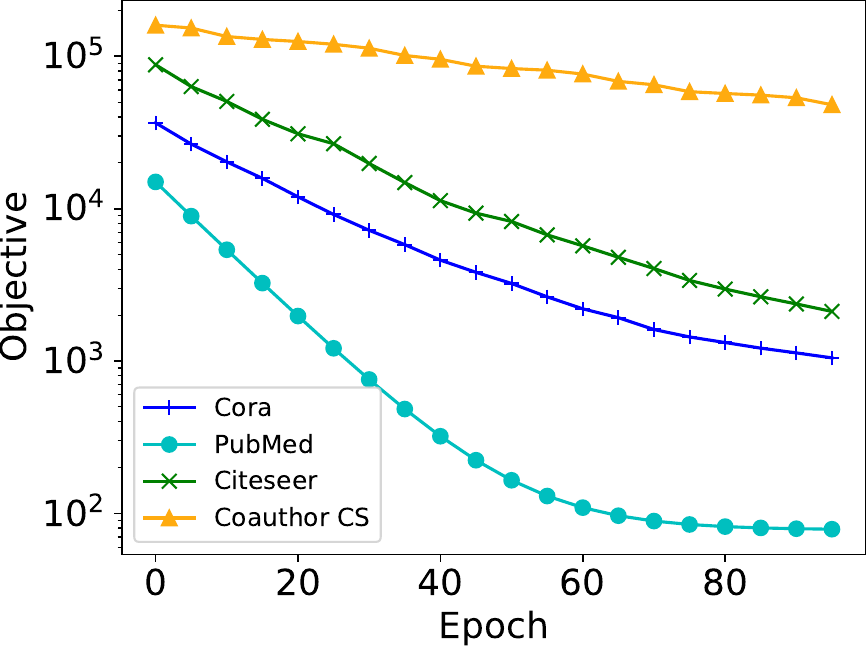}}
  \centerline{(c). pdADMM-G-Q Objective.}
   \end{minipage}
   \hfill
  \begin{minipage}{0.24\linewidth}
  \centerline{\includegraphics[width=0.9\linewidth]{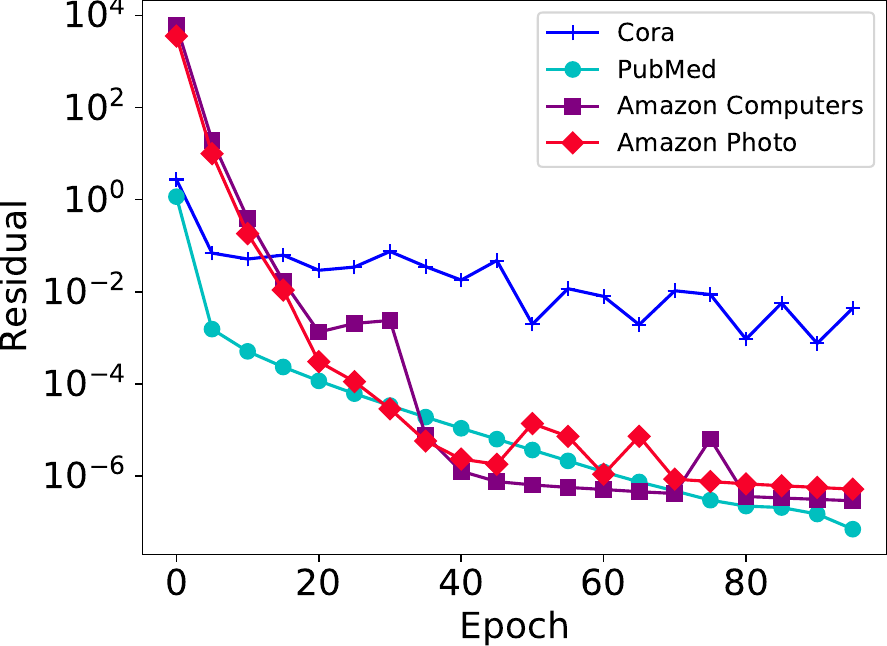}}
  \centerline{(d). pdADMM-G-Q Residual.}
   \end{minipage}
   \caption{Convergence curves of the pdADMM-G algorithm and the pdADMM-G-Q algorithm in four datasets: they both converge.}
 \label{fig:convergence}
 \vspace{-0.5cm}
\end{figure*}
\vspace{-0.5cm}
\subsection{Convergence}
\indent Firstly, in order to validate the convergence of two proposed algorithms, we set up a GA-MLP model with $10$ layers, each of which has $\num{1000}$ neurons. The number of epochs was set to $100$. $\nu$ and $\rho$ were set to $0.01$ and $1$, respectively.\\ 
\indent Figure \ref{fig:convergence} demonstrates objectives and residuals of two proposed algorithms on four datasets. Overall, the objectives and residuals of the two proposed algorithms are convergent. From Figure \ref{fig:convergence}(a) and Figure \ref{fig:convergence}(c), the objectives of the two proposed algorithms decrease drastically at the first $50$ epochs and then drop smoothly to the end. The objectives on the PubMed dataset achieve the lowest among all four datasets, whereas these on the Coauthor CS dataset are the highest, which still reach near $10^5$ at the $100$-th epoch. As for residuals, even though the residuals of the pdADMM-G-Q algorithm are higher than these of the pdADMM-G algorithm initially, they both converge sublinearly to 0, which is consistent with Theorem \ref{theo: pdADMM-G global convergence} and Theorem \ref{theo: pdADMM-G-q global convergence}.  Specifically, as shown in Figure \ref{fig:convergence}(b) and Figure \ref{fig:convergence}(d), the residuals on the Cora dataset decrease more slowly with 
fluctuation than these on other datasets, while residuals on the Amazon Computers and Amazon Photo datasets demonstrate the fastest decreasing speed at the first $40$ epochs before reaching a plateau less than $10^{-6}$. The residuals on the PubMed dataset accomplish the lowest values among all four datasets again with a value of less than $10^{-7}$. 
\begin{figure}
\begin{minipage}{0.48\linewidth}
  \centering
  \centerline{\includegraphics[width=0.95\linewidth]{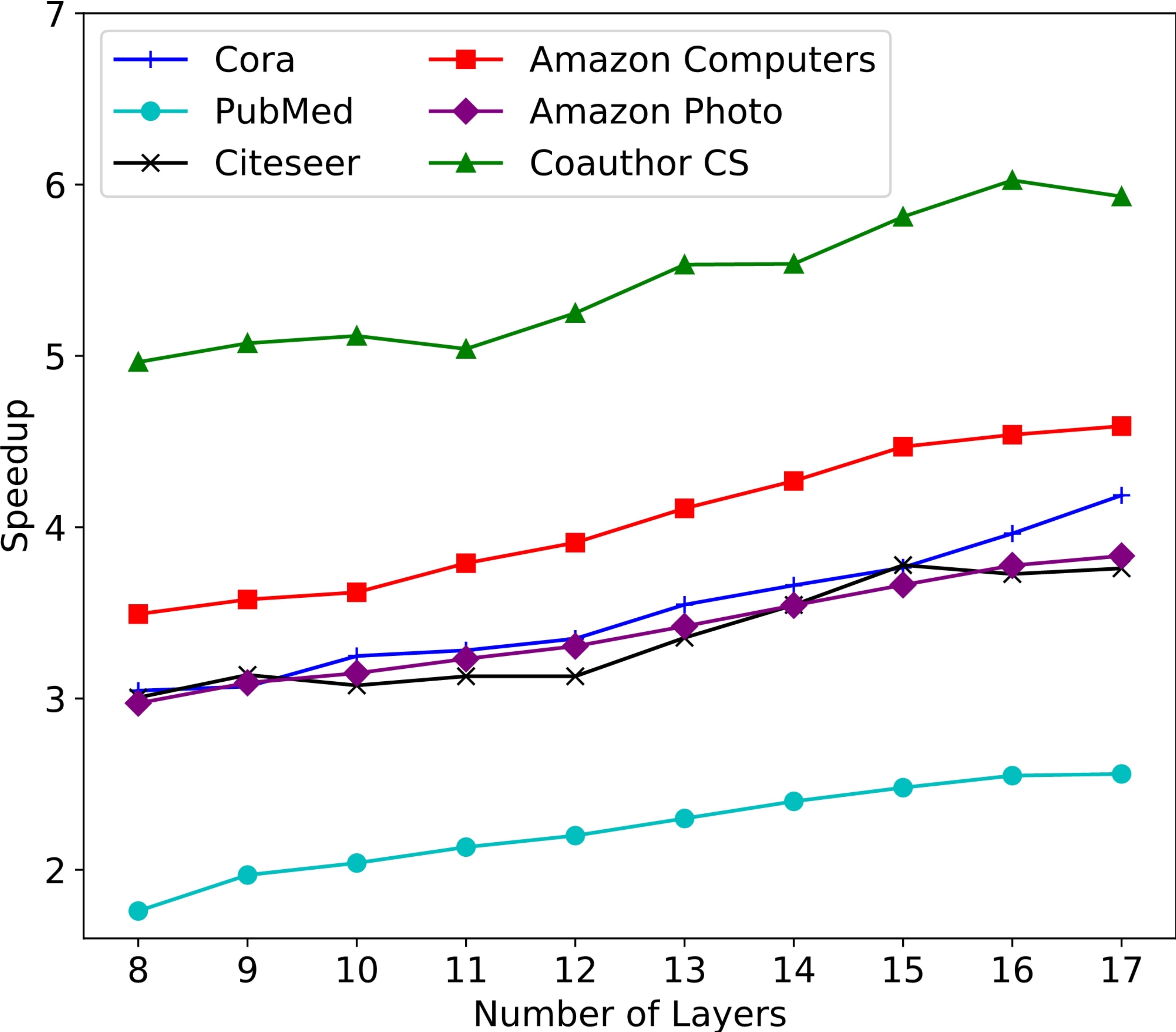}}
  \centerline{(a). The speedup}\centerline{on small datasets.}
\end{minipage}
\hfill
\begin{minipage}{0.48\linewidth}
  \centering
  \centerline{\includegraphics[width=\linewidth]{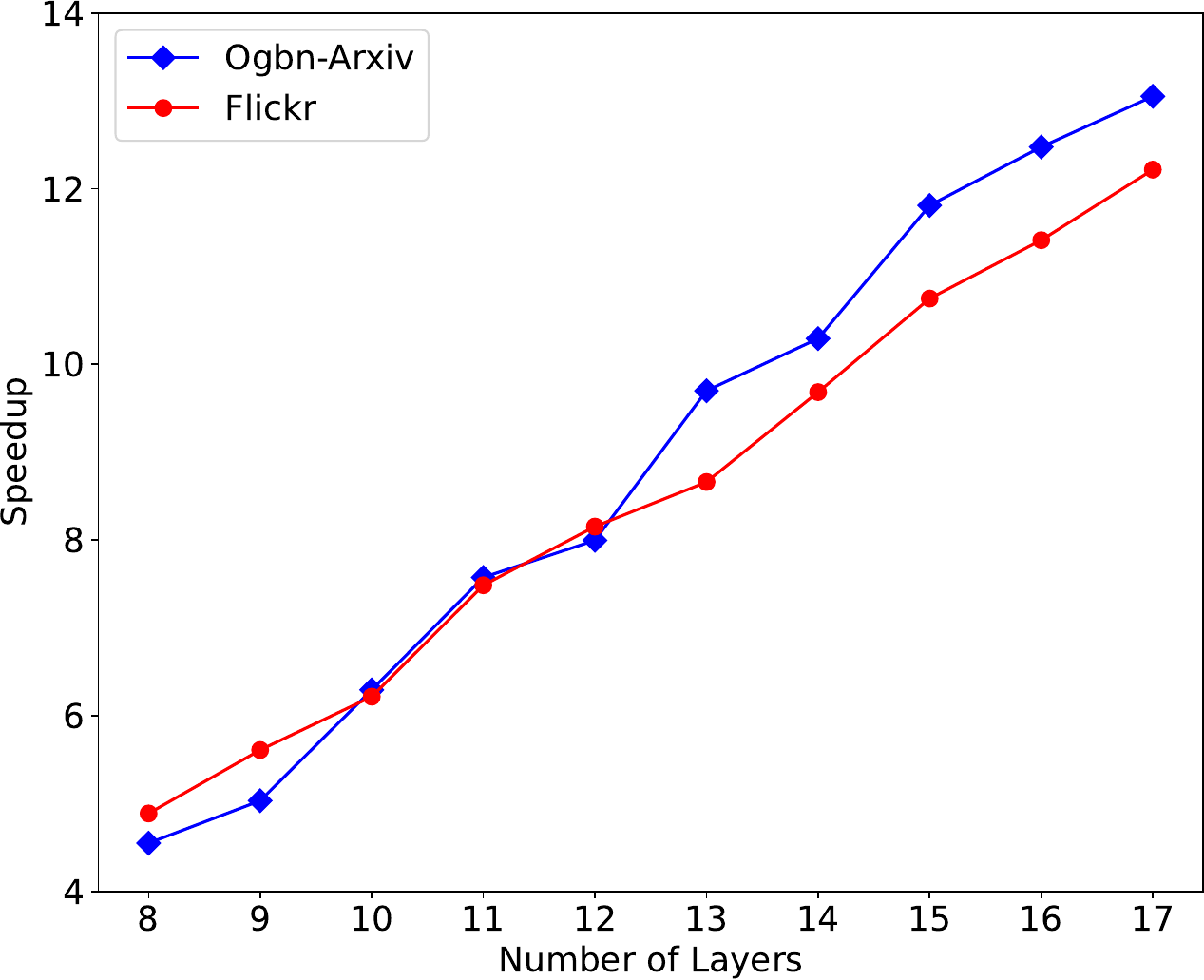}}
  \centerline{(b). The speedup}  \centerline{on large datasets.}
\end{minipage}
\caption{The speedup of the proposed pdADMM-G on different datasets concerning the number of layers: the speedup increases linearly with the number of layers, and the slopes of speedups are higher on large datasets than those on small datasets.}
\label{fig:layer_speedup}
\vspace{-0.5cm}
\end{figure}
\begin{figure}
    \centering
    \includegraphics[width=\linewidth]{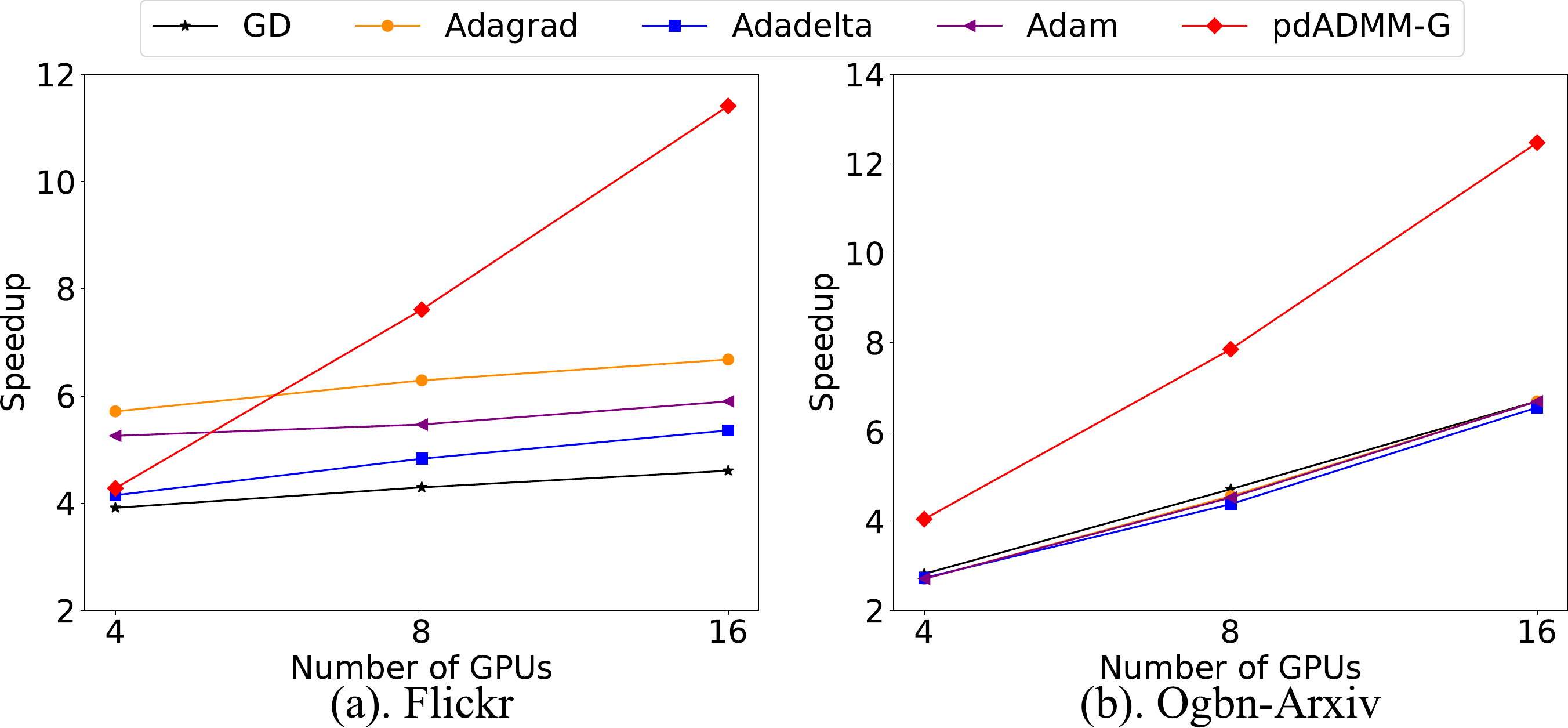}
    \caption{The speedup of all methods on two large datasets concerning the number of GPUs:  speedups of the proposed pdADMM-G are higher than these of all comparison methods.}
    \label{fig:compare_speedup}
    \vspace{-0.6cm}
  \end{figure}
\subsection{Speedup}
\indent Next, we investigate the speedup of the pdADMM-G algorithm in the large deep GA-MLP models. The running time per epoch was an average of $10$ epochs. $\rho$ and $\nu$ were both set to $10^{-3}$. We investigate the speedup concerning two factors: the number of layers and the number of GPUs.\\
 \indent For the relationship between the speedup and the number of layers, the pdADMM-G algorithm in the GA-MLP models with $\num{4000}$ neurons was tested. The number of layers ranged from $8$ to $17$. The speedup on small datasets and large datasets are shown in Figure \ref{fig:layer_speedup}(a) and Figure \ref{fig:layer_speedup}(b), respectively. Overall, the speedup of the proposed pdADMM-G increases linearly with the number of layers. For example, the speedups on the Cora dataset and the Amazon Computers dataset rise from $3$ and $3.5$ gradually to $4$ and $4.5$, respectively. The speedup on the PubMed dataset achieves the lowest with a value of less than $3$, whereas that on the Coauthor CS dataset at least doubles that on any other small dataset, with a peak of $6$. Moreover, the speedup is more obvious on large datasets. For example, when the slopes of speedups are compared, the slope on the Flickr dataset is at least five times much steeper than that on the Coauthor CS dataset. The same trend is applied to the Ogbn-Arxiv dataset. This means that our proposed pdADMM-G algorithm is more suitable for large datasets.\\
  \indent For the relationship between the speedup and the number of GPUs, we set up a large GA-MLP model with $16$ layers and $\num{4000}$ neurons and kept all hyperparameters in the previous experiment. The speedup of our proposed pdADMM-G algorithm was compared with all comparison methods. Figure \ref{fig:compare_speedup} shows all speedups on two large datasets. The proposed pdADMM-G algorithm achieves a higher speedup than any GD-based method. For example, the speedups of 8 GPUs are nearly $8$ on the Flickr dataset and the Ogbn-Arxiv dataset, while the best speedups achieved via comparison methods are in the vicinity of $6$ and $5$ on two datasets, respectively. We also observe that while speedups of all methods scale linearly with the number of GPUs, the slopes of our proposed pdADMM-G algorithm are steeper than these of any comparison method. For example, the slope of our proposed pdADMM-G algorithm on the Flickr dataset is more than 10 times steeper than that of Adam. All comparison methods show similar flat slopes, and they achieve a higher slope of the speedup on the Ogbn-Arxiv dataset than that on the Flickr dataset.\\
  \indent In summary, the speedup of our proposed pdADMM-G algorithm scales linearly with the number of layers and the number of GPUs. Moreover, its speedup is superior to any other comparison method significantly by more than $10$ times.
  \begin{figure*}
    \centering
    \includegraphics[width=\linewidth]{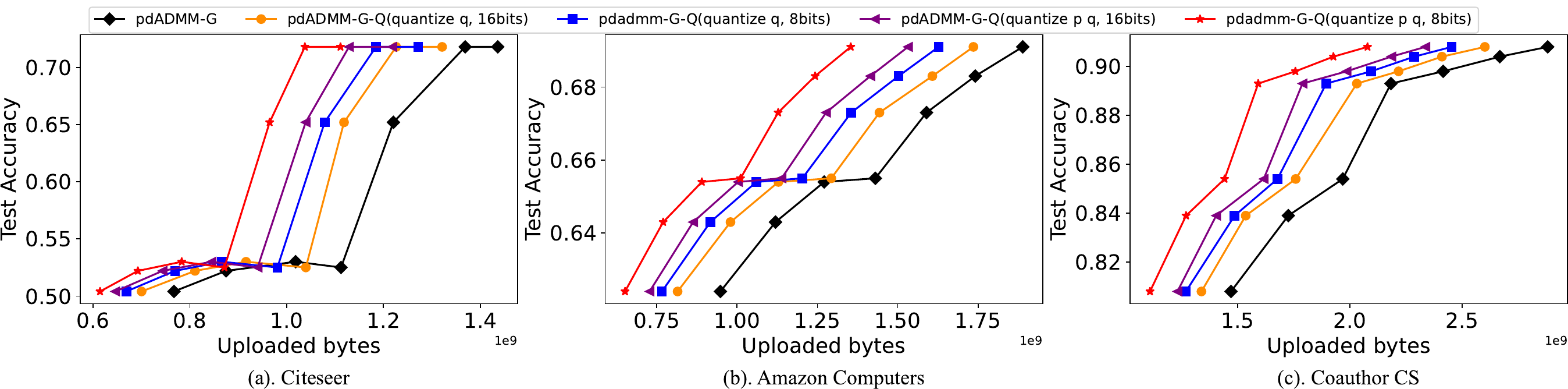}
    \caption{Communication overheads of two proposed algorithms on three datasets: the quantization concerning both $\textbf{p}$ and $\textbf{q}$ using $8$ bits reduces the communication overheads by up to $45\%$ without loss of performance.}
    \label{fig:overhead}
    \vspace{-0.6cm}
\end{figure*}
\vspace{-0.5cm}
 \subsection{Communication Overheads}
Then, it is necessary to explore how many communication overheads can be reduced using the proposed quantization technique on different quantization levels. To achieve this, we established a large GA-MLP model with $10$ layers, each of which consists of $\num{1000}$ neurons. We set up three quantization cases: no quantization, the quantization concerning $\textbf{p}$ only, and the quantization concerning both $\textbf{p}$ and $\textbf{q}$. For every quantization case, we also set up two different quantization sizes: $8$ bits and $16$ bits. Figure \ref{fig:overhead} demonstrates the relationship between the test accuracy and communication overheads for different quantization cases and sizes on three datasets. Overall, communication overheads can be reduced significantly by the proposed quantization technique. The amount of reduction depends on different quantization cases and sizes. Generally speaking, the more variables are quantized and the fewer bits are compressed, then the more savings in communications can be achieved. Take the Citeseer dataset as an example,  while all algorithms reach the same test accuracy above $70\%$, the proposed pdADMM-G (i.e. no quantization) consumes the most communication costs with the value of around $1.4\times10^9$ bytes. If the variable $\textbf{p}$ is quantized using $16$ bits, the communication overhead drops by $10\%$, and then using $8$ bits saves another $5\%$. When variables $\textbf{p}$ and $\textbf{q}$ are both quantized, the communication overhead tumbles down to
$1.2\times10^9$ bytes, which means decreases by $16.7\%$ when it is compared with the case where only $\textbf{p}$ is quantized. When variables are compressed to $8$ bits instead of $16$ bits, the communication overhead slips further to $1.1\times10^9$, a nearly $30\%$ decline. The same trend is applied to the other two datasets, and they accomplish a shrink of communication overheads by $33\%$ and $45\%$, respectively. This demonstrates that our proposed quantization technique is effective for reducing unnecessary communication costs without loss of performance. We also observe that the Coauthor CS dataset is the largest dataset among the three, and it accomplishes the greatest communication reduction. 
\begin{table}[]
    \centering
    \begin{tabular}{c|c|c|c}
    \hline\hline
    Dataset&Cora&PubMed&Citeseer\\
    \hline
         GD&0.730$\pm$0.022&0.638$\pm$ 0.080&0.637$\pm$0.040

  \\\hline
         Adadelta&0.671$\pm$0.064&0.705$\pm$0.038&0.620$\pm$0.016
 \\\hline
         Adagrad&0.726$\pm$0.025&0.753$\pm$	0.015&0.601$\pm$0.037\\\hline
         Adam&0.725$\pm$0.036&0.742$\pm$0.007&0.631$\pm$0.018
\\\hline
         pdADMM-G&0.784$\pm$0.003&\textbf{0.784$\pm$0.004}&0.709$\pm$0.003
\\\hline
         pdADMM-G-Q&\textbf{0.788$\pm$0.003}&0.782$\pm$0.003&\textbf{0.712$\pm$ 0.001}
\\\hline\hline
Dataset&\tabincell{c}{Amazon\\ Computers}&\tabincell{c}{Amazon \\Photo}&\tabincell{c}{Coauthor\\ CS}\\
\hline
GD&0.646$\pm$0.032&0.735$\pm$	0.169&0.884$\pm$0.010\\
\hline
Adadelta&0.136$\pm$0.060&0.369$\pm$	0.045&0.787$\pm$0.086\\
\hline
Adagrad&0.688$\pm$0.023&0.813$\pm$	0.018&0.887$\pm$0.007\\
\hline
Adam&0.724$\pm$0.010&0.855$\pm$0.009&0.883$\pm$0.009\\
\hline
pdADMM-G&\textbf{0.735$\pm$0.006}&\textbf{0.856$\pm$0.011}&\textbf{0.915$\pm$0.004}\\
\hline
pdADMM-G-Q&0.687$\pm$	0.054&0.832$\pm$0.010&0.914$\pm$0.003\\
\hline\hline
 Dataset&\tabincell{c}{Coauthor \\Physics}&Flickr& Ogbn-Arxiv\\
     \hline
         GD&0.909$\pm$0.007&0.466$\pm$0.007&0.361$\pm$0.063  \\\hline
         Adadelta&0.915$\pm$0.014&0.461$\pm$0.008&0.523$\pm$	0.030 \\\hline
         Adagrad&0.916$\pm$0.012&0.481$\pm$0.003&0.567$\pm$0.016\\\hline
         Adam&0.912$\pm$0.016&0.512	$\pm$0.004&\textbf{0.674$\pm$	0.006}\\\hline
         pdADMM-G&\textbf{0.921$\pm$0.003}&\textbf{0.513$\pm$0.002}&0.647$\pm$0.002
         \\\hline
         pdADMM-G-Q&0.919$\pm$0.002&0.507$\pm$0.003&0.655$\pm$0.002
         \\\hline\hline
    \end{tabular}
    \caption{The test performance of all methods when the number of neurons is 100: two proposed algorithms outperform all comparison methods.}
    \vspace{-0.5cm}
    \label{tab:performance_100_neurons}
\end{table}
\begin{table}[]
    \centering
    \begin{tabular}{c|c|c|c}
    \hline\hline
    Dataset&Cora&PubMed&Citeseer\\
    \hline
         GD&0.757$\pm$0.024&0.699$\pm$0.655&0.680$\pm$0.014

  \\\hline
         Adadelta&0.717$\pm$0.063&0.722$\pm$0.696&0.564$\pm$0.028
 \\\hline
         Adagrad&0.776$\pm$0.013&0.759$\pm$0.761&0.650	$\pm$0.038\\\hline
         Adam&0.771$\pm$0.020&0.778$\pm$0.767&0.662	$\pm$0.021\\\hline
         pdADMM-G&\textbf{0.786$\pm$0.005}&0.786$\pm$0.786&\textbf{0.713$\pm$0.007}
\\\hline
         pdADMM-G-Q&\textbf{0.786$\pm$0.005}&\textbf{0.788$\pm$0.787}&0.712$\pm$0.005
\\\hline\hline
Dataset&\tabincell{c}{Amazon\\ Computers}&\tabincell{c}{Amazon \\Photo}&\tabincell{c}{Coauthor \\CS}\\
\hline
GD&0.707$\pm$0.012&0.817$\pm$0.005&0.906$\pm$0.005\\
\hline
Adadelta&0.243$\pm$0.063&0.380$\pm$0.069&0.880$\pm$0.011\\\hline
Adagrad&\textbf{0.753$\pm$0.009}&0.866$\pm$0.007&0.911$\pm$0.003\\\hline
Adam&0.739$\pm$0.022&\textbf{0.880$\pm$	0.016}&0.898$\pm$0.013\\\hline
pdADMM-G&0.751$\pm$0.008&0.873$\pm$0.004&\textbf{0.920$\pm$0.002}\\\hline
pdADMM-G-Q&0.748$\pm$0.004&0.865$\pm$0.007&0.919$\pm$0.003\\
\hline\hline
 Dataset&\tabincell{c}{Coauthor \\Physics}&Flickr& Ogbn-Arxiv\\
     \hline
         GD&0.917$\pm$0.004&0.466$\pm$0.001&0.436$\pm$0.042  \\\hline
         Adadelta&0.917$\pm$0.004&0.462$\pm$0.001&0.584$\pm$0.031\\\hline
         Adagrad&0.914$\pm$0.004&0.487$\pm$0.005&0.630$\pm$0.016\\\hline
         Adam&0.914$\pm$0.002&\textbf{0.517$\pm$0.002}&\textbf{0.682$\pm$0.010}
         \\\hline
         pdADMM-G&\textbf{0.918$\pm$0.003}&0.515$\pm$0.002& 0.655$\pm$0.001
         \\\hline
         pdADMM-G-Q&\textbf{0.918$\pm$0.002}&0.512$\pm$0.003&0.657$\pm$0.002
         \\\hline\hline
    \end{tabular}
    \caption{The test performance of all methods when the number of neurons is 500: two proposed algorithms outperform all comparison methods.}
    \vspace{-0.6cm}
    \label{tab:performance_500_neurons}
\end{table}
\vspace{-0.3cm}
 \subsection{Performance}
\indent Finally, we evaluate the performance of two proposed algorithms against all comparison methods on nine benchmark datasets. We set up two standard GA-MLP models with $10$ layers but different neurons: the first model has $100$ neurons, while the second model has $500$ neurons. Following the greedy layerwise training strategy \cite{bengio2006greedy}, we firstly trained a two-layer GA-MLP model, and then three more layers were added to training, and finally, all $10$ layers were involved. The number of epochs was set to $200$. We repeated all experiments five times and reported their means and the standard deviations. Due to space limit, hyperparameter settings and the performance of validation sets are shown in Section \ref{sec:hyperparameter} and \ref{sec:validation performance} in the Appendix, respectively.\\
\indent Table \ref{tab:performance_100_neurons}  demonstrates the performance of all methods when the number of neurons is $\num{100}$. In summary, the two proposed algorithms outperform all comparison methods slightly: they occupy the best algorithms on eight datasets out of the total nine datasets. For example, they both achieve $78\%$ test accuracy on the Cora dataset, whereas the best comparison method is GD, which only reaches $73\%$ test accuracy, and is at least $6\%$ lower than the two proposed algorithms. As another example, two proposed algorithms accomplish $78\%$ test accuracy on the PubMed dataset, $4\%$ better than that achieved by Adagrad, whose performance is the best aside from the two proposed algorithms. The Citeseer dataset shows the largest performance gap between the two proposed algorithms and all comparison methods. Two proposed algorithms reach the level of $70\%$ test accuracy, whereas all comparison methods fall in the vicinity of  $60\%$ test accuracy. In other words, the two proposed algorithms outperform all comparison methods by more than $10\%$. For two proposed algorithms, the proposed pdADMM-G algorithm outperforms marginally the proposed pdADMM-G-Q algorithm due to the quantization technique. Their largest performance gap is $5\%$, which is achieved on the Amazon Computers dataset. The Adam is the best comparison method overall, and it serves as the best algorithm on the Ogbn-Arxiv dataset. The Adadelta performs the worst among all comparison methods, whose performance is significantly lower than any other method on several datasets such as the Amazon Computers dataset, the Amazon Photo dataset, and the Coauthor CS dataset. Last but not least, the standard deviations of all methods remain low, and this shows that they are robust to different initializations.\\
\indent Table \ref{tab:performance_500_neurons} shows the performance of all methods when the number of neurons is $500$. In general,  two proposed algorithms still reach a better performance than all comparison methods, but the gap is more narrow. For example, in Table \ref{tab:performance_100_neurons}, the proposed pdADMM-G algorithm achieves the best on the Amazon Computers dataset. However, it is surpassed by Adagrad slightly in Table \ref{tab:performance_500_neurons}. We also observe that a GA-MLP model with $\num{500}$ neurons performs better than that with $\num{100}$ neurons, which are trained by the same algorithm. This makes sense since the wider a model is, the more expressiveness it is equipped with.
\vspace{-0.3cm}
\section{Conclusion}
\label{sec:conclusion}
 The GA-MLP models are attractive to the deep learning community due to potential resistance to some problems of GNNs such as over-smoothing and over-squashing. In this paper, we propose a novel pdADMM-G algorithm to achieve parallel training of GA-MLP models, which is accomplished by breaking the layer dependency. The extended pdADMM-G-Q algorithm reduces communication overheads by the introduction of the quantization technique. Their theoretical convergence to a (quantized) stationary point of the problem is guaranteed with a sublinear convergence rate $o(1/k)$, where $k$ is the number of iterations. Extensive experiments verify that the two proposed algorithms not only converge in terms of objectives and residuals, and accelerate the training of deep GA-MLP models, but also stand out among all the existing state-of-the-art optimizers on nine benchmark datasets. Moreover, the pdADMM-G-Q algorithm reduces communication overheads by up to $45\%$ without loss of performance.
 \vspace{-0.3cm}
\bibliographystyle{ieeetr}
\bibliography{Reference}
\newpage
\onecolumn
\appendix
\subsection{Solutions to Subproblems of the pdADMM-G Algorithm}
\label{sec:pdADMM-G subproblem}
\indent We discuss how to solve all subproblems generated by pdADMM-G in detail.\\
\subsubsection{Update $\textbf{p}^{k+1}$}
\indent The variable $\textbf{p}^{k+1}$ is updated as follows:
\begin{align*}
    p^{k+1}_l&\leftarrow \arg\min\nolimits_{p_l} L_\rho(\textbf{p},\textbf{W}^k,\textbf{b}^k,\textbf{z}^k,\textbf{q}^k,\textbf{u}^k)=\phi(p_l,W^k_l,b^k_l,z^k_l,q^k_{l-1},u^k_{l-1}).
\end{align*}
Because $W_l$ and $p_l$ are coupled in $\phi$, solving $p_l$ should require the time-consuming operation of matrix inversion of $W_l$. To handle this, we apply similar quadratic approximation techniques as used in dlADMM \cite{wang2019admm} as follows:
\begin{align}
p_l^{k+1}&\leftarrow\arg\min_{p_l} U_l(p_l;\tau_l^{k+1}),
    \label{eq:update p}
\end{align}
where $
    U_l(p_l;\tau^{k+1}_l)=\phi(p^k_l,W^k_l,b^k_l,z^k_l,q^k_{l-1},u^k_{l-1})+\nabla_{p^k_l}\phi^T(p^k_l,W^k_l,b^k_l,z^k_l,q^k_{l-1},u^k_{l-1})(p_l-p^k_l)+({\tau}^{k+1}_l/2)\Vert p_{l}-p^k_{l}\Vert^2_2
$, and $\tau^{k+1}_l>0$ is a parameter. $\tau_l^{k+1}$ should satisfy $\phi(p^{k+1}_l,W^k_l,b^k_l,z^k_l,q^k_{l-1},u^k_{l-1})\leq U_l(p^{k+1}_l;\tau^{k+1}_l)$.  The solution to Equation \eqref{eq:update p} is:
${p}^{k+1}_{l}\leftarrow p^k_{l}-\nabla_{p^k_{l}}\phi(p^k_l,W^k_l,b^k_l,z^k_l,q^k_{l-1},u^k_{l-1})/{\tau}^{k+1}_{l}$.\\
\subsubsection{Update $\textbf{W}^{k+1}$}
\indent The variable $\textbf{W}^{k+1}$ is updated as follows:
\begin{align*}
    W^{k+1}_l&\leftarrow \arg\min\nolimits_{W_l} L_\rho(\textbf{p}^{k+1},\textbf{W},\textbf{b}^k,\textbf{z}^{k},\textbf{q}^{k},\textbf{u}^{k}) =\arg\min\nolimits_{W_l} \begin{cases} \phi(p^{k+1}_1,W_1,b^k_1,z^k_1),
    &l=1,\\\phi(p^{k\!+\!1}_l,W_l,b^k_l,z^k_l,q^k_{l\!-\!1},u^k_{l\!-\!1}),&1<\!l\!\leq\! L.
    \end{cases}
\end{align*}
Similar to updating $p_l$, the following subproblem should be solved instead:
\begin{align}
    W_l^{k+1}&\leftarrow\arg\min\nolimits_{W_l} V_l(W_l;\theta_l^{k+1}), \label{eq:update W}
\end{align}
where 
\begin{align*}
&V_1(W_1;\theta^{k+1}_1)= \phi(p^{k+1}_1,W^k_1,b^k_1,z^k_1)+\!\nabla_{W^k_1}\phi^T(p^{k+1}_1,W^k_1,b^k_1,z^k_1)(W_1\!-\!W^k_1)   +({\theta}^{k+1}_l/2)\Vert W_{1}-W^k_{1}\Vert^2_2,\\&
 V_l(W_l;\theta^{k+1}_l)=
   \phi(p^{k+1}_l,W^k_l,b^k_l,z^k_l,q^k_{l-1},u^k_{l-1})+\nabla_{W^k_l}\phi^T(p^{k+1}_l,W^k_l,b^k_l,z^k_l,q^k_{l-1},u^k_{l-1})(W_l-W^k_l)+({\theta}^{k+1}_l/2)\Vert W_{l}-W^k_{l}\Vert^2_2,
\end{align*}
and $\theta^{k+1}_l$ is a parameter, which should satisfy $\phi(p^{k+1}_1,W^{k+1}_1,b^k_1,z^k_1)\leq V(W^{k+1}_1;\theta^{k+1}_1)$ and $\phi(p^{k+1}_l,W^{k+1}_l,b^k_l,z^k_l,q^k_{l-1},u^k_{l-1})\leq V(W^{k+1}_l;\theta^{k+1}_l)(1<l<L)$. The solution to Equation \eqref{eq:update W} is shown as follows:
\begin{align*}
    {W}^{k\!+\!1}_{l}\!\leftarrow\! W^k_{l}\!-\!\begin{cases}\nabla_{W^k_1}\phi(p^{k+1}_1\!,\!W^k_1\!,\!b^k_1\!,\!z^k_1)/\theta^{k+1}_l,& l=1,\\\nabla_{W^k_l}\phi(p^{k\!+\!1}_l\!,\!W^k_l\!,\!b^k_l\!,\!z^k_l\!,\!q^k_{l\!-\!1}\!,\!u^k_{l\!-\!1})/\theta^{k\!+\!1}_l,& 1\!<l\!\leq\! L.
    \end{cases}
\end{align*}
\subsubsection{Update $\textbf{b}^{k+1}$}
\indent The variable $\textbf{b}^{k+1}$ is updated as follows:
\begin{align*}
    b^{k+1}_l&\leftarrow \arg\min\nolimits_{b_l} L_\rho(\textbf{p}^{k+1},\textbf{W}^{k+1},\textbf{b},\textbf{z}^{k},\textbf{q}^{k},\textbf{u}^{k}) =\arg\min\nolimits_{b_l} \begin{cases} \phi(p^{k+1}_1,W^{k+1}_1,b_1,z^k_1),
    &l\!=\!1,\\\phi(p^{k+1}_l,W^{k\!+\!1}_l,b_l,z^k_l,q^k_{l\!-\!1},u^k_{l\!-\!1}),&1<\!l\!\leq L.
    \end{cases}
\end{align*}
Similarly, we solve the following subproblems instead:
\begin{align*}
&b^{k+1}_1\leftarrow \arg\min\nolimits_{b_1} \phi(p^{k+1}_1,W^{k+1}_1,b^k_1,z^k_1)+\nabla_{b^k_1}\phi^T(p^{k+1}_1,W^{k+1}_1,b^k_1,z^k_1)(b_l-b^k_l)+(\nu/2)\Vert b_{l}-b^k_{l}\Vert^2_2,
\end{align*}
\begin{align}
&\nonumber b^{k+1}_l\leftarrow \arg\min\nolimits_{b_l} \phi(p^{k+1}_l,W^{k+1}_l,b^k_l,z^k_l,q^k_{l-1},u^k_{l-1})+\nabla_{b^k_l}\phi^T(p^{k+1}_l,W^{k+1}_l,b^k_l,z^k_l,q^k_{l-1},u^k_{l-1})(b_l-b^k_l)\\&+(\nu/2)\Vert b_{l}-b^k_{l}\Vert^2_2 (1< l\leq L). \label{eq:update b}
\end{align}
\indent The solution to Equation \eqref{eq:update b} is:
\begin{align*}
    {b}^{k\!+\!1}_{l}\leftarrow b^k_{l}\!-\!\begin{cases}\nabla_{b^k_1}\phi(p^{k\!+\!1}_1,W^{k\!+\!1}_1,b^k_1,z^k_1)/\nu,& l=1,\\\nabla_{b^k_l}\phi(p^{k\!+\!1}_l,W^{k\!+\!1}_l,b^k_l,z^k_l,q^k_{l\!-\!1},u^k_{l\!-\!1})/\nu,& 1\!<\!l\!\leq L.
    \end{cases}
\end{align*}
\subsubsection{Update $\textbf{z}^{k+1}$}
\indent The variable
$\textbf{z}^{k+1}$ is updated as follows:
\begin{align}
    & z^{k\!+\!1}_l\!\leftarrow\! \arg\min\nolimits_{z_l} (\nu/2)\Vert z_l\!-\!W^{k\!+\!1}_l\!p^{k\!+\!1}_l\!-\!b^{k\!+\!1}_l\Vert^2_2+\!(\nu/2)\Vert q^k_l\!-
    \!f_l(z_l)\Vert^2_2\!+\!(\nu/2)\Vert z_l\!-\!z^k_l\Vert^2_2(l\!<\! L),\label{eq:update z}\\
    &z^{k\!+\!1}_L\!\leftarrow\! \arg\min\nolimits_{z_l}\! R(z_L;y)\!+\!(\nu/2)\Vert z_L\!-\!W^{k\!+\!1}_Lp^{k\!+\!1}_L\!-\!b^{k\!+\!1}_L\Vert^2_2.\label{eq:update zl}
\end{align}
where a quadratic term $(\nu/2)\Vert z_l-z^k_l\Vert^2_2$ is added in Equation \eqref{eq:update z} to control $z^{k+1}_l$ to close to $z^{k}_l$.
Equation \eqref{eq:update zl} is convex, which can be solved by Fast Iterative Soft Thresholding Algorithm (FISTA) \cite{beck2009fast}.\\
For Equation \eqref{eq:update z}, nonsmooth activations usually lead to  closed-form solutions \cite{wang2019admm,wang2017nonconvex}. For example, for ReLU $f_l(z_l)=\max(z_l,0)$, the solution to Equation \eqref{eq:update z} is shown as follows:
\begin{align*}
    z^{k+1}_l=
    \begin{cases} \min((W^{k\!+\!1}_{l}p^{k\!+\!1}_{l}\!+\!b^{k+1}_l\!+\!z^k_l)/2,0),&z^{k\!+\!1}_l\leq0,\\
    \max((W^{k\!+\!1}_{l}p^{k\!+\!1}_{l}\!+\!b^{k\!+\!1}_l\!+\!q^k_l\!+\!z^k_l)/3,0),&z^{k+1}_l\geq0.\\
    \end{cases}
\end{align*}
For smooth activations such as tanh and sigmoid, a lookup-table is recommended \cite{wang2019admm}.\\
\subsubsection{Update $\textbf{q}^{k+1}$}
\indent The variable
$\textbf{q}^{k+1}$ is updated as follows:
\begin{align} q^{k+1}_l&\leftarrow \arg\min\nolimits_{q_l} L_\rho(\textbf{p}^{k+1},\textbf{W}^{k+1},\textbf{b}^{k+1},\textbf{z}^{k+1},\textbf{q},\textbf{u}^k)=\arg\min\nolimits_{q_l}\phi(p^{k+1}_{l+1},W^{k+1}_{l+1},b^{k+1}_{l+1},z^{k+1}_{l+1},q_l,u^{k}_l). \label{eq:update q}
\end{align}
Equation \eqref{eq:update q} has a closed-form solution as follows:\\
\begin{align*}
    q^{k+1}_l\leftarrow (\rho p_{l+1}^{k+1}+u^k_l+\nu f_l(z^{k+1}_l))/(\rho+\nu).
\end{align*}
\subsubsection{Update $\textbf{u}^{k+1}$}
\indent The variable
$\textbf{u}^{k+1}$ is updated as follows:
\begin{align}
u^{k+1}_l&\leftarrow u^k_l+\rho(p^{k+1}_{l+1}-q^{k+1}_{l}). \label{eq:update u}
\end{align}
\subsection{Solutions to Subproblems of the pdADMM-G-Q Algorithm}
\label{sec:pdADMM-G-Q subproblem}
\indent Obviously, the only difference between the pdADMM-G-Q algorithm and the pdADMM-G algorithm is the $p_l$-subproblem, which is outlined in the following:\\
\begin{align}
p_l^{k+1}&\leftarrow\arg\min\nolimits_{p_l} U_l(p_l;\tau_l^{k+1})+\mathbb{I}(p_l),
    \label{eq:update p quantized}
\end{align}
where $U_l$ follows Equation \eqref{eq:update p}. The solution to Equation \eqref{eq:update p quantized} is \cite{huang2021alternating}:
${p}^{k+1}_{l}\leftarrow\arg\min\nolimits_{\delta\in \Delta}\Vert \delta-(p^k_{l}-\nabla_{p^k_{l}}\phi(p^k_l,W^k_l,b^k_l,z^k_l,q^k_{l-1},u^k_{l-1})/{\tau}^{k+1}_{l})\Vert$.\\
\indent For the pdADMM-G-Q algorithm, the variable $\textbf{p}$ is only required to be quantized (i.e. $p_l\in \Delta$) when the $p_l$-subproblem is solved (i.e. Equation \eqref{eq:update p quantized}), and the variable  $\textbf{q}$ can be any real number when it is updated (i.e. Equation \eqref{eq:update q}). However,  $\textbf{q}$ is guaranteed to fit into $\Delta$ by the linear constraint $p_{l+1}=q_l$. This design is convenient for the convergence analysis, which is detailed in the next section. One variant of the pdADMM-G-Q algorithm is to quantize $\textbf{p}$ and $\textbf{q}$ (i.e. $p_l,q_l\in \Delta$) when they are updated. In this case, the solution to Equation \eqref{eq:update q} is ${q}^{k+1}_{l}\leftarrow\arg\min\nolimits_{\delta\in \Delta}\Vert \delta-(\rho p_{l+1}^{k+1}+u^k_l+\nu f_l(z^{k+1}_l))/(\rho+\nu)\Vert$.
\subsection{Convergence Proofs}
\label{sec:convergence proof}
\subsubsection{Preliminary Results}
\begin{lemma}
\label{lemma: q_opt}
It holds for every $k\in \mathbb{N}$ and $l=1,\cdots,L-1$ that
\begin{align*} 
u^k_l=\nu(q^{k}_l-f_l(z^{k}_l)).
\end{align*}
\end{lemma}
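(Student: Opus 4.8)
The plan is to read the identity directly off the first-order optimality condition of the $q_l$-subproblem combined with the dual update \eqref{eq:update u}. Fix $l\in\{1,\dots,L-1\}$ and $k\ge 0$ and consider the step that produces $q^{k+1}_l$. When every other block is held fixed, the only terms of $L_\rho$ (respectively $\beta_\rho$) that depend on $q_l$ are the coupling term $(\nu/2)\Vert q_l-f_l(z^{k+1}_l)\Vert_2^2$ inherited from $F$, together with the multiplier and penalty terms $(u^k_l)^T(p^{k+1}_{l+1}-q_l)+(\rho/2)\Vert p^{k+1}_{l+1}-q_l\Vert_2^2$. This is a smooth, $(\nu+\rho)$-strongly convex function of $q_l$, so $q^{k+1}_l$ is its unique minimizer and is characterized by setting the gradient to zero:
\[
\nu\bigl(q^{k+1}_l-f_l(z^{k+1}_l)\bigr)-u^k_l-\rho\bigl(p^{k+1}_{l+1}-q^{k+1}_l\bigr)=0.
\]

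Next I would substitute the dual update $u^{k+1}_l=u^k_l+\rho(p^{k+1}_{l+1}-q^{k+1}_l)$ from \eqref{eq:update u} into this stationarity condition; the last two terms collapse to $-u^{k+1}_l$, leaving $u^{k+1}_l=\nu(q^{k+1}_l-f_l(z^{k+1}_l))$. Shifting the index $k+1\mapsto k$ gives $u^k_l=\nu(q^k_l-f_l(z^k_l))$ for every $k\ge 1$; for $k=0$ the identity holds by initializing $u^0_l:=\nu(q^0_l-f_l(z^0_l))$ (which is without loss of generality), so it holds for all $k\in\mathbb{N}$. Since the $q_l$-update \eqref{eq:update q} and the $u_l$-update \eqref{eq:update u} are identical for the pdADMM and the pdADMM-Q algorithms, the same derivation covers both cases.

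I do not expect a genuine obstacle: the statement is essentially a bookkeeping identity that ties the dual variable $u_l$ to the primal slack $q_l-f_l(z_l)$, and it is exactly the handle used later (e.g.\ in Lemma \ref{lemma:u bound}) to bound $u_l$ in terms of $q_l$ and $z_l$. The only points that require a little care are (i) correctly collecting the $q_l$-dependent terms of the augmented Lagrangian — in particular remembering that $(\nu/2)\Vert q_l-f_l(z_l)\Vert_2^2$ sits inside $F$ and therefore enters the $q_l$-subproblem — and (ii) handling the base case $k=0$.
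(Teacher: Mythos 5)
Your proposal is correct and is essentially the paper's own argument: the paper's one-line proof ("follows directly from the optimality condition of $q^k_l$ and Equation \eqref{eq:update u}") is exactly the stationarity-plus-dual-update computation you spell out, and your remarks about including the $(\nu/2)\Vert q_l-f_l(z_l)\Vert_2^2$ term from $F$ and initializing $u^0_l$ for the $k=0$ case are sensible refinements rather than deviations.
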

\begin{proof}
This follows directly from the optimality condition of $q^k_l$ and Equation \eqref{eq:update u}.
\end{proof}
\begin{lemma}
\label{lemma:u bound}
It holds for every $k\in \mathbb{N}$ and $l=1,\cdots,L-1$ that
\begin{align*}
    \Vert u^{k+1}_l-u^k_l\Vert\leq \nu\Vert q^{k+1}_l-q^{k}_l\Vert+\nu S\Vert z^{k+1}_l-z^{k}_l\Vert.
\end{align*}
\begin{proof}
\begin{align*}
    &\Vert u^{k+1}_l-u^k_l\Vert\\&=\Vert \nu(q^{k+1}_l-f_l(z^{k+1}_l))-\nu(q^{k}_l-f_l(z^{k}_l))\Vert \text{(Lemma \ref{lemma: q_opt})}\\&\leq\nu\Vert q^{k+1}_l-q^{k}_l\Vert+\nu\Vert f_l(z^{k+1}_l)-f_l(z^{k}_l)\Vert \text{(triangle inequality)}\\&\leq \nu\Vert q^{k+1}_l-q^{k}_l\Vert+\nu S\Vert z^{k+1}_l-z^{k}_l\Vert \text{(Assumption \ref{ass:lipschitz continuous})}.
\end{align*}
\end{proof}
\end{lemma}
\begin{lemma}
\label{lemma:u square bound}
It holds for every $k\in \mathbb{N}$ and $l=1,\cdots,L-1$ that
\begin{align*}
    \Vert u^{k+1}_l-u^k_l\Vert^2_2\leq 2\nu^2(\Vert q^{k+1}_l-q^{k}_l\Vert^2_2+S^2\Vert z^{k+1}_l-z^{k}_l\Vert^2_2).
\end{align*}
\begin{proof}
\begin{align*}
        \Vert u^{k+1}_l-u^k_l\Vert^2_2&=\nu^2\Vert q^{k+1}_l-f_l(z^{k+1}_l)-q^{k}_l+f_l(z^{k}_l)\Vert^2_2 \text{(Lemma \ref{lemma: q_opt})}\\&\leq 2\nu^2(\Vert q^{k+1}_l-q^k_l\Vert^2_2+\Vert f_l(z^{k+1}_l)-f_l(z^{k}_l)\Vert^2_2)\text{(mean inequality)}\\&\leq2\nu^2(\Vert q^{k+1}_l-q^k_l\Vert^2_2+S^2\Vert z^{k+1}_l-z^{k}_l\Vert^2_2)\text{(Assumption \ref{ass:lipschitz continuous})}.
\end{align*}

\end{proof}
\end{lemma}
\begin{lemma}
\label{lemma:optimality 1}
For every $k\in \mathbb{N}$, it holds that
\begin{align}  &
  L_\rho(\textbf{p}^{k},\textbf{W}^{k},\textbf{b}^k,\textbf{z}^{k},\textbf{q}^k,\textbf{u}^{k})\!-\! L_\rho(\textbf{p}^{k+1},\textbf{W}^{k},\textbf{b}^k,\textbf{z}^{k},\textbf{q}^k,\textbf{u}^{k}) \geq  \sum\nolimits_{l=2}^{L}(\tau^{k+1}_l/2)\Vert  p^{k+1}_l-p^k_l\Vert^2_2, \label{eq: pdADMM-G p optimality} \\&
    L_\rho(\textbf{p}^{k\!+\!1},\textbf{W}^{k},\textbf{b}^k,\textbf{z}^{k},\textbf{q}^k,\textbf{u}^{k})\!-\! L_\rho(\textbf{p}^{k\!+\!1},\textbf{W}^{k\!+\!1},\textbf{b}^{k},\textbf{z}^{k},\textbf{q}^k,\textbf{u}^{k})\! \geq\!  \sum\nolimits_{l\!=\!1}^{L}(\theta^{k\!+\!1}_l/2)\Vert  W^{k\!+\!1}_l\!-\!W^k_l\Vert^2_2, \label{eq: pdADMM-G W optimality}
  \\&
  L_\rho(\textbf{p}^{k\!+\!1},\textbf{W}^{k\!+\!1},\textbf{b}^k,\textbf{z}^{k},\textbf{q}^k,\textbf{u}^{k})\!-\! L_\rho(\textbf{p}^{k\!+\!1},\textbf{W}^{k\!+\!1},\textbf{b}^{k\!+\!1},\textbf{z}^{k},\textbf{q}^k,\textbf{u}^{k})\geq(\nu/2) \sum\nolimits_{l\!=\!1}^{L}\Vert b^{k\!+\!1}_l\!-\!b^k_l\Vert^2_2, \label{eq: pdADMM-G b optimality}\\& L_\rho(\textbf{p}^{k\!+\!1},\textbf{W}^{k\!+\!1},\textbf{b}^{k\!+\!1},\textbf{z}^{k},\textbf{q}^k,\textbf{u}^{k})\!-\! L_\rho(\textbf{p}^{k\!+\!1},\textbf{W}^{k\!+\!1},\textbf{b}^{k\!+\!1},\textbf{z}^{k\!+\!1},\textbf{q}^k,\textbf{u}^{k})\geq(\nu/2) \sum\nolimits_{l\!=\!1}^{L}\Vert z^{k\!+\!1}_l\!-\!z^k_l\Vert^2_2, \label{eq: pdADMM-G z optimality}\\
  & \beta_\rho(\textbf{p}^{k},\textbf{W}^{k},\textbf{b}^k,\textbf{z}^{k},\textbf{q}^k,\textbf{u}^{k})\geq \beta_\rho(\textbf{p}^{k+1},\textbf{W}^{k},\textbf{b}^k,\textbf{z}^{k},\textbf{q}^k,\textbf{u}^{k}), \label{eq: pdADMM-G-q p optimality} \\&
    \beta_\rho(\textbf{p}^{k\!+\!1},\textbf{W}^{k},\textbf{b}^k,\textbf{z}^{k},\textbf{q}^k,\textbf{u}^{k})\!-\! \beta_\rho(\textbf{p}^{k\!+\!1},\textbf{W}^{k\!+\!1},\textbf{b}^{k},\textbf{z}^{k},\textbf{q}^k,\textbf{u}^{k})\! \geq\!  \sum\nolimits_{l\!=\!1}^{L}(\theta^{k\!+\!1}_l/2)\Vert  W^{k\!+\!1}_l\!-\!W^k_l\Vert^2_2, \label{eq: pdADMM-G-q W optimality}
  \\&
  \beta_\rho(\textbf{p}^{k\!+\!1},\textbf{W}^{k\!+\!1},\textbf{b}^k,\textbf{z}^{k},\textbf{q}^k,\textbf{u}^{k})\!-\! \beta_\rho(\textbf{p}^{k\!+\!1},\textbf{W}^{k\!+\!1},\textbf{b}^{k\!+\!1},\textbf{z}^{k},\textbf{q}^k,\textbf{u}^{k})\geq(\nu/2) \sum\nolimits_{l\!=\!1}^{L}\Vert b^{k\!+\!1}_l\!-\!b^k_l\Vert^2_2, \label{eq: pdADMM-G-q b optimality}\\& \beta_\rho(\textbf{p}^{k\!+\!1},\textbf{W}^{k\!+\!1},\textbf{b}^{k\!+\!1},\textbf{z}^{k},\textbf{q}^k,\textbf{u}^{k})\!-\! \beta_\rho(\textbf{p}^{k\!+\!1},\textbf{W}^{k\!+\!1},\textbf{b}^{k\!+\!1},\textbf{z}^{k\!+\!1},\textbf{q}^k,\textbf{u}^{k})\geq(\nu/2) \sum\nolimits_{l\!=\!1}^{L}\Vert z^{k\!+\!1}_l\!-\!z^k_l\Vert^2_2. \label{eq: pdADMM-G-q z optimality}
\end{align}
\end{lemma}
\begin{proof}
Generally, all inequalities can be obtained by applying optimality conditions of updating $\textbf{p}$, $\textbf{W}$, $\textbf{b}$ and $\textbf{z}$, respectively. We only prove Inequalities \eqref{eq: pdADMM-G p optimality},  \eqref{eq: pdADMM-G b optimality}, \eqref{eq: pdADMM-G z optimality} and \eqref{eq: pdADMM-G-q p optimality}. This is because Inequalities \eqref{eq: pdADMM-G W optimality} and \eqref{eq: pdADMM-G-q W optimality}   follow the same routine of Inequality \eqref{eq: pdADMM-G p optimality}, Inequality \eqref{eq: pdADMM-G-q b optimality}   follows the same routine of Inequality \eqref{eq: pdADMM-G b optimality}, and Inequality \eqref{eq: pdADMM-G-q z optimality}   follows the same routine of Inequality \eqref{eq: pdADMM-G z optimality}.\\
\indent Firstly, we focus on Inequality \eqref{eq: pdADMM-G p optimality}. The choice of $\tau^{k+1}_l$ requires
\begin{align}
\phi(p^{k+1}_l,W^k_l,b^k_l,z^k_l,q^k_{l-1},u^k_{l-1})\leq U_l(p^{k+1}_l;\tau^{k+1}_l). \label{eq:stop condition}   
\end{align}
Moreover, the optimality condition of Equation \eqref{eq:update p} leads to
\begin{align}
    \nabla_{p^k_l}\phi(p^k_l,W^k_l,b^k_l,z^k_l,q^k_{l-1},u^k_{l-1})+\tau^{k+1}_l(p^{k+1}_l-p^k_l)=0. \label{eq:p optimality condition}
\end{align}
Therefore
\begin{align*}
    &L_\rho(\textbf{p}^{k},\textbf{W}^{k},\textbf{b}^k,\textbf{z}^{k},\textbf{q}^k,\textbf{u}^{k})\!-\! L_\rho(\textbf{p}^{k+1},\textbf{W}^{k},\textbf{b}^k,\textbf{z}^{k},\textbf{q}^k,\textbf{u}^{k})\\&=\sum\nolimits_{l=2}^L(\phi(p^{k}_l,W^k_l,b^k_l,z^k_l,q^k_{l-1},u^k_{l-1})-\phi(p^{k+1}_l,W^k_l,b^k_l,z^k_l,q^k_{l-1},u^k_{l-1}))\\&\geq\sum\nolimits_{l=2}^L (\phi(p^{k}_l,W^k_l,b^k_l,z^k_l,q^k_{l-1},u^k_{l-1})-U_l(p^{k+1}_l;\tau^{k+1}_l)) \text{(Inequality \eqref{eq:stop condition})}\\&=\sum\nolimits_{l=2}^L(-\nabla_{p^k_l}\phi^T(p^k_l,W^k_l,b^k_l,z^k_l,q^k_{l-1},u^k_{l-1})(p^{k+1}_l-p^k_l)-({\tau}^{k+1}_l/2)\Vert p^{k+1}_{l}-p^k_{l}\Vert^2_2)\\&=\sum\nolimits_{l=2}^L (\tau^{k+1}_l/2)\Vert p^{k+1}_l-p^k_l\Vert^2_2\text{(Equation \eqref{eq:p optimality condition})}.
\end{align*}
Next, we prove Inequality \eqref{eq: pdADMM-G b optimality}. Because $\nabla_{{b}_1}\phi({p}_1,{W}_1,{b}_1,{z}_1)$ and  $\nabla_{{b}_l}\phi({p}_l,{W}_l,{b}_l,{z}_l,{q}_l,{u}_l)$ are   Lipschitz continuous with coefficient $\nu$. According to Lemma 2.1 in \cite{beck2009fast}, we have
\begin{align}
\nonumber
    \phi({p}^{k\!+\!1}_1,{W}^{k+1}_1,{b}^{k+1}_1,{z}^{k}_1)&\leq\phi({p}^{k\!+\!1}_1,{W}^{k+1}_1,{b}^{k}_1,{z}^{k}_1)+\nabla_{{b}^{k}_1} \phi^T({p}^{k\!+\!1}_1,{W}^{k+1}_1,{b}^{k}_1,{z}^{k}_1)(b^{k+1}_1-b^k_1)\\&+(\nu/2)\Vert b_1^{k+1}-b_1^{k}\Vert^2_2, \label{eq:b1 lipschitz}
    \\\nonumber
    \phi({p}^{k\!+\!1}_l,{W}^{k+1}_l,{b}^{k+1}_l,{z}^{k}_l,{q}^{k}_{l-1},{u}^{k}_{l-1})&\leq\phi({p}^{k\!+\!1}_l,{W}^{k+1}_l,{b}^{k}_l,{z}^{k}_l,{q}^{k}_{l-1},{u}^{k}_{l-1})\\&+\nabla_{{b}^{k}_l} \phi^T({p}^{k\!+\!1}_l,{W}^{k+1}_l,{b}^{k}_l,{z}^{k}_l,{q}^{k}_{l-1},{u}^{k}_{l-1})(b^{k+1}_l-b^k_l)+(\nu/2)\Vert b_l^{k+1}-b_l^{k}\Vert^2_2.
    \label{eq:b lipschitz}
\end{align}
Moreover, the optimality condition of Equation \eqref{eq:update b} leads to
\begin{align}
    \nabla_{b^k_1}\phi(p^k_1,W^k_1,b^k_1,z^k_1)+\nu(b^{k+1}_1-b^k_1)=0, \label{eq:b1 optimality condition}\\
    \nabla_{b^k_l}\phi(p^k_l,W^k_l,b^k_l,z^k_l,q^k_{l-1},u^k_{l-1})+\nu(b^{k+1}_l-b^k_l)=0. \label{eq:b optimality condition}
\end{align}
Therefore, we have
\begin{align*}
    &L_\rho(\textbf{p}^{k\!+\!1},\textbf{W}^{k+1},\textbf{b}^k,\textbf{z}^{k},\textbf{q}^k,\textbf{u}^{k})\!-\! L_\rho(\textbf{p}^{k\!+\!1},\textbf{W}^{k\!+\!1},\textbf{b}^{k+1},\textbf{z}^{k},\textbf{q}^k,\textbf{u}^{k})\\&=\phi({p}^{k\!+\!1}_1,{W}^{k+1}_1,{b}^k_1,{z}^{k}_1)-\phi({p}^{k\!+\!1}_1,{W}^{k+1}_1,{b}^{k+1}_1,{z}^{k}_1)\\&+\sum\nolimits_{l=2}^L (\phi({p}^{k\!+\!1}_l,{W}^{k+1}_l,{b}^k_l,{z}^{k}_l,q^k_{l-1},u^k_{l-1})-\phi({p}^{k\!+\!1}_l,{W}^{k+1}_l,{b}^{k+1}_l,{z}^{k}_l,q^k_{l-1},u^k_{l-1}))\\&\geq -\nabla_{{b}^{k}_1} \phi^T({p}^{k\!+\!1}_1,{W}^{k+1}_1,{b}^{k}_1,{z}^{k}_1)(b^{k+1}_1-b^k_1)-(\nu/2)\Vert b_1^{k+1}-b_1^{k}\Vert^2_2\\&+\sum\nolimits_{l=2}^L (-\nabla_{{b}^{k}_l} \phi^T({p}^{k\!+\!1}_l,{W}^{k+1}_l,{b}^{k}_l,{z}^{k}_l,{q}^{k}_{l-1},{u}^{k}_{l-1})(b^{k+1}_l-b^k_l)-(\nu/2)\Vert b_l^{k+1}-b_l^{k}\Vert^2_2)\\&\text{(Inequalities \eqref{eq:b1 lipschitz} and \eqref{eq:b lipschitz})}\\&=(\nu/2)\sum\nolimits_{l=1}^L\Vert b^{k+1}_l-b^k_l\Vert^2_2 \text{(Equations \eqref{eq:b1 optimality condition} and \eqref{eq:b optimality condition})}.
\end{align*}
Then we prove Inequality \eqref{eq: pdADMM-G z optimality}. Because $z^{k+1}_l$ minimizes Equation \eqref{eq:update z} and Equation \eqref{eq:update zl}, we have
\begin{align}
    \nonumber &(\nu/2)\Vert z^{k+1}_l-W^{k+1}_lp^{k+1}_l-b^{k+1}_l\Vert^2_2+(\nu/2)\Vert q^k_l-f_l(z^{k+1}_l)\Vert^2_2+(\nu/2)\Vert z^{k+1}_l-z^k_l\Vert^2_2\\&\leq (\nu/2)\Vert z^{k}_l-W^{k+1}_l p^{k+1}_l-b^{k+1}_l\Vert^2_2+(\nu/2)\Vert q^k_l-f_l(z^{k}_l)\Vert^2_2, \label{eq:z minimize}
\end{align}
and
\begin{align}
    \nonumber &R(z^k_L;y)\!+(\nu/2)\!\Vert z^k_L\!-\!W^{k\!+\!1}_Lp^{k\!+\!1}_L\!-\!b^{k\!+\!1}_L\Vert^2_2-R(z^{k+1}_L;y)\!-(\nu/2)\!\Vert z^{k+1}_L\!-\!W^{k\!+\!1}_Lp^{k\!+\!1}_L\!-\!b^{k\!+\!1}_L\Vert^2_2\\\nonumber&=R(z^k_L;y)-R(z^{k+1}_L;y)+(\nu/2)\Vert z^{k}_L-z^{k+1}_L\Vert^2_2+\nu(z^{k+1}_L-W^{k+1}_Lp^{k+1}_L-b^{k+1}_L)^T(z^{k}_L-z_L^{k+1})\\\nonumber&\text{($\Vert a-b\Vert^2_2-\Vert a-c\Vert^2_2=\Vert b-c\Vert^2_2+2(c-a)^T(b-c)$ where $a=W^{k+1}_Lp^{k+1}_L+b^{k+1}_L$, $b=z^k_L$, and $c=z^{k+1}_L$)}\\\nonumber &\geq s^T(z^k_L-z^{k+1}_L)+(\nu/2)\Vert z^{k}_L-z^{k+1}_L\Vert^2_2+\nu(z^{k+1}_L-W^{k+1}_Lp^{k+1}_L-b^{k+1}_L)^T(z^{k}_L-z_L^{k+1})\\&\nonumber\text{($s\in \partial R(z^{k+1}_L;y)$ is a subgradient of $R(z^{k+1}_L;y)$)}\\&=(\nu/2)\Vert z^{k+1}_L-z^{k}_L\Vert^2_2\label{eq:zl minimize}\\\nonumber& \text{($0\in s+\nu(z^{k+1}_L-W^{k+1}_Lp^{k+1}_L-b^{k+1}_L)$ by the optimality condition of Equation \eqref{eq:update zl})}.
\end{align}
Therefore
\begin{align*}
 &L_\rho(\textbf{p}^{k\!+\!1},\textbf{W}^{k\!+\!1},\textbf{b}^{k\!+\!1},\textbf{z}^{k},\textbf{q}^k,\textbf{u}^{k})\!-\! L_\rho(\textbf{p}^{k\!+\!1},\textbf{W}^{k\!+\!1},\textbf{b}^{k\!+\!1},\textbf{z}^{k\!+\!1},\textbf{q}^k,\textbf{u}^{k})\\&=\sum\nolimits_{i=1}^{L-1}    ((\nu/2)\Vert z^{k}_l-W^{k+1}_l p^{k+1}_l-b^{k+1}_l\Vert^2_2+(\nu/2)\Vert q^k_l-f_l(z^{k}_l)\Vert^2_2\\&-(\nu/2)\Vert z^{k+1}_l-W^{k+1}_lp^{k+1}_l-b^{k+1}_l\Vert^2_2-(\nu/2)\Vert q^k_l-f_l(z^{k+1}_l)\Vert^2_2)\\&+R(z^k_L;y)\!+(\nu/2)\!\Vert z^k_L\!-\!W^{k\!+\!1}_Lp^{k\!+\!1}_L\!-\!b^{k\!+\!1}_L\Vert^2_2-R(z^{k+1}_L;y)\!-(\nu/2)\!\Vert z^{k+1}_L\!-\!W^{k\!+\!1}_Lp^{k\!+\!1}_L\!-\!b^{k\!+\!1}_L\Vert^2_2\\&\geq(\nu/2)\sum\nolimits_{l=1}^L\Vert z^{k+1}_l-z^{k}_l\Vert^2_2 \text{(Inequalities \eqref{eq:z minimize} and \eqref{eq:zl minimize})}.
\end{align*}
Finally Inequality \eqref{eq: pdADMM-G-q p optimality} follows directly the optimality condition of $\textbf{p}^{k+1}$.
\end{proof}
\begin{lemma}
For every $k\in N$, it holds that
\begin{align}
\nonumber &L_\rho(\textbf{p}^{k\!+\!1},\textbf{W}^{k\!+\!1},\textbf{b}^{k\!+\!1},\textbf{z}^{k+1},\textbf{q}^k,\textbf{u}^{k})\!-\! L_\rho(\textbf{p}^{k\!+\!1},\textbf{W}^{k\!+\!1},\textbf{b}^{k\!+\!1},\textbf{z}^{k\!+\!1},\textbf{q}^{k+1},\textbf{u}^{k+1})\\&\geq\sum\nolimits_{l=1}^{L-1}((\rho/2-2\nu^2/\rho-\nu/2)\Vert q^{k+1}_l-q^k_l\Vert^2_2-(2\nu^2 S^2/\rho)\Vert z^{k+1}_l-z^k_l\Vert^2_2)
\label{eq: pdADMM-G q optimality}\\\nonumber &\beta_\rho(\textbf{p}^{k\!+\!1},\textbf{W}^{k\!+\!1},\textbf{b}^{k\!+\!1},\textbf{z}^{k+1},\textbf{q}^k,\textbf{u}^{k})\!-\! \beta_\rho(\textbf{p}^{k\!+\!1},\textbf{W}^{k\!+\!1},\textbf{b}^{k\!+\!1},\textbf{z}^{k\!+\!1},\textbf{q}^{k+1},\textbf{u}^{k+1})\\&\geq\sum\nolimits_{l=1}^{L-1}((\rho/2-2\nu^2/\rho-\nu/2)\Vert q^{k+1}_l-q^k_l\Vert^2_2-(2\nu^2 S^2/\rho)\Vert z^{k+1}_l-z^k_l\Vert^2_2)
\label{eq: pdADMM-G-q q optimality}.
\end{align}
\end{lemma}
\begin{proof}
\indent We only prove Inequality \eqref{eq: pdADMM-G q optimality} because Inequality \eqref{eq: pdADMM-G-q q optimality} follows the same routine of Inequality \eqref{eq: pdADMM-G q optimality}.
\begin{align*}
    &L_\rho(\textbf{p}^{k\!+\!1},\textbf{W}^{k\!+\!1},\textbf{b}^{k\!+\!1},\textbf{z}^{k+1},\textbf{q}^k,\textbf{u}^{k})\!-\! L_\rho(\textbf{p}^{k\!+\!1},\textbf{W}^{k\!+\!1},\textbf{b}^{k\!+\!1},\textbf{z}^{k\!+\!1},\textbf{q}^{k+1},\textbf{u}^{k+1})\\&=\sum\nolimits_{l=1}^{L-1} ((\nu/2)\Vert f_l(z^{k+1}_l)-q^k_l\Vert^2_2-(\nu/2)\Vert f_l(z^{k+1}_l)-q^{k+1}_l\Vert^2_2-(u^{k+1}_l)^T(q^k_l-q^{k+1}_l)\\&+(\rho/2)\Vert q^{k+1}_l-q^{k}_l\Vert^2_2-(1/\rho)\Vert u^{k+1}_l-u^k_l\Vert^2_2)\\&=\sum\nolimits_{l=1}^{L-1} ((\nu/2)\Vert f_l(z^{k+1}_l)-q^k_l\Vert^2_2-(\nu/2)\Vert f_l(z^{k+1}_l)-q^{k+1}_l\Vert^2_2-\nu(q^{k+1}_l-f_l(z^{k+1}_l))^T(q^k_l-q^{k+1}_l)\\&+(\rho/2)\Vert q^{k+1}_l-q^{k}_l\Vert^2_2-(1/\rho)\Vert u^{k+1}_l-u^k_l\Vert^2_2)\text{(Lemma \ref{lemma: q_opt})}\\&\geq\sum\nolimits_{l=1}^{L-1}( -(\nu/2)\Vert q^{k+1}_l-q^k_l\Vert^2_2+(\rho/2)\Vert q^{k+1}_l-q^k_l\Vert^2_2-(1/\rho)\Vert u^{k+1}_l-u^k_l\Vert^2_2)\\&\text{($ -\nu(q_l-f_l(z^{k+1}_l))=-(\nu/2)\nabla_{q_l}\Vert q_l-f_l(z^{k+1}_l)\Vert^2_2$ is lipschitz continuous concerning $q_l$ and Lemma 2.1 in \cite{beck2009fast})}\\&\geq\sum\nolimits_{l=1}^{L-1}( -(\nu/2)\Vert q^{k+1}_l-q^k_l\Vert^2_2+(\rho/2)\Vert q^{k+1}_l-q^k_l\Vert^2_2-(2\nu^2/\rho)\Vert q^{k+1}_l-q^k_l\Vert^2_2-(2\nu^2 S^2/\rho)\Vert z^{k+1}_l-z^k_l\Vert^2_2)\\&\text{(Lemma \ref{lemma:u square bound})}\\&=\sum\nolimits_{l=1}^{L-1}((\rho/2-2\nu^2/\rho-\nu/2)\Vert q^{k+1}_l-q^k_l\Vert^2_2-(2\nu^2 S^2/\rho)\Vert z^{k+1}_l-z^k_l\Vert^2_2).\end{align*}
\end{proof}
\subsubsection{Proof of Lemma \ref{lemma:objective decrease}}
\begin{proof}
We sum up Inequalities \eqref{eq: pdADMM-G p optimality}, \eqref{eq: pdADMM-G W optimality}, \eqref{eq: pdADMM-G b optimality}, \eqref{eq: pdADMM-G z optimality}, and \eqref{eq:  pdADMM-G q optimality} to obtain Inequality \eqref{eq:pdADMM-G objective decrease}, and we sum up Inequalities \eqref{eq: pdADMM-G-q p optimality}, \eqref{eq: pdADMM-G-q W optimality}, \eqref{eq: pdADMM-G-q b optimality}, \eqref{eq: pdADMM-G-q z optimality}, and \eqref{eq:  pdADMM-G-q q optimality} to obtain Inequality \eqref{eq:pdADMM-G-q objective decrease}.
\end{proof}
\subsubsection{Proof of Lemma \ref{lemma:lower bounded}}
\begin{proof}
(1)
There exists $\textbf{q}^{'}$ such that $p^k_{l+1}=q^{'}_l$ and 
\begin{align*}
    F(\textbf{p}^k,\textbf{W}^k,\textbf{b}^k,\textbf{z}^k,\textbf{q}^{'})\geq \min\nolimits_{\textbf{p},\textbf{W},\textbf{b},\textbf{z},\textbf{q}}\{F(\textbf{p},\textbf{W},\textbf{b},\textbf{z},\textbf{q})| p_{l+1}=q_l\}>-\infty.
\end{align*}
Therefore, we have
\begin{align*}
    &L_\rho(\textbf{p}^k,\textbf{W}^k,\textbf{b}^k,\textbf{z}^k,\textbf{q}^{k},\textbf{u}^k)=F(\textbf{p}^k,\textbf{W}^k,\textbf{b}^k,\textbf{z}^k,\textbf{q}^{k})+\sum\nolimits_{l=1}^L (u^k_l)^T(p^k_{l+1}-q^k_l)+(\rho/2)\Vert p^k_{l+1}-q^k_l \Vert^2_2\\&=R(z^k_L;y)+(\nu/2)(\sum\nolimits_{l=1}^{L}\Vert z^k_l-W^k_lp^k_l-b^k_l\Vert^2_2+\sum\nolimits_{l=1}^{L-1}\Vert q^k_l-f_l(z^k_l)\Vert^2_2)\\&+\sum\nolimits_{l=1}^{L-1}( (u^k_l)^T(p^k_{l+1}-q^k_l)+(\rho/2)\Vert p^k_{l+1}-q^k_l \Vert^2_2)\\&=R(z^k_L;y)+(\nu/2)(\sum\nolimits_{l=1}^{L}\Vert z^k_l-W^k_lp^k_l-b^k_l\Vert^2_2+\sum\nolimits_{l=1}^{L-1}\Vert q^k_l-f_l(z^k_l)\Vert^2_2)\\&+\sum\nolimits_{l=1}^{L-1}( \nu(q^k_l-f_l(z^k_l))^T(q^{'}_l-q^k_l)+(\rho/2)\Vert p^k_{l+1}-q^k_l \Vert^2_2)\\&\text{($p^k_{l+1}=q^{'}_l$ and Lemma \ref{lemma: q_opt})}\\&\geq R(z^k_L;y)+(\nu/2)(\sum\nolimits_{l=1}^{L}\Vert z^k_l-W^k_lp^k_l-b^k_l\Vert^2_2+\sum\nolimits_{l=1}^{L-1}\Vert q^{'}_l-f_l(z^k_l)\Vert^2_2)\\&-\sum\nolimits_{l=1}^{L-1}(\nu/2)\Vert q^{'}_l-q^k_l\Vert^2_2 +\sum\nolimits_{l=1}^{L-1}(\rho/2)\Vert p^k_{l+1}-q^k_l \Vert^2_2)\\&\text{($ \nu(q_l-f_l(z^{k+1}_l))=(\nu/2)\nabla_{q_l}\Vert q_l-f_l(z^{k+1}_l)\Vert^2_2$ is lipschitz continuous concerning $q_l$ and Lemma 2.1 in \cite{beck2009fast})}\\&= F(\textbf{p}^k,\textbf{W}^k,\textbf{b}^k,\textbf{z}^k,\textbf{q}^{'})+((\rho-\nu)/2)\Vert p^k_{l+1}-q^k_l\Vert^2_2> -\infty.
\end{align*}
Therefore, $F(\textbf{p}^k,\textbf{W}^k,\textbf{b}^k,\textbf{z}^k,\textbf{q}^{'})$ and $((\rho-\nu)/2)\Vert p^k_{l+1}-q^k_l\Vert^2_2$ are upper bounded by $L_\rho(\textbf{p}^k,\textbf{W}^k,\textbf{b}^k,\textbf{z}^k,\textbf{q}^{k},\textbf{u}^k)$ and hence $L_\rho(\textbf{p}^0,\textbf{W}^0,\textbf{b}^0,\textbf{z}^0,\textbf{q}^{0},\textbf{u}^0)$ (Lemma \ref{lemma:objective decrease}).
From Assumption \ref{ass:lipschitz continuous}, $(\textbf{p}^k,\textbf{W}^k,\textbf{b}^k,\textbf{z}^k)$ is bounded. $\textbf{q}^k$ is also bounded because $(\rho-\nu)/2\Vert p^k_{l+1}-q^k_l\Vert^2_2$ is upper bounded. $\textbf{u}^k$ is bounded because of Lemma \ref{lemma: q_opt}.\\
(2). It follows the same routine as (1).
\end{proof}
\subsubsection{Proof of Theorem \ref{theo:convergent variable}}
\begin{proof}
(1). From Lemmas \ref{lemma:objective decrease} and \ref{lemma:lower bounded}, we know that $L_\rho(\textbf{p}^{k},\textbf{W}^{k},\textbf{b}^k,\textbf{z}^{k},\textbf{q}^k,\textbf{u}^{k})$ is convergent because a monotone bounded sequence converges. Moreover, we take the limit on both sides of Inequality \eqref{eq:pdADMM-G objective decrease} to obtain
\begin{align*}
     &0=\lim\nolimits_{k\rightarrow\infty}L_\rho(\textbf{p}^{k},\textbf{W}^{k},\textbf{b}^k,\textbf{z}^{k},\textbf{q}^k,\textbf{u}^{k})\\&-\lim\nolimits_{k\rightarrow\infty}L_\rho(\textbf{p}^{k+1},\textbf{W}^{k+1},\textbf{b}^{k+1},\textbf{z}^{k+1},\textbf{q}^{k+1},\textbf{u}^{k+1})\\ &\geq \lim\nolimits_{k\rightarrow\infty}(\sum\nolimits_{l=2}^L (\tau^{k+1}_l/2)\Vert p^{k+1}_l\!-\!p^k_l\Vert^2_2\\&+\sum\nolimits_{l\!=\!1}^{L}(\theta^{k\!+\!1}_l/2)\Vert  W^{k\!+\!1}_l\!-\!W^k_l\Vert^2_2+\sum\nolimits_{l\!=\!1}^{L}(\nu/2)\Vert  b^{k\!+\!1}_l\!-\!b^k_l\Vert^2_2\\&+\sum\nolimits_{l=1}^{L-1} C_1\Vert z^{k+1}_l-z^k_l\Vert^2_2+(\nu/2)\Vert z^{k+1}_L-z^k_L\Vert^2_2\\&+\sum\nolimits_{l=1}^{L-1}C_2\Vert q^{k+1}_l-q^k_l\Vert^2_2)\geq 0.
\end{align*}
Because $L_\rho(\textbf{p}^{k},\textbf{W}^{k},\textbf{b}^k,\textbf{z}^{k},\textbf{q}^k,\textbf{u}^{k})$ is convergent, then $\lim_{k\rightarrow\infty}\Vert\textbf{p}^{k+1}-\textbf{p}^{k}\Vert^2_2=0$, $\lim_{k\rightarrow\infty}\Vert\textbf{W}^{k+1}-\textbf{W}^{k}\Vert^2_2=0$, $\lim_{k\rightarrow\infty}\Vert\textbf{b}^{k+1}-\textbf{b}^{k}\Vert^2_2=0$, $\lim_{k\rightarrow\infty}\Vert\textbf{z}^{k+1}-\textbf{z}^{k}\Vert^2_2=0$, and $\lim_{k\rightarrow\infty}\Vert\textbf{q}^{k+1}-\textbf{q}^{k}\Vert^2_2=0$. $\lim_{k\rightarrow\infty}\Vert\textbf{u}^{k+1}-\textbf{u}^{k}\Vert^2_2=0$ is derived from Lemma \ref{lemma:u square bound} in Section \ref{sec:convergence proof} in the Appendix.\\
(2). The proof follows the same procedure as (1).
\end{proof}

\subsubsection{Proof of Lemma \ref{lemma:subgradient bound}}
\begin{proof} (1). We know that
$\partial L_\rho(\textbf{p}^{k+1},\textbf{W}^{k+1},\textbf{b}^{k+1},\textbf{z}^{k+1},\textbf{q}^{k+1},\textbf{u}^{k+1})=\{\nabla_{\text{p}^{k+1}}L_\rho,\nabla_{\text{W}^{k+1}}L_\rho,\nabla_{\text{b}^{k+1}}L_\rho,\partial_{\text{z}^{k+1}}L_\rho,\nabla_{\text{q}^{k+1}}L_\rho,\nabla_{\text{u}^{k+1}}L_\rho\}$ \cite{wang2019admm}. Specifically, we prove that $\Vert g\Vert$ is upper bounded by the linear combination of $\Vert\textbf{p}^{k+1}-\textbf{p}^{k}\Vert$,$ \Vert\textbf{W}^{k+1}-\textbf{W}^{k}\Vert$, $\Vert\textbf{b}^{k+1}-\textbf{b}^{k}\Vert$, $\Vert\textbf{z}^{k+1}-\textbf{z}^{k}\Vert$, $\Vert\textbf{q}^{k+1}-\textbf{q}^{k}\Vert$, and $\Vert\textbf{u}^{k+1}-\textbf{u}^{k}\Vert$.\\
For $p^{k+1}_l$,
\begin{align*}
    &\nabla_{p^{k+1}_l}
L_\rho(\textbf{p}^{k+1},\textbf{W}^{k+1},\textbf{b}^{k+1},\textbf{z}^{k+1},\textbf{q}^{k+1},\textbf{u}^{k+1})\\&=\nabla_{p^{k+1}_l}\phi({p}^{k+1}_l,{W}^{k+1}_l,{b}^{k+1}_l,{z}^{k+1}_l,q^{k+1}_{l-1},u^{k+1}_{l-1})\\&=\nabla_{p^{k}_l}\phi({p}^{k}_l,{W}^{k}_l,{b}^{k}_l,{z}^{k}_l,q^{k}_{l-1},u^{k}_{l-1})+\tau^{k+1}_l(p^{k+1}_l-p^k_l)-\tau^{k+1}_l(p^{k+1}_l-p^k_l)\\&+\nu(W^{k+1}_l)^T W^{k+1}_l p^{k+1}_l-\nu(W^{k}_l)^T W^{k}_l p^{k}_l+\nu(W^{k+1}_l)^T b^{k+1}_l-\nu(W^{k}_l)^T b^{k}_l-\nu(W^{k+1}_l)^T z^{k+1}_l+\nu(W^{k}_l)^T z^{k}_l\\&+(u^{k+1}_{l-1}-u^{k}_{l-1})+\rho(p^{k+1}_l-p^{k}_l)-\rho(q^{k+1}_{l-1}-q^{k}_{l-1})\\&=-\tau^{k+1}_l(p^{k+1}_l-p^k_l)+\nu(W^{k+1}_l)^T W^{k+1}_l p^{k+1}_l-\nu(W^{k}_l)^T W^{k}_l p^{k}_l+\nu(W^{k+1}_l)^T b^{k+1}_l-\nu(W^{k}_l)^T b^{k}_l\\&-\nu(W^{k+1}_l)^T z^{k+1}_l+\nu(W^{k}_l)^T z^{k}_l+(u^{k+1}_{l-1}-u^{k}_{l-1})+\rho(p^{k+1}_l-p^{k}_l)-\rho(q^{k+1}_{l-1}-q^{k}_{l-1}).
\end{align*}
So
\begin{align*}
    &\Vert \nabla_{p^{k+1}_l}
L_\rho(\textbf{p}^{k+1},\textbf{W}^{k+1},\textbf{b}^{k+1},\textbf{z}^{k+1},\textbf{q}^{k+1},\textbf{u}^{k+1})\Vert\\&=\Vert \tau^{k+1}_l(p^{k+1}_l-p^k_l)+\nu(W^{k+1}_l)^T W^{k+1}_l p^{k+1}_l-\nu(W^{k}_l)^T W^{k}_l p^{k}_l+\nu(W^{k+1}_l)^T b^{k+1}_l-\nu(W^{k}_l)^T b^{k}_l\\&-\nu(W^{k+1}_l)^T z^{k+1}_l+\nu(W^{k}_l)^T z^{k}_l+(u^{k+1}_{l-1}-u^{k}_{l-1})+\rho(p^{k+1}_l-p^{k}_l)-\rho(q^{k+1}_{l-1}-q^{k}_{l-1})\Vert\\&\leq \tau^{k+1}_l\Vert p^{k+1}_l-p^k_l\Vert+\nu\Vert(W^{k+1}_l)^T W^{k+1}_l p^{k+1}_l-(W^{k}_l)^T W^{k}_l p^{k}_l\Vert+\nu\Vert(W^{k+1}_l)^T b^{k+1}_l-(W^{k}_l)^T b^{k}_l\Vert\\&+\nu\Vert (W^{k+1}_l)^T z^{k+1}_l-(W^{k}_l)^T z^{k}_l\Vert+\Vert u^{k+1}_{l-1}-u^{k}_{l-1}\Vert+\rho\Vert p^{k+1}_l-p^{k}_l\Vert+\rho\Vert q^{k+1}_{l-1}-q^{k}_{l-1}\Vert \ \text{(triangle inequality)}\\&=\tau^{k+1}_l\Vert p^{k+1}_l-p^k_l\Vert+\nu\Vert(W^{k+1}_l)^T W^{k+1}_l (p^{k+1}_l-p^k_l)+(W^{k+1}_l)^T(W^{k+1}_l-W^{k}_l)p^k_l+(W^{k+1}_l-W^{k}_l)^T W^{k}_l p^{k}_l\Vert\\&+\nu\Vert(W^{k+1}_l)^T (b^{k+1}_l-b^k_l)+(W^{k+1}_l-W^{k}_l)^T b^{k}_l\Vert+\nu\Vert(W^{k+1}_l)^T (z^{k+1}_l-z^k_l)+(W^{k+1}_l-W^{k}_l)^T z^{k}_l\Vert\\&+\Vert u^{k+1}_{l-1}-u^k_{l-1}\Vert+\rho\Vert p^{k+1}_l-p^{k}_l\Vert+\rho\Vert q^{k+1}_{l-1}-q^{k}_{l-1}\Vert\\&\leq \tau^{k+1}_l\Vert p^{k+1}_l-p^k_l\Vert+\nu\Vert W^{k+1}_l\Vert^2\Vert p^{k+1}_l-p^k_l\Vert+\nu\Vert W^{k+1}_l\Vert \Vert W^{k+1}_l-W^{k}_l\Vert \Vert p^k_l\Vert+\nu\Vert W^{k+1}_l-W^{k}_l\Vert\Vert W^{k}_l\Vert\Vert p^{k}_l\Vert\\&+\nu\Vert W^{k+1}_l\Vert \Vert b^{k+1}_l-b^k_l\Vert +\nu\Vert W^{k+1}_l-W^{k}_l\Vert\Vert b^{k}_l\Vert+\nu\Vert W^{k+1}_l \Vert \Vert z^{k+1}_l-z^k_l\Vert+\nu\Vert W^{k+1}_l-W^{k}_l\Vert \Vert z^{k}_l\Vert\\&+\nu(\Vert q^{k+1}_{l-1}-q^{k}_{l-1}\Vert+S\Vert z^{k+1}_{l-1}-z^{k}_{l-1}\Vert)+\rho\Vert p^{k+1}_l-p^{k}_l\Vert+\rho\Vert q^{k+1}_{l-1}-q^{k}_{l-1}\Vert\\&\text{(triangle inequality, Cauthy-Schwartz inequality and Lemma \ref{lemma:u bound})}\\&\leq \tau^{k+1}_l\Vert p^{k+1}_l-p^k_l\Vert+\nu\mathbb{N}^2_\textbf{W}\Vert p^{k+1}_l-p^k_l\Vert+2\nu\mathbb{N}_\textbf{W}\mathbb{N}_\textbf{p} \Vert W^{k+1}_l-W^{k}_l\Vert+\nu\mathbb{N}_\textbf{W} \Vert b^{k+1}_l-b^k_l\Vert +\nu\mathbb{N}_\textbf{b}\Vert W^{k+1}_l-W^{k}_l\Vert\\&+\nu\mathbb{N}_\textbf{W} \Vert z^{k+1}_l-z^k_l\Vert+\nu\mathbb{N}_\textbf{z}\Vert W^{k+1}_l-W^{k}_l\Vert+2\nu^2(\Vert q^{k+1}_{l-1}-q^{k}_{l-1}\Vert^2_2+S^2\Vert z^{k+1}_{l-1}-z^{k}_{l-1}\Vert^2_2)+\rho\Vert p^{k+1}_l-p^{k}_l\Vert+\rho\Vert q^{k+1}_{l-1}-q^{k}_{l-1}\Vert\\&\text{(Lemma  \ref{lemma:lower bounded})}.
\end{align*}
For $W^{k+1}_1$,
\begin{align*}
&\nabla_{W^{k+1}_1}
L_\rho(\textbf{p}^{k+1},\textbf{W}^{k+1},\textbf{b}^{k+1},\textbf{z}^{k+1},\textbf{q}^{k+1},\textbf{u}^{k+1})\\&=\nabla_{W^{k+1}_1} \phi({p}^{k+1}_1,{W}^{k+1}_1,{b}^{k+1}_1,{z}^{k+1}_1)\\&=\nabla_{W^{k}_1} \phi({p}^{k+1}_1,{W}^{k}_1,{b}^{k}_1,{z}^{k}_1)+\theta^{k+1}_1(W^{k+1}_1-W^k_1)+\nu(W^{k+1}_1-W^k_1)p^{k+1}_1(p^{k+1}_1)^T+\nu(b^{k+1}_1-b^k_1)(p^{k+1}_1)^T\\&-\nu(z^{k+1}_1-z^k_1)(p^{k+1}_1)^T-\theta^{k+1}_1(W^{k+1}_1-W^k_1)\\&=\nu(W^{k+1}_1-W^k_1)p^{k+1}_1(p^{k+1}_1)^T+\nu(b^{k+1}_1-b^k_1)(p^{k+1}_1)^T-\nu(z^{k+1}_1-z^k_1)(p^{k+1}_1)^T-\theta^{k+1}_1(W^{k+1}_1-W^k_1)\\ & \text{(The optimality condition of Equation \eqref{eq:update W})}.
\end{align*}
So
\begin{align*}
    &\Vert\nabla_{W^{k+1}_1}
L_\rho(\textbf{p}^{k+1},\textbf{W}^{k+1},\textbf{b}^{k+1},\textbf{z}^{k+1},\textbf{q}^{k+1},\textbf{u}^{k+1})\Vert\\&=\Vert \nu(W^{k+1}_1-W^k_1)p^{k+1}_1(p^{k+1}_1)^T+\nu(b^{k+1}_1-b^k_1)(p^{k+1}_1)^T-\nu(z^{k+1}_1-z^k_1)(p^{k+1}_1)^T-\theta^{k+1}_1(W^{k+1}_1-W^k_1)\Vert\\&\leq \nu\Vert W^{k+1}_1-W^k_1\Vert\Vert p^{k+1}_1\Vert^2+\nu\Vert b^{k+1}_1-b^k_1\Vert\Vert p^{k+1}_1\Vert+\nu\Vert z^{k+1}_1-z^k_1\Vert\Vert p^{k+1}_1\Vert+\theta^{k+1}_1\Vert W^{k+1}_1-W^k_1\Vert\\&\text{(triangle inequality and Cauthy-Schwartz inequality)}\\&\leq \nu\Vert W^{k+1}_1-W^k_1\Vert \mathbb{N}_\textbf{p}^2+\nu\Vert b^{k+1}_1-b^k_1\Vert\mathbb{N}_\textbf{p}+\nu\Vert z^{k+1}_1-z^k_1\Vert\mathbb{N}_\textbf{p}+\theta^{k+1}_1\Vert W^{k+1}_1-W^k_1\Vert\ \text{(Theorem \ref{theo:convergent variable})}.
\end{align*}
For $W^{k+1}_l(1<l\leq L)$,
\begin{align*}
&\nabla_{W^{k+1}_l}
L_\rho(\textbf{p}^{k+1},\textbf{W}^{k+1},\textbf{b}^{k+1},\textbf{z}^{k+1},\textbf{q}^{k+1},\textbf{u}^{k+1})\\&=\nabla_{W^{k+1}_l} \phi({p}^{k+1}_l,{W}^{k+1}_l,{b}^{k+1}_l,{z}^{k+1}_l,p^{k+1}_{l-1},u^{k+1}_{l-1})\\&=\nabla_{W^{k}_l} \phi({p}^{k+1}_l,{W}^{k}_l,{b}^{k}_l,{z}^{k}_l,p^{k}_{l-1},u^{k}_{l-1})+\theta^{k+1}_l(W^{k+1}_l-W^k_1)+\nu(W^{k+1}_l-W^k_l)p^{k+1}_l(p^{k+1}_l)^T\\&+\nu(b^{k+1}_l-b^k_l)(p^{k+1}_l)^T-\nu(z^{k+1}_l-z^k_l)(p^{k+1}_l)^T-\theta^{k+1}_l(W^{k+1}_l-W^k_l)\\&=\nu(W^{k+1}_l-W^k_l)p^{k+1}_l(p^{k+1}_l)^T+\nu(b^{k+1}_l-b^k_l)(p^{k+1}_l)^T-\nu(z^{k+1}_l-z^k_l)(p^{k+1}_l)^T-\theta^{k+1}_l(W^{k+1}_l-W^k_l)\\ & \text{(The optimality condition of Equation \eqref{eq:update W})}.
\end{align*}
So
\begin{align*}
    &\Vert\nabla_{W^{k+1}_l}
L_\rho(\textbf{p}^{k+1},\textbf{W}^{k+1},\textbf{b}^{k+1},\textbf{z}^{k+1},\textbf{q}^{k+1},\textbf{u}^{k+1})\Vert\\&=\Vert \nu(W^{k+1}_l-W^k_l)p^{k+1}_l(p^{k+1}_l)^T+\nu(b^{k+1}_l-b^k_l)(p^{k+1}_l)^T-\nu(z^{k+1}_l-z^k_l)(p^{k+1}_l)^T-\theta^{k+1}_l(W^{k+1}_l-W^k_l)\Vert\\&\leq \nu\Vert W^{k+1}_l-W^k_l\Vert\Vert p^{k+1}_l\Vert^2+\nu\Vert b^{k+1}_l-b^k_l\Vert\Vert p^{k+1}_l\Vert+\nu\Vert z^{k+1}_l-z^k_l\Vert\Vert p^{k+1}_l\Vert+\theta^{k+1}_l\Vert W^{k+1}_l-W^k_l\Vert\\&\text{(triangle inequality and Cauthy-Schwartz inequality)}\\&\leq \nu\Vert W^{k+1}_l-W^k_l\Vert \mathbb{N}_\textbf{p}^2+\nu\Vert b^{k+1}_l-b^k_l\Vert\mathbb{N}_\textbf{p}+\nu\Vert z^{k+1}_l-z^k_l\Vert\mathbb{N}_\textbf{p}+\theta^{k+1}_l\Vert W^{k+1}_l-W^k_l\Vert\ \text{(Theorem \ref{theo:convergent variable})}.
\end{align*}
For $b^{k+1}_1$,
\begin{align*}
 &\nabla_{b^{k+1}_1}
L_\rho(\textbf{p}^{k+1},\textbf{W}^{k+1},\textbf{b}^{k+1},\textbf{z}^{k+1},\textbf{q}^{k+1},\textbf{u}^{k+1})\\&= \nabla_{b^{k+1}_1}
\phi(p^{k+1}_1,W^{k+1}_1,b^{k+1}_1,z^{k+1}_1)\\&= \nabla_{b^{k}_1}
\phi(p^{k+1}_1,W^{k+1}_1,b^{k}_1,z^{k}_1)+\nu(b^{k+1}_1-b^k_1)+\nu(z^k_1-z^{k+1}_1)\\&=\nu(z^k_1-z^{k+1}_1) \ \text{(The optimality condition of Equation \eqref{eq:update b})}.
\end{align*}
So $\Vert \nabla_{b^{k+1}_1}
L_\rho(\textbf{p}^{k+1},\textbf{W}^{k+1},\textbf{b}^{k+1},\textbf{z}^{k+1},\textbf{q}^{k+1},\textbf{u}^{k+1})\Vert=\nu\Vert z^{k+1}_1-z^k_1\Vert$.\\
For $b^{k+1}_l(1<l\leq L)$,
\begin{align*}
 &\nabla_{b^{k+1}_l}
L_\rho(\textbf{p}^{k+1},\textbf{W}^{k+1},\textbf{b}^{k+1},\textbf{z}^{k+1},\textbf{q}^{k+1},\textbf{u}^{k+1})\\&= \nabla_{b^{k+1}_l}
\phi(p^{k+1}_l,W^{k+1}_l,b^{k+1}_l,z^{k+1}_l,q^{k}_{l-1},u^{k}_{l-1})\\&= \nabla_{b^{k}_l}
\phi(p^{k+1}_l,W^{k+1}_l,b^{k}_l,z^{k}_l,q^{k}_{l-1},u^{k}_{l-1})+\nu(b^{k+1}_l-b^k_l)+\nu(z^k_l-z^{k+1}_l)\\&=\nu(z^k_l-z^{k+1}_l) \ \text{(The optimality condition of Equation \eqref{eq:update b})}.
\end{align*}
So $\Vert \nabla_{b^{k+1}_l}
L_\rho(\textbf{p}^{k+1},\textbf{W}^{k+1},\textbf{b}^{k+1},\textbf{z}^{k+1},\textbf{q}^{k+1},\textbf{u}^{k+1})\Vert=\nu\Vert z^{k+1}_l-z^k_l\Vert$.\\
For $z^{k+1}_l(l<L)$, 
\begin{align*}
&\partial_{z^{k+1}_l}
L_\rho(\textbf{p}^{k+1},\textbf{W}^{k+1},\textbf{b}^{k+1},\textbf{z}^{k+1},\textbf{q}^{k+1},\textbf{u}^{k+1})\\&=\partial_{z^{k+1}_l}
L_\rho(\textbf{p}^{k+1},\textbf{W}^{k+1},\textbf{b}^{k+1},\textbf{z}^{k+1},\textbf{q}^{k},\textbf{u}^{k})+\nu(z_l^{k+1}-z_l^{k})-\nu(z_l^{k+1}-z_l^{k})-\nu\partial f_l(z^{k+1}_l)\circ(q^{k+1}_l-q^k_l)\text{($\circ$ is Hadamard product)}\\&=-\nu(z_l^{k+1}-z_l^{k})-\nu\partial f_l(z^{k+1}_l)\circ(q^{k+1}_l-q^k_l) \ \text{($ 0\in \partial_{z^{k+1}_l}
L_\rho(\textbf{p}^{k+1},\textbf{W}^{k+1},\textbf{b}^{k+1},\textbf{z}^{k+1},\textbf{q}^{k},\textbf{u}^{k})+\nu(z_l^{k+1}-z_l^{k})$)}.
\end{align*}
So
\begin{align*}
   &\Vert\partial_{z^{k+1}_l}
L_\rho(\textbf{p}^{k+1},\textbf{W}^{k+1},\textbf{b}^{k+1},\textbf{z}^{k+1},\textbf{q}^{k+1},\textbf{u}^{k+1})\Vert\\&= \Vert -\nu(z_l^{k+1}-z_l^{k})-\nu\partial f_l(z^{k+1}_l)\circ(q^{k+1}_l-q^k_l)\Vert\\&\leq \nu\Vert z^{k+1}_l-z^k_l\Vert+\nu\Vert \partial f_l(z^{k+1}_l)\Vert\Vert q_l^{k+1}-q_l^k\Vert \text{(Cauchy-Schwartz inequality and triangle inequality)}\\&\leq \nu\Vert z^{k+1}_l-z^k_l\Vert+\nu M\Vert q_l^{k+1}-q_l^k\Vert (\text{$\Vert \partial f_l(z^{k+1}_l)\Vert\leq M$}).
\end{align*}
For $z^{k+1}_L$,
$\partial_{z^{k+1}_L}
L_\rho(\textbf{p}^{k+1},\textbf{W}^{k+1},\textbf{b}^{k+1},\textbf{z}^{k+1},\textbf{q}^{k+1},\textbf{u}^{k+1})=0$ by the optimality condition of Equation \eqref{eq:update zl}.\\
For $q^{k+1}_l$,
\begin{align*}
&\nabla_{q^{k+1}_l}
L_\rho(\textbf{p}^{k+1},\textbf{W}^{k+1},\textbf{b}^{k+1},\textbf{z}^{k+1},\textbf{q}^{k+1},\textbf{u}^{k+1})\\&=\nabla_{q^{k+1}_l}
L_\rho(\textbf{p}^{k+1},\textbf{W}^{k+1},\textbf{b}^{k+1},\textbf{z}^{k+1},\textbf{q}^{k+1},\textbf{u}^{k})+u^{k+1}_l-u^{k}_l\\&=u^{k+1}_l-u^{k}_l \ \text{($\nabla_{q^{k+1}_l}
L_\rho(\textbf{p}^{k+1},\textbf{W}^{k+1},\textbf{b}^{k+1},\textbf{z}^{k+1},\textbf{q}^{k+1},\textbf{u}^{k})=0$ by the optimality condition of Equation \eqref{eq:update q})}.
\end{align*}
So $\Vert\nabla_{q^{k+1}_l}
L_\rho(\textbf{p}^{k+1},\textbf{W}^{k+1},\textbf{b}^{k+1},\textbf{z}^{k+1},\textbf{q}^{k+1},\textbf{u}^{k+1})\Vert=\Vert u^{k+1}_l-u^{k}_l\Vert$.\\
For $u^{k+1}_l$,
\begin{align*}
&\nabla_{u^{k+1}_l} L_\rho(\textbf{p}^{k+1},\textbf{W}^{k+1},\textbf{b}^{k+1},\textbf{z}^{k+1},\textbf{q}^{k+1},\textbf{u}^{k+1})=(p^{k+1}_{l+1}-q^{k+1}_{l})=(u^{k+1}_{l}-u^{k}_{l})/\rho.
\end{align*}
So $\Vert\nabla_{u^{k+1}_l} L_\rho(\textbf{p}^{k+1},\textbf{W}^{k+1},\textbf{b}^{k+1},\textbf{z}^{k+1},\textbf{q}^{k+1},\textbf{u}^{k+1})\Vert=\Vert u^{k+1}_{l}-u^{k}_{l}\Vert/\rho$.\\
In summary, we prove that $\nabla_{\text{p}^{k+1}}L_\rho,\nabla_{\text{W}^{k+1}}L_\rho,\nabla_{\text{b}^{k+1}}L_\rho,\partial_{\text{z}^{k+1}}L_\rho,\nabla_{\text{q}^{k+1}}L_\rho,\nabla_{\text{u}^{k+1}}L_\rho$ are upper bounded by the linear combination of $\Vert\textbf{p}^{k+1}-\textbf{p}^{k}\Vert$,$ \Vert\textbf{W}^{k+1}-\textbf{W}^{k}\Vert$, $\Vert\textbf{b}^{k+1}-\textbf{b}^{k}\Vert$, $\Vert\textbf{z}^{k+1}-\textbf{z}^{k}\Vert$, $\Vert\textbf{q}^{k+1}-\textbf{q}^{k}\Vert$, and $\Vert\textbf{u}^{k+1}-\textbf{u}^{k}\Vert$.\\
(2). It follows exactly the proof of (1) except for $p^{k+1}_l$ .
\end{proof}
\subsubsection{The proof of Theorem \ref{theo: pdADMM-G global convergence}}
\begin{proof}
 From Lemma \ref{lemma:lower bounded}(1), $(\textbf{p}^k,\textbf{W}^k, \textbf{b}^k, \textbf{z}^k, \textbf{q}^k,\textbf{u}^k)$ has at least a limit point $(\textbf{p}^*,\textbf{W}^*, \textbf{b}^*, \textbf{z}^*, \textbf{q}^*,\textbf{u}^*)$ because a bounded sequence has at least a limit point. From Lemma \ref{lemma:subgradient bound} and Theorem \ref{theo:convergent variable},
$\Vert g^{k+1}\Vert \rightarrow 0$ as $k\rightarrow \infty$. According to the definition of general subgradient (Definition 8.3 in \cite{rockafellar2009variational}), we have $0\in \partial L_\rho(\textbf{p}^*,\textbf{W}^*,\textbf{b}^*,\textbf{z}^*,\textbf{q}^*,\textbf{u}^*)$. In other words, every limit point $(\textbf{p}^*,\textbf{W}^*, \textbf{b}^*, \textbf{z}^*, \textbf{q}^*,\textbf{u}^*)$ is a stationary point.
\end{proof}

\subsubsection{The proof of Theorem \ref{theo: pdADMM-G-q global convergence}}
\begin{proof}
 From Lemma \ref{lemma:lower bounded}(2), $(\textbf{W}^k, \textbf{b}^k, \textbf{z}^k, \textbf{q}^k,\textbf{u}^k)$ has at least a limit point $(\textbf{W}^*, \textbf{b}^*, \textbf{z}^*, \textbf{q}^*,\textbf{u}^*)$ because a bounded sequence has at least a limit point. $\textbf{p}^k$ has at least a limit point $\textbf{p}^*$ because $\textbf{p}^k\in\Delta$ and $\Delta$ is finite. From Lemma \ref{lemma:subgradient bound}(2) and Theorem \ref{theo:convergent variable}, $\Vert\overline{g}_\textbf{W}^{k+1}\Vert\rightarrow 0$, $\Vert\overline{g}_\textbf{b}^{k+1}\Vert\rightarrow 0$, $\Vert\overline{g}_\textbf{z}^{k+1}\Vert\rightarrow 0$,
 $\Vert\overline{g}_\textbf{q}^{k+1}\Vert \rightarrow 0$,
 $\Vert\overline{g}_\textbf{u}^{k+1}\Vert \rightarrow 0$ as $k\rightarrow \infty$. According to the definition of general subgradient (Defintion 8.3 in \cite{rockafellar2009variational}), we have $\nabla_{\textbf{W}^*} \beta_\rho(\textbf{p}^*,\textbf{W}^*,\textbf{b}^*,\textbf{z}^*,\textbf{q}^*,\textbf{u}^*)=0$, $\nabla_{\textbf{b}^*} \beta_\rho(\textbf{p}^*,\textbf{W}^*,\textbf{b}^*,\textbf{z}^*,\textbf{q}^*,\textbf{u}^*)=0$, $ 0\in\partial_{\textbf{z}^*} \beta_\rho(\textbf{p}^*,\textbf{W}^*,\textbf{b}^*,\textbf{z}^*,\textbf{q}^*,\textbf{u}^*)$, $\nabla_{\textbf{q}^*} \beta_\rho(\textbf{p}^*,\textbf{W}^*,\textbf{b}^*,\textbf{z}^*,\textbf{q}^*,\textbf{u}^*)=0$ and $\nabla_{\textbf{u}^*} \beta_\rho(\textbf{p}^*,\textbf{W}^*,\textbf{b}^*,\textbf{z}^*,\textbf{q}^*,\textbf{u}^*)=0$ (i.e. $p^*_{l+1}=q^*_l$). In other words, every limit point $(\textbf{W}^*, \textbf{b}^*, \textbf{z}^*,\textbf{u}^*)$ is a stationary point of Problem \ref{prob:problem 3}. Moreover, $\tau^k_l$ has a limit point $\tau^*_l$ because it is bounded. Let $\boldsymbol{\tau}^k=\{\tau^k_l\}_{l=2}^L$. Consider a subsequence $(\textbf{p}^s,\textbf{W}^s,\textbf{b}^s,\textbf{z}^s,\textbf{q}^s,\textbf{u}^s,\boldsymbol{\tau}^{s+1})\rightarrow (\textbf{p}^*,\textbf{W}^*,\textbf{b}^*,\textbf{z}^*,\textbf{q}^*,\textbf{u}^*,\boldsymbol{\tau}^*)$. Because $u^{s+1}_l=u_l^s+\rho(p^s_{l+1}-q^s_{l})$ and $u^{s+1}_l\rightarrow u^{s}_l$, thus $p^s_{l+1}\rightarrow q^s_{l}$, and $p^{s+1}_{l+1}\rightarrow q^{s+1}_{l}$. Because $q^{s+1}_{l}\rightarrow q^{s}_{l}$, then $p^{s+1}_{l+1}\rightarrow p^{s}_{l+1}$ for any $l$. In other words, $\textbf{p}^{s+1}\rightarrow \textbf{p}^{s}$. Because $\textbf{p}^{s}\rightarrow \textbf{p}^*$, then $\textbf{p}^{s+1}\rightarrow \textbf{p}^*$.
 The optimality condition of $\textbf{p}^{s+1}$ (i.e. Equation \eqref{eq:update p quantized}) leads to 
 \begin{align*}
     p^{s+1}_l\leftarrow \arg\min\nolimits_{\delta\in\Delta} \Vert \delta- p^s_{l}-\nabla_{p^s_{l}}\phi(p^s_l,W^s_l,b^s_l,z^s_l,q^s_{l-1},u^s_{l-1})/{\tau}^{s+1}_{l})\Vert.
 \end{align*}
 Taking $s\rightarrow\infty$ on both sides, we have
 \begin{align*}
     p^*_l\leftarrow \arg\min\nolimits_{\delta\in\Delta} \Vert \delta- (p^*_{l}-\nabla_{p^*_{l}}\phi(p^*_l,W^*_l,b^*_l,z^*_l,q^*_{l-1},u^*_{l-1})/{\tau}^*_{l})\Vert.
 \end{align*}
 Because $\nabla_{p_l^*}F(\textbf{p}^*,\textbf{W}^*,\textbf{b}^*,\textbf{z}^*,\textbf{q}^*)=\nu W^T_l(z^*_l-W^*_lp^*_l-b^*_l)=\nabla_{p^*_{l}}\phi(p^*_l,W^*_l,b^*_l,z^*_l,q^*_{l-1},u^*_{l-1})$. Then
 \begin{align*}
     p^*_l\leftarrow \arg\min\nolimits_{\delta\in\Delta} \Vert \delta- (p^*_{l}-\nabla_{p_l^*}F(\textbf{p}^*,\textbf{W}^*,\textbf{b}^*,\textbf{z}^*,\textbf{q}^*)/{\tau}^*_{l})\Vert.
 \end{align*}
 Namely, $\textbf{p}^*$ is a quantized stationary point of Problem \ref{prob:problem 3}.
\end{proof}

\subsubsection{The proof of Theorem \ref{theo: convergence rate}}
\begin{proof}
(1). To prove this, we will first show that $c_k$ satisfies two conditions: (1). $c_k\geq c_{k+1}$. (2). $\sum\nolimits_{k=0}^\infty c_k$ is bounded.  We then conclude the convergence rate of $o(1/k)$ based on these two conditions. Specifically, first, we have
\begin{align*}
    c_k&\!=\!\min\nolimits_{0\leq i\leq k}\!(\sum\nolimits_{l=2}^L (\tau^{i+1}_l/2)\Vert p^{i+1}_l\!-\!p^i_l\Vert^2_2+\sum\nolimits_{l\!=\!1}^{L}(\theta^{i\!+\!1}_l/2)\Vert  W^{i\!+\!1}_l\!-\!W^i_l\Vert^2_2+\sum\nolimits_{l\!=\!1}^{L}(\nu/2)\Vert  b^{i\!+\!1}_l\!-\!b^i_l\Vert^2_2\\&+\sum\nolimits_{l=1}^{L-1} C_1\Vert z^{i+1}_l-z^i_l\Vert^2_2+(\nu/2)\Vert z^{i+1}_L-z^i_L\Vert^2_2+\sum\nolimits_{l=1}^{L-1}C_2\Vert q^{i+1}_l-q^i_l\Vert^2_2) \\&\geq\!\min\nolimits_{0\leq i\leq k+1}\!(\sum\nolimits_{l=2}^L (\tau^{i+1}_l/2)\Vert p^{i+1}_l\!-\!p^i_l\Vert^2_2+\sum\nolimits_{l\!=\!1}^{L}(\theta^{i\!+\!1}_l/2)\Vert  W^{i\!+\!1}_l\!-\!W^i_l\Vert^2_2+\sum\nolimits_{l\!=\!1}^{L}(\nu/2)\Vert  b^{i\!+\!1}_l\!-\!b^i_l\Vert^2_2\\&+\sum\nolimits_{l=1}^{L-1} C_1\Vert z^{i+1}_l-z^i_l\Vert^2_2+(\nu/2)\Vert z^{i+1}_L-z^i_L\Vert^2_2+\sum\nolimits_{l=1}^{L-1}C_2\Vert q^{i+1}_l-q^i_l\Vert^2_2)\\&= c_{k+1}.
\end{align*}
Therefore $c_k$ satisfies the first condition. Second,
\begin{align*}
    &\sum\nolimits_{k\!=\!0}^\infty c_k\\\!&=\sum\nolimits_{k=0}^\infty\min\nolimits_{0\leq i\leq k}\!(\sum\nolimits_{l=2}^L (\tau^{i+1}_l/2)\Vert p^{i+1}_l\!-\!p^i_l\Vert^2_2+\sum\nolimits_{l\!=\!1}^{L}(\theta^{i\!+\!1}_l/2)\Vert  W^{i\!+\!1}_l\!-\!W^i_l\Vert^2_2+\sum\nolimits_{l\!=\!1}^{L}(\nu/2)\Vert  b^{i\!+\!1}_l\!-\!b^i_l\Vert^2_2\\&+\sum\nolimits_{l=1}^{L-1} C_1\Vert z^{i+1}_l-z^i_l\Vert^2_2+(\nu/2)\Vert z^{i+1}_L-z^i_L\Vert^2_2+\sum\nolimits_{l=1}^{L-1}C_2\Vert q^{i+1}_l-q^i_l\Vert^2_2)\\&\leq \sum\nolimits_{k=0}^\infty(\sum\nolimits_{l=2}^L (\tau^{k+1}_l/2)\Vert p^{k+1}_l\!-\!p^k_l\Vert^2_2+\sum\nolimits_{l\!=\!1}^{L}(\theta^{k\!+\!1}_l/2)\Vert  W^{k\!+\!1}_l\!-\!W^k_l\Vert^2_2+\sum\nolimits_{l\!=\!1}^{L}(\nu/2)\Vert  b^{k\!+\!1}_l\!-\!b^k_l\Vert^2_2\\&+\sum\nolimits_{l=1}^{L-1} C_1\Vert z^{k+1}_l-z^k_l\Vert^2_2+(\nu/2)\Vert z^{k+1}_L-z^k_L\Vert^2_2+\sum\nolimits_{l=1}^{L-1}C_2\Vert q^{k+1}_l-q^k_l\Vert^2_2) \\&\leq L_\rho(\textbf{p}^0,\textbf{W}^0,\textbf{b}^0,\textbf{z}^0,\textbf{q}^0,\textbf{u}^0)-L_\rho(\textbf{p}^*,\textbf{W}^*,\textbf{b}^*,\textbf{z}^*,\textbf{q}^*,\textbf{u}^*)\text{(Lemma \ref{lemma:objective decrease})}.
\end{align*}
So $\sum\nolimits_{k=0}^\infty c_k$ is bounded and $c_{k}$ satisfies the second condition. Finally, it has been proved that the sufficient conditions of convergence rate $o(1/k)$ are: (1) $c_k\geq c_{k+1}$, and (2) $\sum\nolimits_{k=0}^\infty c_k$ is bounded, and (3) $c_k\geq0$ (Lemma 1.2 in \cite{deng2017parallel}). Since we have proved the first two conditions and the third one $c_k \geq 0$ is obvious, the convergence rate of $o(1/k)$ is proven. \\
(2). It follows the same procedure as (1).
\end{proof}
\subsection{More Experimental Results}
\subsubsection{Datasets Details}
\label{sec:dataset}
 1. Cora \cite{sen2008collective}. The Cora dataset consists of 2708 scientific publications classified into one of seven classes. The citation network consists of 5429 links. Each publication in the dataset is described by a 0/1-valued word vector indicating the absence/presence of the corresponding word from the dictionary. The dictionary consists of 1433 unique words.\\
2. PubMed \cite{sen2008collective}. PubMed comprises 30M+ citations for biomedical literature that have been collected from sources such as MEDLINE, life science journals, and published online e-books. It also includes links to text content from PubMed Central and other publishers' websites.\\
3. Citeseer \cite{sen2008collective}. The Citeseer dataset was collected from the Tagged.com social network website. It contains 5.6 million users and 858 million links between them. Each user has 4 features and is manually labeled as ``spammer" or ``not spammer". Each link represents an action between two users and includes a timestamp and a type. The network contains 7 anonymized types of links. The original task on the dataset is to identify (i.e., classify) the spammer users based on their relational and non-relational features. \\
4. Amazon Computers and Amazon Photo \cite{mcauley2015image}. Amazon Computers and Amazon Photo are segments of the Amazon co-purchase graph, where nodes represent goods, edges indicate that two goods are frequently bought
together, node features are bag-of-words encoded product reviews, and class labels are given by the product category.\\
5. Coauthor CS and Coauthor Physics \cite{shchur2018pitfalls}. Coauthor CS and Coauthor Physics are co-authorship graphs based on the Microsoft Academic Graph from the KDD Cup 2016 challenge 3. Here, nodes are authors, that are connected by an edge if they co-authored a paper; node features represent paper keywords for each author’s papers, and class
labels indicate the most active fields of study for each author.\\
6. Flickr \cite{zeng2020graphsaint}. In Flickr, one node in the graph represents one image uploaded to Flickr. If two images share some common properties (e.g., same geographic location, same gallery, comments by the same user, etc.), there is an edge between
the nodes of these two images. Node features are bag-of-word
representation of the images and labels are classes of images.\\
7. Ogbn-Arxiv \cite{hu2020open}. The Ogbn-Arxiv dataset is a directed graph, representing the citation network between all Computer Science (CS) ARXIV papers indexed by MAG. Each node is an ARXIV paper and each directed edge indicates that one paper cites another one. Each paper comes with a
128-dimensional feature vector obtained by averaging the embeddings of words in its title and abstract. In addition, all papers are also associated with the year that the
the corresponding paper was published.
\subsubsection{The Settings of All Hyperparameters}
\label{sec:hyperparameter}
This section provides more details on the hyperparameter settings of all datasets, which are shown in the following tables.\\\\
\begin{minipage}{\textwidth}
\begin{minipage}[t]{0.48\textwidth}
  \centering
  \scriptsize
    \begin{tabular}{c|c|c|c}
    \hline\hline
    Dataset&Cora&PubMed&Citeseer\\
    \hline
         Learning Rate(GD)&$10^{-1}$&$5\times 10^{-2}$&$10^{-1}$
  \\\hline
         Learning Rate(Adadelta)&$10^{-3}$&$10^{-3}$&$10^{-3}$

 \\\hline
         Learning Rate(Adagrad)&$10^{-3}$&$10^{-3}$&$10^{-3}$\\\hline
         Learning Rate(Adam)&$10^{-4}$&$10^{-4}$&$10^{-3}$\\\hline
         $\rho,\nu$(pdADMM-G)&$10^{-4}$&$10^{-4}$&$10^{-4}$
\\\hline
         $\rho,\nu$(pdADMM-G-Q)&$10^{-4}$&$10^{-3}$&$10^{-3}$

\\\hline\hline
Dataset&\tabincell{c}{Amazon\\ Computers}&\tabincell{c}{Amazon \\Photo}&\tabincell{c}{Coauthor \\CS}\\
\hline
Learning Rate(GD)&$10^{-2}$&$10^{-2}$&$10^{-1}$
\\
\hline
Learning Rate(Adadelta)&$10^{-3}$&$10^{-3}$&$10^{-3}$\\\hline
Learning Rate(Adagrad)&$10^{-3}$&$10^{-3}$&$10^{-3}$\\\hline
Learning Rate(Adam)&$10^{-3}$&$10^{-3}$&$10^{-3}$\\\hline
$\rho,\nu$(pdADMM-G)&$10^{-3}$&$10^{-3}$&$10^{-2}$\\\hline
$\rho,\nu$(pdADMM-G-Q)&$10^{-3}$&$10^{-3}$&$10^{-2}$\\
\hline\hline
 Dataset&\tabincell{c}{Coauthor \\Physics}&Flickr& Ogbn-Arxiv\\
     \hline
         Learning Rate(GD)&$10^{-1}$&$10^{-3}$&$10^{-2}$  \\\hline
         Learning Rate(Adadelta)&$10^{-3}$&$10^{-2}$&$10^{-1}$\\\hline
         Learning Rate(Adagrad)&$10^{-3}$&$10^{-3}$&$10^{-3}$\\\hline
         Learning Rate(Adam)&$10^{-3}$&$10^{-3}$&$10^{-3}$
         \\\hline
         $\rho,\nu$(pdADMM-G)&$10^{-2}$&$10^{-4}$&$10^{-4}$
         \\\hline
         $\rho,\nu$(pdADMM-G-Q)&$10^{-2}$&$10^{-4}$&$10^{-4}$
         \\\hline\hline
    \end{tabular} 
      \makeatletter\def\@captype{table}\makeatother\caption{Hyperparameter settings of all methods on nine benchmark datasets when the number of neurons is 100.}
    \label{tab:hyperparameter_100_neurons}
\end{minipage}
\begin{minipage}[t]{0.48\textwidth}
  \centering\scriptsize
    \begin{tabular}{c|c|c|c}
    \hline\hline
    Dataset&Cora&PubMed&Citeseer\\
    \hline
         Learning Rate(GD)&$10^{-1}$&$5\times10^{-3}$&$10^{-1}$
  \\\hline
         Learning Rate(Adadelta)&$10^{-3}$&$10^{-4}$&$10^{-3}$
 \\\hline
         Learning Rate(Adagrad)&$10^{-3}$&$10^{-3}$&$10^{-3}$\\\hline
         Learning Rate(Adam)&$10^{-4}$&$10^{-4}$&$10^{-4}$\\\hline
         $\rho,\nu$(pdADMM-G)&$10^{-4}$&$10^{-4}$&$10^{-3}$\\\hline
         $\rho,\nu$(pdADMM-G-Q)&$10^{-4}$&$10^{-3}$&$10^{-3}$
\\\hline\hline
Dataset&\tabincell{c}{Amazon\\ Computers}&\tabincell{c}{Amazon \\Photo}&\tabincell{c}{Coauthor \\CS}\\
\hline
Learning Rate(GD)&$10^{-2}$&$10^{-2}$&$10^{-1}$\\
\hline
Learning Rate(Adadelta)&$10^{-3}$&$10^{-3}$&$10^{-3}$\\\hline
Learning Rate(Adagrad)&$10^{-3}$&$10^{-3}$&$10^{-3}$\\\hline
Learning Rate(Adam)&$10^{-4}$&$10^{-4}$&$10^{-4}$\\\hline
$\rho,\nu$(pdADMM-G)&$10^{-3}$&$10^{-3}$&$10^{-3}$\\\hline
$\rho,\nu$(pdADMM-G-Q)&$10^{-3}$&$10^{-3}$&$10^{-3}$\\
\hline\hline
 Dataset&\tabincell{c}{Coauthor \\Physics}&Flickr& Ogbn-Arxiv\\
     \hline
         Learning Rate(GD) &$10^{-2}$&$10^{-2}$&$10^{-2}$\\\hline
         Learning Rate(Adadelta)&$10^{-3}$&$10^{-2}$&$10^{-1}$\\\hline
         Learning Rate(Adagrad)&$10^{-3}$&$10^{-3}$&$10^{-3}$\\\hline
         Learning Rate(Adam)&$10^{-4}$&$10^{-3}$&$10^{-3}$\\\hline
         $\rho,\nu$(pdADMM-G)&$10^{-2}$&$10^{-4}$&$10^{-4}$
         \\\hline
         $\rho,\nu$(pdADMM-G-Q)&$10^{-2}$&$10^{-4}$&$10^{-4}$
         \\\hline\hline
    \end{tabular}
         \makeatletter\def\@captype{table}\makeatother\caption{Hyperparameter settings of all methods on nine benchmark datasets when the number of neurons is 500.}
    \label{tab:hyperparameter_500_neurons}
\end{minipage}

\end{minipage}
\subsubsection{The Performance of Validation Sets}
\label{sec:validation performance}
This section provides more experimental results on the validation sets of all datasets, which are shown in the following tables.\\\\
\begin{minipage}{\textwidth}
\begin{minipage}[t]{0.48\textwidth}
  \centering
  \scriptsize
    \begin{tabular}{c|c|c|c}
    \hline\hline
    Dataset&Cora&PubMed&Citeseer\\
    \hline
         GD&0.704$\pm$0.037&0.626$\pm$	0.072&0.619$\pm$0.045
  \\\hline
         Adadelta&0.652$\pm$0.064&0.720$\pm$0.035&0.620$\pm$0.022

 \\\hline
         Adagrad&0.720$\pm$	0.022&0.762$\pm$	0.012&0.604	$\pm$0.027\\\hline
         Adam&0.720$\pm$0.034&0.745$\pm$	0.014&0.624$\pm$0.014\\\hline
         pdADMM-G&0.750$\pm$0.005&0.788$\pm$0.004&\textbf{0.724$\pm$0.005}
\\\hline
         pdADMM-G-Q&\textbf{0.754$\pm$	0.002}&\textbf{0.793$\pm$0.002}&0.722$\pm$0.002

\\\hline\hline
Dataset&\tabincell{c}{Amazon\\ Computers}&\tabincell{c}{Amazon \\Photo}&\tabincell{c}{Coauthor \\CS}\\
\hline
GD&0.654$\pm$0.033&0.730$\pm$0.165&0.875$\pm$0.007
\\
\hline
Adadelta&0.136$\pm$0.062&0.343$\pm$0.046&0.781$\pm$0.084\\\hline
Adagrad&0.750$\pm$0.095&0.808$\pm$0.018&0.889$\pm$0.006\\\hline
Adam&0.740$\pm$0.010&\textbf{0.850$\pm$0.006}&0.887$\pm$0.009\\\hline
pdADMM-G&\textbf{0.753$\pm$0.005}&0.846$\pm$0.014&0.913$\pm$0.003\\\hline
pdADMM-G-Q&0.688$\pm$0.063&0.822$\pm$0.013&\textbf{0.916$\pm$0.003}\\
\hline\hline
 Dataset&\tabincell{c}{Coauthor \\Physics}&Flickr& Ogbn-Arxiv\\
     \hline
         GD&0.921$\pm$0.009&0.464$\pm$0.008&0.378$\pm$0.004  \\\hline
         Adadelta&0.918$\pm$0.014&0.461$\pm$0.006&0.514$\pm$0.014\\\hline
         Adagrad&0.928$\pm$0.005&0.480$\pm$0.003&0.574$\pm$0.008\\\hline
         Adam&0.919$\pm$	0.010&0.512$\pm$0.004&\textbf{0.681$\pm$0.003}
         \\\hline
         pdADMM-G&0.933$\pm$0.001&\textbf{0.514$\pm$0.001}&0.649$\pm$0.012
         \\\hline
         pdADMM-G-Q&\textbf{0.935$\pm$0.002}&0.506$\pm$0.004&0.661$\pm$0.004
         \\\hline\hline
         
    \end{tabular} 
     \makeatletter\def\@captype{table}\makeatother\caption{The validation performance of all methods when the number of neurons is 100.}
    \label{tab:performance_100_neurons_val}
\end{minipage}
\begin{minipage}[t]{0.48\textwidth}
  \centering\scriptsize
    \begin{tabular}{c|c|c|c}
    \hline\hline
    Dataset&Cora&PubMed&Citeseer\\
    \hline
         GD&0.731$\pm$0.018&0.651$\pm$0.034&0.679$\pm$0.008

  \\\hline
         Adadelta&0.716$\pm$0.061&0.688$\pm$0.024&0.597$\pm$0.025
 \\\hline
         Adagrad&0.765$\pm$0.014&0.776$\pm$0.006&0.668$\pm$0.028\\\hline
         Adam&\textbf{0.758$\pm$0.013}&0.778$\pm$0.008&0.668$\pm$0.020\\\hline
         pdADMM-G&0.753$\pm$0.004&\textbf{0.792$\pm$0.004}&0.729$\pm$0.003
\\\hline
         pdADMM-G-Q&0.757$\pm$0.005&\textbf{0.792$\pm$0.003}&\textbf{0.730$\pm$0.004}
\\\hline\hline
Dataset&\tabincell{c}{Amazon\\ Computers}&\tabincell{c}{Amazon \\Photo}&\tabincell{c}{Coauthor \\CS}\\
\hline
GD&0.727$\pm$	0.012&0.809$\pm$0.012&0.897$\pm$0.003\\
\hline
Adadelta&0.246$\pm$0.073&0.371$\pm$0.075&0.884$\pm$0.003\\\hline
Adagrad&0.766$\pm$0.011&0.860$\pm$0.003&\textbf{0.912$\pm$0.004}\\\hline
Adam&0.750$\pm$0.017&\textbf{0.872$\pm$0.020}&0.893$\pm$0.013\\\hline
pdADMM-G&\textbf{0.778$\pm$0.007}&0.861$\pm$0.005&\textbf{0.912$\pm$0.003}\\\hline
pdADMM-G-Q&0.764$\pm$0.008&0.850$\pm$0.009&0.910$\pm$0.003\\
\hline\hline
 Dataset&\tabincell{c}{Coauthor \\Physics}&Flickr& Ogbn-Arxiv\\
     \hline
         GD &0.928$\pm$0.001&0.466$\pm$0.001&0.451$\pm$0.033\\\hline
         Adadelta&0.932$\pm$0.006&0.462$\pm$0.004&0.591$\pm$0.017\\\hline
         Adagrad&\textbf{0.935$\pm$0.005}&0.488$\pm$0.007&0.646$\pm$0.010\\\hline
         Adam&0.933$\pm$0.007&\textbf{0.516$\pm$0.002}&\textbf{0.692$\pm$0.008
         }\\\hline
         pdADMM-G&0.932$\pm$0.001&0.514$\pm$0.003&0.661$\pm$0.005
         \\\hline
         pdADMM-G-Q&0.933$\pm$0.002&0.514$\pm$0.001&0.667$\pm$0.003

         \\\hline\hline
         
    \end{tabular}
   \makeatletter\def\@captype{table}\makeatother\caption{ The validation performance of all methods when the number of neurons is 500.}
    \label{tab:performance_500_neurons_val}
\end{minipage}

\end{minipage}

\end{document}